\def\eqref#1{equation~\ref{#1}}
\def\1{\bm{1}}
\def\rve{{\mathbf{e}}}
\DeclareMathAlphabet{\mathsfit}{\encodingdefault}{\sfdefault}{m}{sl}
\SetMathAlphabet{\mathsfit}{bold}{\encodingdefault}{\sfdefault}{bx}{n}
\def\sR{{\mathbb{R}}}
\DeclareMathOperator*{\argmax}{arg\,max}
\def\eqref#1{equation~\ref{#1}}
\def\1{\bm{1}}
\def\rve{{\mathbf{e}}}
\DeclareMathAlphabet{\mathsfit}{\encodingdefault}{\sfdefault}{m}{sl}
\SetMathAlphabet{\mathsfit}{bold}{\encodingdefault}{\sfdefault}{bx}{n}
\def\sR{{\mathbb{R}}}
\newtheorem{thm}{Theorem}
\newtheorem{lem}{Lemma}
\newtheorem{prop}{Proposition}
\newtheorem{asmp}{Assumption}
\newtheorem{defn}{Definition}
\newtheorem{rem}{Remark}
\newtheorem{example}{Example}
\crefname{thm}{Theorem}{Theorems}
\crefname{lem}{Lemma}{Lemmas}
\crefname{cor}{Corollary}{Corollaries}
\crefname{prop}{Proposition}{Propositions}
\crefname{asmp}{Assumption}{Assumptions}
\crefname{defn}{Definition}{Definitions}
\crefname{oracle}{Oracle}{Oracles}
\crefname{fact}{Fact}{Facts}
\crefname{conj}{Conjecture}{Conjectures}
\crefname{rem}{Remark}{Remarks}
\crefname{example}{Example}{Examples}
\crefname{condition}{Condition}{Conditions}
\crefname{exercise}{Exercise}{Exercises}
\crefname{algorithm}{Algorithm}{Algorithms}
\crefname{table}{Table}{Tables}
\crefname{figure}{Figure}{Figures}
\crefname{section}{Section}{Sections}
\crefname{subsection}{Section}{Sections}
\crefname{appendix}{Appendix}{Appendices}
\crefname{message}{Message}{Messages}
\definecolor{red}{rgb}{1, 0, 0}
\definecolor{green}{rgb}{0, 1, 0}
\definecolor{blue}{rgb}{0, 0, 1}
\definecolor{orange}{rgb}{1, 0.4, 0.0}
\theoremstyle{plain}
\theoremstyle{definition}
\theoremstyle{remark}
\newcommand{\topalpha}[1]{\| {#1} \|_{1,\text{top-}\alpha}}
\newcommand{\flt}{\texttt{FLT}_{\alpha}}
\definecolor{codegreen}{rgb}{0,0.6,0}
\definecolor{codegray}{rgb}{0.5,0.5,0.5}
\definecolor{codepurple}{rgb}{0.58,0,0.82}
\definecolor{codeblue}{rgb}{0,0,1}
\definecolor{backcolour}{rgb}{0.95,0.95,0.92}
\definecolor{key-color}{rgb}{0.8, 0.47, 0.196}
\title{MoFO: Momentum-Filtered Optimizer for Mitigating Forgetting in LLM Fine-Tuning}
\author{\name Yupeng Chen\thanks{Equal contribution.} \email yupengchen1@link.cuhk.edu.cn \\
    \addr The Chinese University of Hong Kong, Shenzhen, China
    \ANDd
    \name Senmiao Wang$^{*}$ \email senmiaowang1@link.cuhk.edu.cn \\
    \addr The Chinese University of Hong Kong, Shenzhen, China
    \ANDd
    \name Yushun Zhang \\
    \addr The Chinese University of Hong Kong, Shenzhen, China \\
    Shenzhen Research Institute of Big Data
    \ANDd
    \name Zhihang Lin \\
    \addr Shenzhen Research Institute of Big Data
    \ANDd
    \name Haozhe Zhang\\
    \addr Huawei Technologies Co.,Ltd, China
    \ANDd
    \name Weijian Sun \\
    \addr Huawei Technologies Co.,Ltd, China
    \ANDd
    \name Tian Ding\thanks{Corresponding author.} \email dingtian@sribd.cn \\
    \addr
    Shenzhen International Center for Industrial and Applied Mathematics \\ Shenzhen Research Institute of Big Data
    \ANDd
    \name Ruoyu Sun$^{\dag}$ \email sunruoyu@cuhk.edu.cn \\
    \addr The Chinese University of Hong Kong, Shenzhen, China \\
    Shenzhen International Center for Industrial and Applied Mathematics \\
    Shenzhen Research Institute of Big Data }
\begin{document}

\maketitle

\begin{abstract}
Large language models (LLMs) have demonstrated remarkable capabilities across a wide range of tasks. Typically, LLMs are first pre-trained on large corpora and subsequently fine-tuned on task-specific datasets. However, during fine-tuning, LLMs may forget some knowledge acquired in the pre-training stage, leading to a decline in general capabilities. Existing approaches to mitigate forgetting often rely on access to pre-training data, which may be unavailable in many real-world scenarios—such as fine-tuning checkpoint-only open-source LLMs. To address this challenge, we propose a new fine-tuning algorithm termed Momentum-Filtered Optimizer (MoFO). 
MoFO is an extension of greedy block coordinate descent (BCD) methods: in each iteration, MoFO only updates the model parameters with the largest momentum magnitudes, while keeping all other parameters fixed.
MoFO achieves similar fine-tuning performance to the default fine-tuning algorithm while effectively mitigating knowledge forgetting.
We validate MoFO through rigorous convergence analysis and extensive experiments, demonstrating its effectiveness in mitigating forgetting without pre-training data.\footnote{Our code is available at \url{https://github.com/YChen-zzz/MoFO}.}
\end{abstract}

\section{Introduction}

The success of large language models (LLMs) lies in their strong capabilities in language understanding and generation.
Typically, LLMs are first pre-trained on extensive corpora to acquire general capabilities, then fine-tuned on smaller, task-specific datasets to adapt to particular tasks or domains \citep{dai2015semi, kenton2019bert, radford2018improving}. However, it has been observed that during the fine-tuning process, LLMs may forget the knowledge acquired in pre-training, leading to a decline in general capabilities \citep{lin2023speciality, chen2020recall, dong2021should, korbak2022controlling, luo2023empirical}.
Therefore, addressing the forgetting issue in LLM fine-tuning has become an important research direction.

In the field of continual learning, mitigating forgetting has already been a central focus. Continual learning \citep{wang2024comprehensive} involves training models sequentially on different tasks, which is analogous to the process of pre-training followed by fine-tuning in LLMs: Both involve different stages of training and face the challenge of forgetting previously acquired knowledge when learning new information. To address this forgetting issue, \textit{replay-based methods} \citep{rolnick2019experience, wang2020efficient, ouyang2022training} %
use a replay buffer to store and revisit past data, in order to reinforce prior knowledge while learning new information. 
In LLM training, similar replay-based methods
are also used to mitigate forgetting \citep{shi2024continual, roziere2023code, huang2024mitigating}. However, replay-based methods face some practical limitations in LLMs. First, the access to the original pre-training data is often restricted or infeasible. Many open-source LLMs, such as the Llama series \citep{touvron2023llama}, do not fully disclose their pre-training datasets.
Second, even when pre-training data is available, incorporating it into the fine-tuning process can substantially increase computational and memory costs, as the model must process a much larger and more diverse dataset. 

In continual learning, another class of methods focuses on modifying the optimization process of models to mitigate forgetting  \citep{wang2024comprehensive}. 
These methods do not require access to the pre-training data. However, many of them need to store and exploit the pre-training weights and/or intermediate gradients throughout most of the fine-tuning process. %
For example, some regularization-based methods such as $L_1$ or $L_2$ regularization \citep{panigrahi2023task, xuhong2018explicit}
require storing the pre-training weights for regularization computation during the whole fine-tuning process.
However, in the context of LLMs, storing additional model weights requires substantial memory due to the large model size, introducing considerable overhead.

Given the limitations of prevalent forgetting-mitigation approaches, we aim to develop an optimization algorithm that neither relies on past data nor introduces additional memory overhead. \textbf{In general, our algorithm design goals are twofold: (G1) preservation of pretrained knowledge and (G2) strong performance on the fine-tuning task.} To pursue (G1), we adopt a key insight from continual learning \citep{kirkpatrick2017overcoming}: the closer a fine-tuned model stays to the pretrained parameters, the less forgetting tends to occur. At the same time, enforcing proximity too aggressively can hinder adaptation to the new task, thereby harming (G2). This raises a natural question: \textit{What optimization method might move a small distance from the initial point, but can still find a minimum\footnote{In this paper, we use the term "minimum" (or "minima" in the plural) to refer to a parameter configuration whose fine-tuning loss is near its lowest value in a small neighborhood, while acknowledging that this terminology may not strictly represent a local minimum of the fine-tuning loss function.} of the loss function?}

We notice that the classical block coordinate descent (BCD) method \citep{tseng2001convergence} is a good candidate, since it updates only a subset of parameters at each iteration, thus implicitly biased towards closer solutions. Nevertheless, incorporating BCD into LLM fine-tuning presents some challenges. One
challenge is that Adam, the predominant optimizer for LLM training \citep{radford2018improving, zhang2024transformers}, differs substantially from the optimizer studied in traditional BCD methods (e.g. GD or SGD) \citep{nutini2015coordinate, zhao2014accelerated}. 
It complicates both optimizer design and convergence analysis. Consequently, combining BCD with Adam is not a straightforward task.

In this work, we propose Momentum-Filtered Optimizer (MoFO), a new optimization algorithm that integrates Adam with BCD. At each iteration, MoFO only updates the most effective parameters for reducing the fine-tuning loss—those with large momentum magnitudes, while keeping other parameters fixed. MoFO only modifies the optimizer without the need for pre-training data or introducing additional memory costs, which helps achieve its efficiency and effectiveness during fine-tuning. Our contributions are summarized as follows:

\begin{itemize}
\vspace{-1mm}
    \item We propose MoFO, a new training algorithm designed to mitigate the forgetting of pre-training knowledge during fine-tuning.
    \vspace{-1mm}
    \item We present a rigorous theoretical convergence result of the MoFO algorithm, providing a solid theoretical foundation that supports its good performance in fine-tuning tasks.
    \vspace{-1mm}
    \item We conduct experiments 
    on various tasks, demonstrating that
    MoFO outperforms existing methods both in fine-tuning performance and mitigating forgetting.
\end{itemize}

\section{Momentum Filtered Optimizer (MoFO)}
\label{section_mofo}

\subsection{Motivation}

\textbf{Correlation between Distance and Forgetting.}

In LLM fine-tuning, different training methods typically converge to different minima of the loss function. These minima may yield similarly low fine-tuning loss, all achieving reasonably good fine-tuning performance. However, their distances from the pre-trained model can vary significantly. A key observation in traditional continual learning (CL) is that the extent of forgetting increases as the model deviates further from its original state. This insight has influenced the design of many forgetting-mitigation methods \citep{kirkpatrick2017overcoming, xuhong2018explicit}.

\begin{figure}[t]
\vspace{-1em}
\centering
\subfigure[Fine-tuning loss landscape]
{
\begin{minipage}[h]{0.5\linewidth}
\centering
\includegraphics[width=0.987\linewidth]{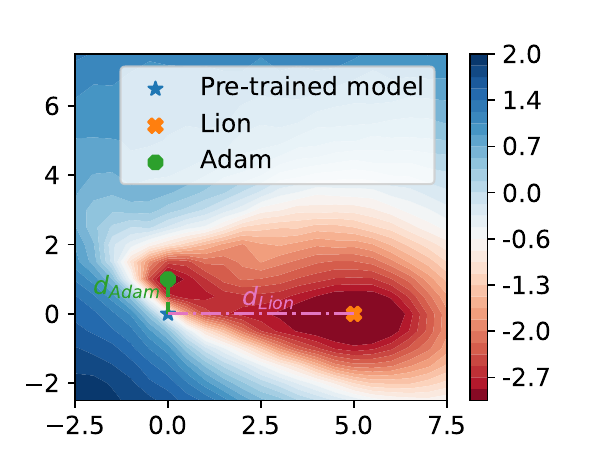}
\end{minipage}%
}%
\subfigure[Pre-training loss landscape]
{
\begin{minipage}[h]{0.495\linewidth}
\centering
\includegraphics[width=0.987\linewidth]{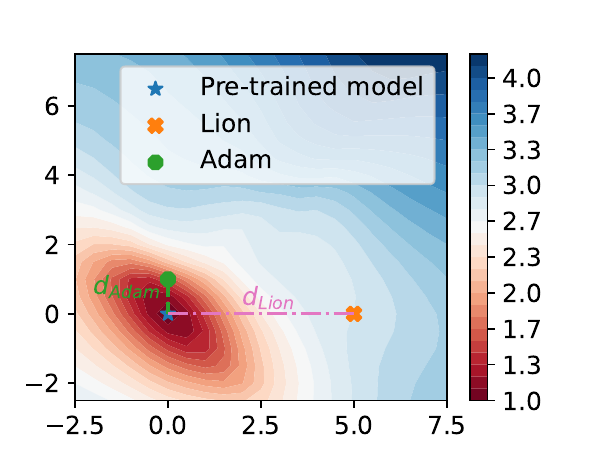}
\end{minipage}%
}%
\centering
\vspace{-2mm}
\caption{The loss landscapes of Pythia-160M after fine-tuning on a subset of the FLAN dataset using Adam and Lion. We plot the loss landscapes on (a) the fine-tuning dataset and (b) the pre-training dataset (Pile dataset \citep{gao2020pile}). We visualize a 2D weight-space plane spanned by the vector from the pre-trained model to the Lion-tuned model (x-axis) and to the Adam-tuned model (y-axis). Axes are normalized so that one unit equals the length of the pre-trained$\to$Adam vector. The color bar indicates the loss value—(a) fine-tuning loss and (b) pre-training loss.
A logarithmic scale is applied to the loss values for better visualization. Two training methods converge to different minima with similar fine-tuning loss. Lion converges to a farther minimum from the pre-trained model and performs more forgetting than Adam. 
} %
\label{pythia_landscape_lion}
\end{figure}

In this work, we conduct exploratory experiments to investigate this correlation among minima produced by different LLM fine-tuning methods.
We conduct two sets of experiments. First, we fine-tune Pythia-160M on a subset of the FLAN dataset\footnote{The subset used is `definite\_pronoun\_resolution\_10templates,' available at \url{https://huggingface.co/datasets/Muennighoff/flan}. The learning rate is 2e-5 and the batch size is set as 64.} using Adam \citep{kingma2014adam} and Lion \citep{chen2024symbolic}. As illustrated in Figure \ref{pythia_landscape_lion}(a), Adam and Lion converge to distinct minima. Notably, while both optimizers achieve similar fine-tuning losses, the minimum reached by Adam is much closer to the pre-trained model compared to Lion, being only about 20\% of Lion’s distance. (Specifically we measure the model distance by the average $L_2$ norm difference over parameter blocks; see Appendix \ref{exp_par_dis} for the formal definition).
As shown in Figure \ref{pythia_landscape_lion}(b), Adam
exhibits a smaller increase in pre-training loss than Lion. Further, Adam leads to better preservation in general capability than Lion (see Appendix \ref{supp_figure1}).

Second, we conduct a larger-scale exploratory experiment to test the generality of this observation. We fine-tune a larger model, LLaMA2-7B, on the MetaMathQA dataset \citep{yu2024metamath} using three optimizers: Adam, Lion, and our proposed MoFO (to be introduced in Section \ref{sec:algo-formulation}). To achieve varying distances from the pre-trained model, we fine-tune each model for 0.5, 1, 1.5, and 2 epochs (309 steps per epoch). Details are provided in Appendix \ref{subsec:hyperparam_config}. Figure \ref{relation_fig}(a) demonstrates a strong positive correlation between distance and the increase in pre-training loss.
Figure \ref{relation_fig}(b) indicates a negative correlation with MMLU accuracy \citep{hendrycks2021measuring}, which measures the preservation of factual knowledge, once the model has been trained for at least one epoch. 
Taken together, these results suggest that, during LLM fine-tuning, remaining closer to the pre-trained state appears to be associated with reduced forgetting.
Further discussion of early-training behavior and a comparison of different optimizers are provided in Appendix \ref{app:supp-correlation—additional}.

\begin{figure}[t]
\centering
\subfigure[Losses on RedPajama]{
\begin{minipage}[t]{0.45\linewidth}
\centering
\includegraphics[width=\linewidth]{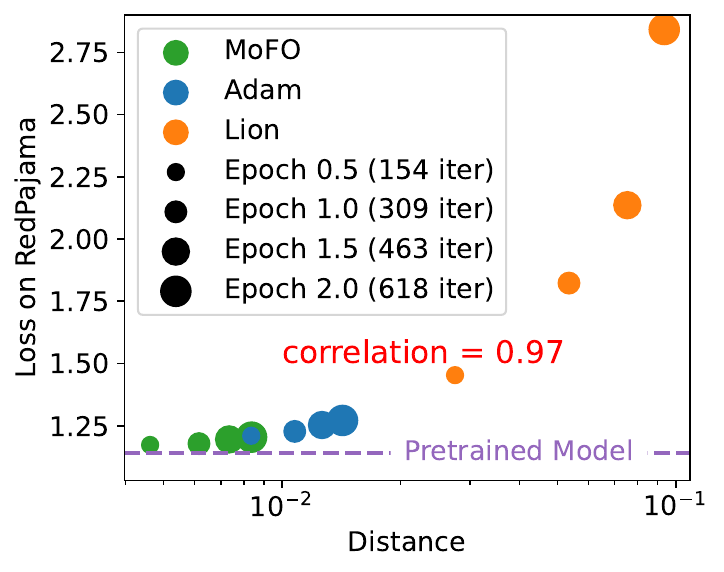}
\end{minipage}%
}%
\subfigure[Acc. on MMLU]{
\begin{minipage}[t]{0.45\linewidth}
\centering
\includegraphics[width=\linewidth]{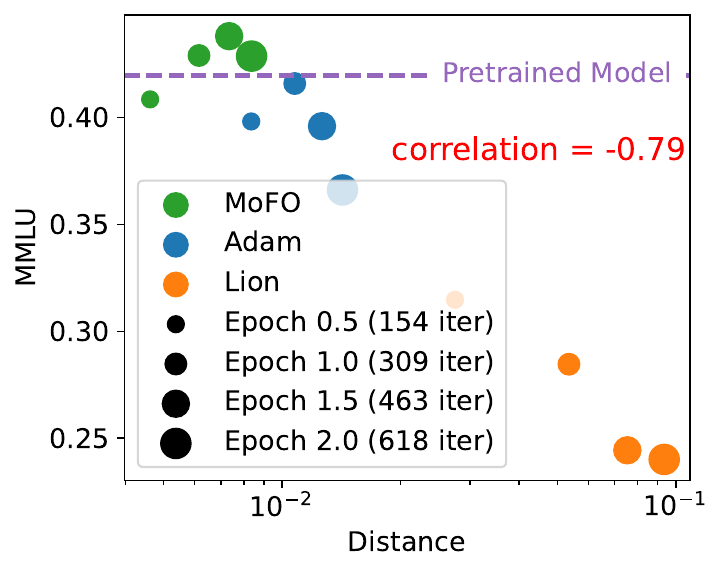}
\end{minipage}%
}%
\centering
\vspace{0mm}
\vspace{-3mm}
\caption{(a) Loss changes on the RedPajama dataset and (b) average accuracy
changes on MMLU benchmark (measuring the preservation of factual knowledge) of Llama-2-7B after fine-tuning on MetaMathQA using Adam, Lion, and MoFO for 0.5, 1, 1.5, 2 epochs. 
We note that RedPajama project was explicitly designed as an open-source reproduction of the LLaMA training dataset \citep{together2023redpajama}. Thus, it serves as a reasonable proxy for the original LLaMA-2 training dataset since the latter has not been publicly released.
See Appendix \ref{subapp:redpajama} for the rationale. 
The results show a strong positive correlation between the distance from the pre-trained model and the extent of forgetting after one epoch. Further discussion of early-training behavior and a comparison of different optimizers are provided in Appendix \ref{app:supp-correlation—additional}.} %
\label{relation_fig}
\end{figure}

\textbf{Selective Rule Based on  Momentum Magnitudes}

Motivated by the correlation between forgetting and the distance from the pre-trained model, we seek to design an optimizer that encourages the fine-tuned model to keep closer to the pre-trained model. To achieve this, we draw inspiration from the classical block coordinate descent (BCD) method \citep{tseng2001convergence}, which updates only a subset of parameters during each iteration. We anticipate that restricting updates to a subset of parameters—similar to the BCD approach—will result in smaller overall deviations from the pre-trained model compared to full-parameter updates across all iterations, thereby mitigating the forgetting of pre-training knowledge.

To further accelerate convergence under the BCD framework, we adopt Gauss-Southwell rule, i.e., the greedy rule \citep{nutini2015coordinate}.
Gauss-Southwell rule selects the parameters with the largest gradients at each iteration, as those are expected to yield the greatest immediate reduction in the loss. It has also been shown in \citet{nutini2015coordinate} that BCD using the Gauss-Southwell rule—also referred to as greedy BCD—can converge faster than the traditional random BCD.
However, BCD algorithms, including greedy BCD, are mostly developed based on the GD or SGD framework, but in LLM training, Adam has replaced SGD as the default optimizer \citep{zhang2024transformers}. Our experiments in Section \ref{experiment-sec:further-analy} show that directly following the Gauss–Southwell rule in Adam—that is, always updating parameters with large gradients—does not lead to satisfactory performance on fine-tuning tasks.

Adam inherently incorporates momentum term in parameter updates. 
Therefore, we propose to modify the Adam optimizer to update only the parameters with the \emph{largest momentum magnitudes}.
By focusing on partial yet significant updates, our method, named MoFO, aims to effectively fine-tune models while maintaining closer to their pre-trained state. We first introduce MoFO in the next subsection. Further theoretical analysis and empirical exploration in the selection rule will be provided in Section \ref{subsec:mofo_dist} and \ref{experiment-sec:further-analy}, respectively.

\subsection{Formulation of MoFO}
\label{sec:algo-formulation}

\begin{algorithm}[h]
\caption{Momentum Filtered Optimizer (MoFO)}
\label{algo:mofo}
\begin{algorithmic}[1]
\STATE{Input: Filtering threshold $\alpha$, number of partitions $B$ with the $k$-th partition of size $d_k$, hyperparameters $\beta_1, \beta_2$ of Adam optimizer, learning rate schedule $\{\eta_t\}$.}
\STATE{Initialize $m_0, v_0$ as zero tensors.}
\FOR{iteration $t$ from $1,2,\dots$ until converge}
\FOR{partition $k$ from $1$ to $B$}
\STATE{$g^{(k)}_t = \nabla_{(k)} \mathcal{L}_{finetune}(\theta_{t-1})$}
\STATE{$m^{(k)}_{t} = \beta_1 m^{(k)}_{t-1} + (1-\beta_1) g^{(k)}_t$}
\STATE{$v^{(k)}_{t} = \beta_2 v^{(k)}_{t-1} + (1-\beta_2) g^{(k)}_t \circ g^{(k)}_t$}
\STATE{$\hat{m}^{(k)}_t =  m^{(k)}_t / (1-\beta_1^t)$}
\STATE{$\hat{v}^{(k)}_t =  v^{(k)}_t / (1-\beta_2^t)$}
\FOR{entry index $i$ from $1$ to $d_k$}
\STATE{$[\texttt{FLT}^{(k)}_{\alpha}(m_t)]_i = 1$ \textbf{if} $|(m^{(k)}_{t})_i|$ is within the top-$\alpha$ of $|m^{(k)}_{t}|$'s values \textbf{else} 0}
\ENDFOR
\STATE {\color{blue}{$\theta_{t}^{(k)} = \theta_{t-1}^{(k)} - \eta_t \cdot (\hat{m}^{(k)}_t \odot \texttt{FLT}^{(k)}_{\alpha}(m_t) ) / \sqrt{\hat{v}^{(k)}_t}$ \texttt{\textit{    
     \ \ \ \ \# Momentum Filtering}}}}
\ENDFOR
\STATE{$\theta_t = \texttt{Concat}(\theta_t^{(1)}, \dots, \theta_t^{(B)})$}
\ENDFOR
\end{algorithmic}
\end{algorithm}

\begin{figure}[t]
\centering
\includegraphics[width=0.6\textwidth]{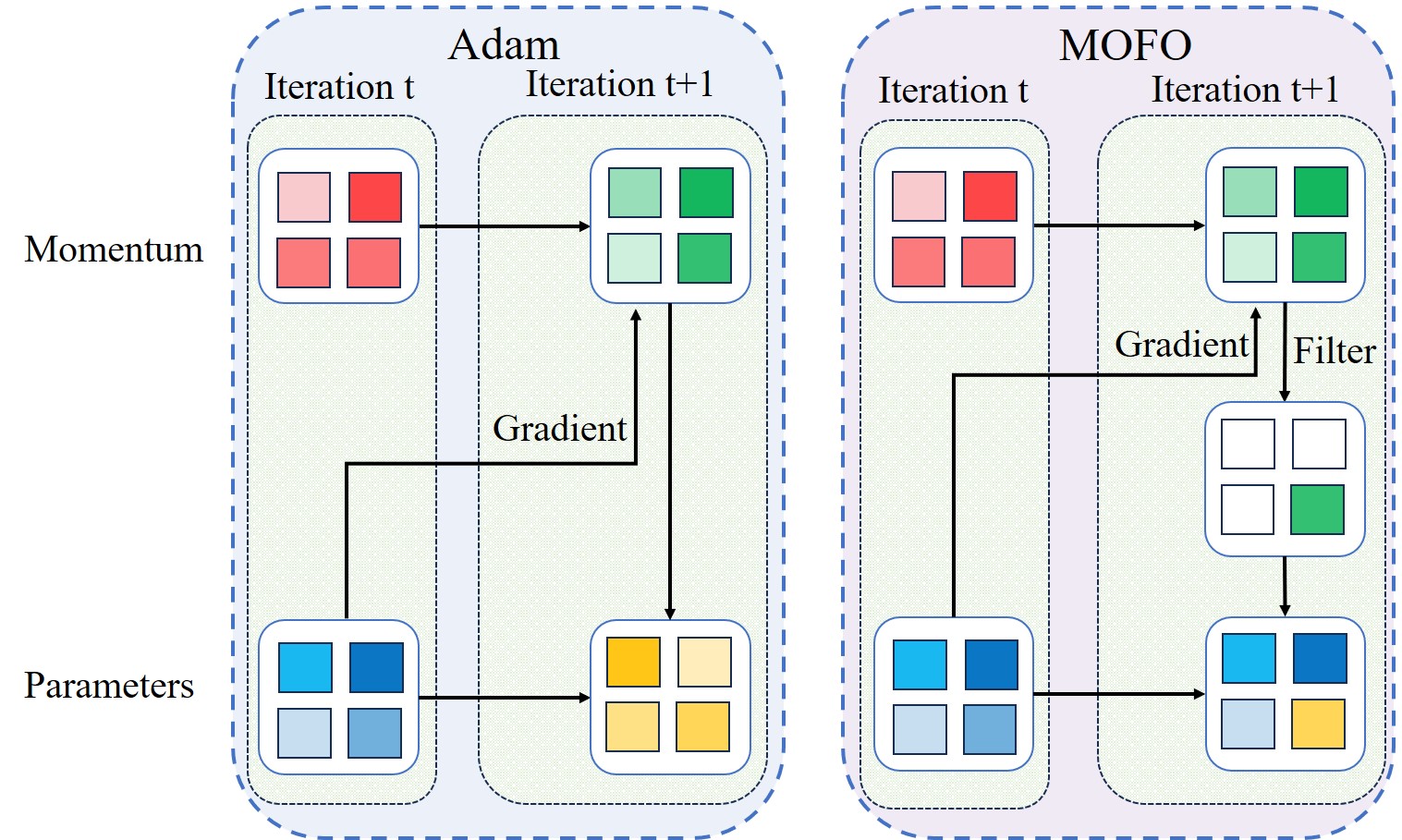}
\centering
\caption{Illustration of MoFO.}
\vspace{-4mm}
\label{MoFO_framework}
\end{figure}

We formally introduce the Momentum-Filtered Optimizer (MoFO) in Algorithm \ref{algo:mofo}.
First, all model parameters are partitioned into $B$ blocks. At each iteration, MoFO first computes the gradient and momentum terms for parameters in each block following the standard rule of Adam, as shown in Lines 5-9. Then, MoFO selects and updates the parameter entries with the largest $\alpha$ momentum magnitudes in each parameter block, as shown in Lines 10-13, where the update fraction $\alpha$ is a pre-determined hyperparameter. This momentum filtering mechanism is illustrated in Figure \ref{MoFO_framework}.

Mathematically, the filter can be represented as follows. Consider a momentum vector \( m = (m^{(1)}, \dots, m^{(B)}) \), where each \( m^{(k)} \in \mathbb{R}^{d_k} \) corresponds to the \(k\)-th block of parameters with dimensionality \(d_k\). The top-\(\alpha\) filter, denoted as \(\texttt{FLT}_{\alpha}(m)\), is defined as \(\texttt{FLT}_{\alpha}(m) = (\texttt{FLT}_{\alpha}^{(1)}(m), \dots, \texttt{FLT}_{\alpha}^{(B)}(m))\), where the $i$-th entry of $\texttt{FLT}_{\alpha}^{(k)}(m)$ is given by

\vspace{-1.5em}
\begin{equation*}
\begin{aligned}
&\left[\texttt{FLT}_{\alpha}^{(k)}(m)\right]_i = \begin{cases}
1 & \text{if } |m^{(k)}_i| \text{ is within the top-} \alpha \text{ of } |m^{(k)}| \text{ values}, \\
0 & \text{otherwise},
\end{cases}
\end{aligned}
\end{equation*}
\vspace{-1.1em}

for $i=1,2,\cdots, d_k$, $k=1,2,\cdots, B$. In our Momentum-Filtered Optimizer (MoFO), this filter \(\texttt{FLT}_{\alpha}\) is applied to the momentum \(m_t\), selecting the entries with the largest magnitudes for updating.

For the parameter partitioning, we note that the network architecture is naturally composed of different modules (e.g., weight matrices, and bias terms). In the PyTorch implementation, the parameters of different modules (along with their gradients and momenta) are naturally stored in separate data tensors. Therefore, we adopt the default partitioning of model parameters as implemented in PyTorch. 
For Transformers, this means that parameters such as query (Q), key (K), value (V) weights in the attention layers, as well as feed-forward network (FFN) weights, are grouped into distinct partitions following PyTorch’s default scheme.
This allows us to select and update the top-$\alpha$ parameters in each block without introducing much implementation overhead. See Appendix \ref{exp_par_dis} for further explanation of the partitioning.

At each iteration, MoFO efficiently selects and updates the most ``influential'' parameters, as dictated by the momentum's magnitude, while keeping other parameters fixed. 
We argue that filtering the momentum is more effective than filtering the gradient. In Section \ref{experiment-sec:further-analy}, we will empirically demonstrate that MoFO's momentum-based filtering rule outperforms other filtering rules in fine-tuning tasks. %

\section{Theoretical Analysis}
\label{sec:theory}

\subsection{Convergence Result}
\label{subsec:cvrg_res}

In this section, we present the convergence result of MoFO for non-convex loss functions.
For the simplicity of analysis, we consider the full-batch version of MoFO, with hyperparameters satisfying the following assumption.
\begin{asmp}\label{asmp:mofo-hyperparam}
    Loss function $\mathcal{L}$ is lower bounded by $\mathcal{L}^*$. The gradient $\nabla \mathcal{L}$ is Lipschitz continuous with constant $L$.
\end{asmp}

\begin{thm}[Convergence of MoFO]\label{thm:mofo-cvrg}
Suppose that the first- and second-order momentum hyperparameters $\beta_1$ and $\beta_2$ satisfy $0 < \beta_1 < \sqrt{\beta_2} < 1$.
The learning rate schedule at step $t$ is $\eta_t = \eta / \sqrt{t}$ for some $\eta > 0$.
Then, under Assumption \ref{asmp:mofo-hyperparam}, MoFO satisfies
\vspace{-1mm}
\begin{equation*}
\min_{0 \leq t \leq T-1} \Vert \nabla \mathcal{L}(\theta_t) \odot \texttt{FLT}_{\alpha}\big(\nabla \mathcal{L}(\theta_t)\big) \Vert_{1}
= \mathcal{O} \left( \frac{\log T}{\sqrt{T}} \right) \quad \text{as } T \to \infty.
\end{equation*}
Moreover, this bound directly implies
\begin{equation*}
\min_{0 \leq t \leq T-1} \Vert \nabla \mathcal{L}(\theta_t) \Vert_{p}
= \mathcal{O} \left( \frac{\log T}{\sqrt{T}} \right) \quad \text{as } T \to \infty,
\end{equation*}
for any $p \in [1, \infty]$.
\end{thm}

Although MoFO is designed to mitigate forgetting by updating only a small subset of parameters at each step, it is guaranteed to converge to a critical point of the fine-tuning loss function under the Lipschitz smoothness assumption.
This result provides theoretical evidence that MoFO can achieve competitive performance in fine-tuning tasks.

\begin{proof}[\textbf{Proof Sketch of Theorem \ref{thm:mofo-cvrg}:}]
Our proof is inspired by the convergence analysis for full-batch Adam in \citet{shi2021rmsprop}, but we introduce additional techniques tailored to MoFO’s filtering mechanism. We will highlight these additions precisely at the points where they arise below.

Let $g_t=\nabla \mathcal{L}(\theta_{t-1})$. A central step is to establish, for suitable constants $C_1,C_2>0$,
\vspace{-5pt}
\begin{equation}
\label{eq:key1}
\frac{C_1}{\sqrt{t}} \Vert g_t \Vert \le \mathcal{L}(\theta_{t-1})-\mathcal{L}(\theta_t)+\frac{C_2}{t},
\end{equation}
Summing this inequality from \( t = 1 \) to \( T \) and using $\sum_{t=1}^T t^{-1}=\log T+\mathcal{O}(1)$ yields the convergence result for Adam in terms of a diminishing norm of gradient, given by
\vspace{-5pt}
\begin{equation}
\label{eq:convergence_adam1}
\min_{1 \leq t \leq T} \|g_t\| = \mathcal{O}\left( \frac{\log T}{\sqrt{T}} \right).
\end{equation}
\vspace{-11pt}

\paragraph{Choice of norm and two subgoals.}
In finite-dimensional spaces, all norms are equivalent, so convergence statements like (\ref{eq:key1})--(\ref{eq:convergence_adam1}) can be expressed in any fixed norm up to norm-equivalence constants. In practice, however, specific analyses instantiate (\ref{eq:key1}) with a particular norm: \citet{shi2021rmsprop} work with the $L_1$ norm $\|g_t\|_1$ for full-batch Adam, whereas our full-batch MoFO analysis will use the $L_{1, \text{top-}\alpha}$ norm $\|g_t \odot \texttt{FLT}_{\alpha}(g_t)\|_1$, which will be defined in Appendix \ref{app:alpha-ratio-supp}.

To keep the logic precise, we separate our argument into two subgoals:

\textbf{Step I (Key inequality for MoFO).}
Show that there exist constants $C_1,C_2>0$ (independent of $t$) such that
\begin{equation}
\label{eq:key_mofo}
\frac{C_1}{\sqrt{t}}\,\big\| g_t \odot \texttt{FLT}_{\alpha}(g_t) \big\|_1
\;\le\; \mathcal{L}(\theta_{t-1})-\mathcal{L}(\theta_t)+\frac{C_2}{t}.
\end{equation}
Here, we recall that $\texttt{FLT}_{\alpha}(\cdot)$ preserves the $\alpha$-fraction of largest-magnitude coordinates in each partition and zeros out the rest.

\textbf{Step II (Norm property of the left-hand side of (\ref{eq:key_mofo})).}
\emph{This component is one of our new technical ingredients (absent from \citet{shi2021rmsprop}).}
We first verify that the mapping $x \mapsto \|x \odot \texttt{FLT}_{\alpha}(x)\|_1$ defines a norm on $\mathbb{R}^d$, which is referred to as $L_{1,\text{top-}\alpha}$ norm. This is proved in Proposition \ref{prop:topalpha} (Appendix \ref{app:alpha-ratio-supp}) by checking nonnegativity and definiteness, positive homogeneity, and the triangle inequality. Moreover, for the $L_p$ upper bound in Theorem \ref{thm:mofo-cvrg}, we use the norm equivalence between the $L_{1,\text{top-}\alpha}$ norm and the $L_p$ norm, as shown in Lemma~\ref{lem:norm-relation}.

\medskip
\textbf{With Step II established, it remains to prove the key inequality in Step I.} A direct adaptation of \citet{shi2021rmsprop} to MoFO is infeasible due to structural differences. We proceed as follows:
\begin{enumerate}[(i)]
    \item Recap key elements from \citet{shi2021rmsprop};
    \item Identify challenges in extending them to MoFO;
    \item Resolve these challenges by carefully handling the momentum filter.
\end{enumerate}

\emph{\textbf{Part (i): Key elements of \citet{shi2021rmsprop}.}}
For full-batch Adam with bias-corrected moments $\hat m_t,\hat v_t$ and the learning rate schedule $\eta_t = \eta/\sqrt{t}$, the parameter update is
\begin{equation*}
\theta_t-\theta_{t-1} = -\frac{\eta}{\sqrt{t}} \cdot \frac{\hat{m}_t}{\sqrt{\hat{v}_t}}.
\end{equation*}

By the $L$-smoothness of the loss and the descent lemma,
\begin{equation}
\label{eq:descent}
\frac{\eta}{\sqrt{t}}\sum_{i=1}^d g_{i,t}\frac{\hat m_{i,t}}{\sqrt{\hat{v}_{i,t}}} \le \mathcal{L}(\theta_{t-1})-\mathcal{L}(\theta_{t})
+\frac{L}{2} \Vert\theta_t-\theta_{t-1} \Vert_2^2.
\end{equation}

Lemma \ref{lem:gmv-it} in Appendix \ref{appendix:mofo-cvrg-proof} lower-bounds the per-coordinate contribution as
\begin{equation*}
    g_{i,t} \frac{\hat m_{i,t}}{\sqrt{\hat v_{i,t}}} \ge A |g_{i,t}| - \frac{B}{\sqrt{t}},
\end{equation*}
for some constants $A,B>0$. Substituting this into (\ref{eq:descent}), summing over coordinates, and controlling the quadratic term in (\ref{eq:descent}) yields the fundamental inequality (\ref{eq:key1}) with the $L_1$-norm:
\begin{equation*}
\frac{C_1}{\sqrt{t}} \Vert g_t \Vert_1 \le \mathcal{L}(\theta_{t-1})-\mathcal{L}(\theta_t)+\frac{C_2}{t},
\end{equation*}

\emph{\textbf{Part (ii): Challenges in extending to MoFO.}}
For the full-batch version of MoFO, the parameter update becomes
\begin{equation*}
    \theta_t-\theta_{t-1} = -\frac{\eta}{\sqrt{t}} \frac{\hat m_t}{\sqrt{\hat v_t}} \odot \texttt{FLT}_{\alpha}(m_t),
\end{equation*}
i.e., the step is filtered by momentum magnitudes (\emph{NOT gradient magnitudes}). Building upon the convergence analysis of \citet{shi2021rmsprop} in Part (i) leads to
\begin{equation}
\label{eq:keyprime}
\frac{C_1}{\sqrt{t}} \Vert g_t\odot \texttt{FLT}_{\alpha}(m_t) \Vert_1
\le \mathcal{L}(\theta_{t-1})-\mathcal{L}(\theta_t) + \frac{C_2}{t},
\end{equation}
which brings a notable difference to the target key inequality (\ref{eq:key_mofo}): the inequality
(\ref{eq:keyprime}) applies the momentum filter $\texttt{FLT}_{\alpha}(m_t)$, whereas the desired bound applies the gradient filter $\texttt{FLT}_{\alpha}(g_t)$.
This introduces a non-trivial challenge because the $\|g_t\odot \texttt{FLT}_{\alpha}(m_t)\|_1$ being small does not naturally imply that $\|g_t\odot \texttt{FLT}_{\alpha}(g_t)\|_1$ is small: large entries of $g_t$ might be excluded if their momenta lags. Consequently, additional analysis is required to bound the discrepancy between them.

\emph{\textbf{Part (iii): How we overcome the challenge.}}
\emph{Here we introduce another new technique to address the discrepancy between $\|g_t\odot \texttt{FLT}_{\alpha}(m_t)\|_1$ and $\|g_t\odot \texttt{FLT}_{\alpha}(g_t)\|_1$.}
We deal with the challenge via Lemma \ref{lem:flt-diff} in Appendix \ref{app:alpha-ratio-supp}, which establishes that the \(L_1\)-deviation in filtered outputs is \emph{Lipschitz-stable} under input perturbations. Specifically, for any \(x, y \in \mathbb{R}^d\):  
\begin{equation*}
    \underbrace{\left\|x \odot \flt(x)\right\|_1 - \left\|x \odot \flt(y)\right\|_1}_{\text{top-}\alpha \text{ filtered error}} \leq 2\underbrace{\|x - y\|_1}_{\text{input error}}.
\end{equation*}
This result points out that while the filter $\texttt{FLT}_{\alpha}(\cdot)$ itself is unstable under perturbation, the filtered output remains controllable through. It effectively "smooths" the discontinuity that would plausibly prevent convergence analysis. 

With the Lipschitz stability established, we now deal with the challenge. We first control the $L_1$ distance between the bias-corrected momentum \(\hat{m}_t = \frac{m_t}{1 - \beta_1^t}\) and the gradient $g_t$ by showing that \(\|\hat{m}_t - g_t\|_1 = \mathcal{O}(1/\sqrt{t})\).  
Second, we apply Lemma \ref{lem:flt-diff} with \(x = g_t\) and \(y = \hat{m}_t\) and yield  
\begin{equation*}
    \left\|g_t \odot \texttt{FLT}_{\alpha}(g_t)\right\|_1 - \left\|g_t \odot \texttt{FLT}_{\alpha}(\hat{m}_t)\right\|_1 \leq 2\|\hat{m}_t - g_t\|_1 = \mathcal{O}(1/\sqrt{t}). 
\end{equation*}
Since the filtering function $\texttt{FLT}_{\alpha}(\cdot)$ is invariant under positive scaling, we can definitely replace $\hat{m}_t$ with the original momentum $m_t$:
\begin{equation*}
    \left\|g_t \odot \texttt{FLT}_{\alpha}(g_t)\right\|_1 - \left\|g_t \odot \texttt{FLT}_{\alpha}(m_t)\right\|_1 \leq 2\|\hat{m}_t - g_t\|_1 = \mathcal{O}(1/\sqrt{t}). 
\end{equation*}

Combining it with (\ref{eq:keyprime}) and subsuming residual \(\mathcal{O}(1/t)\) terms, we obtain our target key inequality (\ref{eq:key_mofo}):  
\begin{equation*}
    \frac{C_1}{\sqrt{t}} \left\|g_t \odot \texttt{FLT}_{\alpha}(g_t)\right\|_1 \leq \mathcal{L}(\theta_{t-1}) - \mathcal{L}(\theta_t) + \frac{C_2}{t}.
\end{equation*}

In conclusion, our statement of \textbf{Step I} primarily comprises:
\begin{itemize}
    \item \textbf{Step I.1 (Part (i) and (ii)).} Extending the Adam convergence framework \citep{shi2021rmsprop} to incorporate MoFO's momentum filtering mechanism, yielding (\ref{eq:keyprime});
    \item \textbf{Step I.2 (Part (iii)).} Resolving the momentum-gradient filter discrepancy via our Lipschitz stability analysis (Lemma \ref{lem:flt-diff}), which is a new technique introduced in this work.
\end{itemize}
Combined with the norm property in \textbf{Step II}—another new ingredient not present in \citet{shi2021rmsprop}—, these results enable us to establish Theorem \ref{thm:mofo-cvrg}.

\end{proof}

We remark that the choice of $\beta_1$ and $\beta_2$ in Assumption \ref{asmp:mofo-hyperparam} aligns with that used in analyzing full-batch Adam \citep{shi2021rmsprop}.
Furthermore, the use of a diminishing learning rate in Theorem \ref{thm:mofo-cvrg} is crucial for ensuring the stability of updates and avoiding divergence in the optimization process.

In summary, Theorem \ref{thm:mofo-cvrg} demonstrates that despite updating only a subset of parameters, MoFO maintains the same convergence rate as Adam.
This highlights the theoretical robustness of the momentum filter design in MoFO. We believe this result could provide valuable insights into adaptive optimization methods with filtering mechanisms.

\subsection{Initial Analysis on Forgetting Mitigation}
\label{subsec:mofo_dist}
Does MoFO converge to a model that is closer to the pre-trained LLM than Adam, thereby reducing forgetting? In this subsection, we attempt to address this question by providing an initial theoretical analysis on an illustrative example.

\begin{example}\label{example:illustrating}
    Suppose the parameter space is $\mathbb{R}^d$, and the updating ratio is $\alpha = 1/d$, i.e., only one coordinate is updated in each iteration. We assume the \textbf{pre-training loss} is $\mathcal{L}_{\rm pretrain}(\theta) = \frac{1}{2} \|\theta\|_2^2$ and that the model has been trained to the global minimum $\theta_{\rm pretrain} = (0, 0, \dots, 0)$ during the pre-training phase. The \textbf{fine-tuning loss} is given by $\mathcal{L}(\theta) = \prod_{i=1}^d (a_i \theta_i - b_i)^2$, where $a_i, b_i > 0$ for any $1 \leq i \leq d$. In this example, the set of global minima of $\mathcal{L}(\theta)$ is a union of hyperplanes:
\begin{equation*}
\begin{aligned}
    S = \bigcup_{i=1}^d S_i, \quad \text{where } S_i := \{\theta \in \mathbb{R}^d: \theta_i = b_i / a_i\}.
\end{aligned}
\end{equation*}
Here, each $S_i$ represents a hyperplane in $\mathbb{R}^d$.
\end{example}

\begin{rem}
We note that the loss landscapes of neural networks are generally non-convex \citep{liu2022loss}. Here, we also adopt a non-convex fine-tuning loss $\mathcal{L}(\theta)$. Further, we note that the set of global-minima $S$ consists of infinitely many minima spread across multiple hyperplanes, aligning with the observation on the degenerate structure of minima in neural networks \citep{lin2024exploring}.
\end{rem}

\begin{thm}\label{thm:mofo_adam_dist}
In Example \ref{example:illustrating}, if the learning rates are chosen appropriately, then MoFO converges to a minimum $\theta^*_{\rm MoFO}$ that is closer to the pre-training state than the minimum $\theta^*_{\rm Adam}$ obtained by Adam, i.e.,
\begin{equation*}
    \|\theta^*_{\rm MoFO} - \theta_{\rm pretrain}\|_2 < \|\theta^*_{\rm Adam} - \theta_{\rm pretrain}\|_2.
\end{equation*}
Moreover, MoFO attains a strictly lower pre-training loss, i.e.,
\begin{equation*}
    \mathcal{L}_{\rm pretrain} (\theta^*_{\rm MoFO}) < \mathcal{L}_{\rm pretrain} (\theta^*_{\rm Adam}).
\end{equation*}
\end{thm}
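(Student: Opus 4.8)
The plan is to track the trajectories of both optimizers coordinate by coordinate and exploit the product structure of $\mathcal{L}$. The key observation is that for $\mathcal{L}(\theta) = \prod_{i=1}^d (a_i\theta_i - b_i)^2$, the partial derivative with respect to coordinate $i$ is $\partial_i \mathcal{L}(\theta) = 2 a_i (a_i\theta_i - b_i) \prod_{j\neq i}(a_j\theta_j - b_j)^2$. Starting from $\theta_{\rm pretrain} = \vzero$, every factor $(a_j \cdot 0 - b_j)^2 = b_j^2 > 0$, so initially all coordinates have nonzero gradient. First I would analyze Adam: since Adam updates all coordinates simultaneously at every step, and the only way to reach $S$ is to drive \emph{some} product factor to zero, I would argue (choosing the learning rate schedule appropriately, e.g.\ small enough and diminishing) that Adam's iterates converge to a point $\theta^*_{\rm Adam}$ where \emph{every} coordinate has moved away from $0$ — generically landing on the intersection-free part after all coordinates have shifted, so $\|\theta^*_{\rm Adam} - \vzero\|_2^2 = \sum_i (\theta^*_{\rm Adam,i})^2$ has contributions from all $d$ coordinates.

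Next I would analyze MoFO with $\alpha\% = 1/d$: exactly one coordinate is updated per iteration, namely the one with largest momentum magnitude. I would show that MoFO effectively commits to a single coordinate $i^*$ — once the momentum filter repeatedly selects coordinate $i^*$ (the one whose $|m_i|/\sqrt{\hat v_i}$-driven progress is fastest, determined by the ratios $a_i, b_i$), it keeps driving $a_{i^*}\theta_{i^*} \to b_{i^*}$, which sends $\mathcal{L} \to 0$ along the ray where only that coordinate moves. The technical content is to verify that the momentum filter does not oscillate between coordinates: since momentum is an exponential moving average, once one coordinate accumulates dominant momentum its lead is self-reinforcing (its gradient stays large in relative terms while the others' gradients, which are all multiplied by the shrinking factor $(a_{i^*}\theta_{i^*} - b_{i^*})^2$, shrink together). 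Hence MoFO converges to $\theta^*_{\rm MoFO}$ lying in a single hyperplane $S_{i^*}$ with all coordinates except $i^*$ still equal to $0$, giving $\|\theta^*_{\rm MoFO} - \vzero\|_2^2 = (b_{i^*}/a_{i^*})^2$.

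Then the comparison is immediate: $\|\theta^*_{\rm MoFO} - \theta_{\rm pretrain}\|_2^2 = (b_{i^*}/a_{i^*})^2$ while $\|\theta^*_{\rm Adam} - \theta_{\rm pretrain}\|_2^2 \geq (b_{i^*}/a_{i^*})^2 + (\text{strictly positive terms from the other coordinates})$, since Adam must have perturbed at least one other coordinate away from $0$ (indeed all of them) and none of Adam's coordinates can overshoot to reduce the norm below what MoFO achieves on the single active coordinate — here I would need the learning rate "chosen appropriately" to rule out Adam luckily landing closer, e.g.\ by taking the schedule so that each coordinate's displacement is monotone. The pre-training loss claim follows the same way since $\mathcal{L}_{\rm pretrain}(\theta) = \frac12\|\theta\|_2^2$ is exactly half the squared distance, so the strict inequality transfers verbatim.

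\textbf{Main obstacle.} The delicate part is making the MoFO trajectory analysis rigorous: I must show the momentum filter \emph{persistently} selects one coordinate rather than cycling, despite the bias-correction terms, the $\sqrt{\hat v_t}$ normalization, and the fact that early on all gradients are comparable. The cleanest route is probably to choose the learning rates (which the theorem allows us to pick) so that a single coordinate is selected on the very first step and then argue inductively that its momentum magnitude stays strictly largest — using that all off-$i^*$ partial derivatives carry the common vanishing factor $(a_{i^*}\theta_{i^*}-b_{i^*})^2$ which decays geometrically while $\partial_{i^*}\mathcal{L}$ does not vanish until the minimum is reached. Symmetry-breaking at step one (e.g.\ perturbing $a_i,b_i$ so the $|m_{i,1}|$ are distinct) handles the tie-breaking cleanly; the rest is bookkeeping on the Adam-style recursions already controlled by Theorem \ref{thm:mofo-cvrg}.
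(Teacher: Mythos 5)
Your proposal follows essentially the same route as the paper: identify the box $U=\{\theta:\theta_i<b_i/a_i,\ \forall i\}$ in which every partial derivative is strictly negative, show by induction that MoFO's filter locks onto a single coordinate $i^*$ from the first step onward and drives $\theta_{i^*}\to b_{i^*}/a_{i^*}$ while the rest stay at zero, argue that Adam strictly increases every coordinate, and then compare norms. Your persistence argument---that every off-$i^*$ partial derivative carries the common quadratic factor $(a_{i^*}\theta_{i^*}-b_{i^*})^2$, which outdecays the lone linear factor in $\partial_{i^*}\mathcal{L}$---is a correct and arguably crisper way to see the induction than the paper's ordering of the gaps $b_i/a_i-\theta_{i,t}$; both are just comparisons of the denominators in $\partial_i\mathcal{L}=2\mathcal{L}(\theta)/(\theta_i-b_i/a_i)$.

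One step is under-justified as written. Your final bound $\|\theta^*_{\rm Adam}\|_2^2\geq (b_{i^*}/a_{i^*})^2+(\text{positive})$ does not follow merely from ``Adam perturbs every coordinate'': Adam converges onto some hyperplane $S_{j_0}$, and $j_0$ need not equal $i^*$. The missing link, made explicit in the paper, is that the first-step filter picks $i^*\in\arg\min_i b_i/a_i$ (since $|g_{i,1}|\propto a_i/b_i$ at the origin), so $b_{j_0}/a_{j_0}\geq b_{i^*}/a_{i^*}$ for whatever $j_0$ Adam hits, and the remaining coordinates add strictly positive terms. Your ``cannot overshoot''/monotone-schedule remark keeps $\theta$ in $U$ but does not by itself compare $j_0$ to $i^*$; adding the minimality of $b_{i^*}/a_{i^*}$ closes the argument exactly as the paper does. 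Also a small bookkeeping slip: MoFO's filter ranks $|m_i|$, not $|m_i/\sqrt{\hat v_i}|$; this does not change the selected coordinate at step 1 here, but it should be stated correctly in the induction.
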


Next, we visualize the landscape of this example to provide additional intuition. Consider a simplified two-dimensional case. Specifically, let $\theta \in \mathbb{R}^2$ and $\mathcal{L}(\theta) = (\theta_1 - 1)^2 (\theta_2 - 1)^2$.

\begin{figure}[h]
\centering
\vspace{-0.5em}
\includegraphics[width=0.5\linewidth]{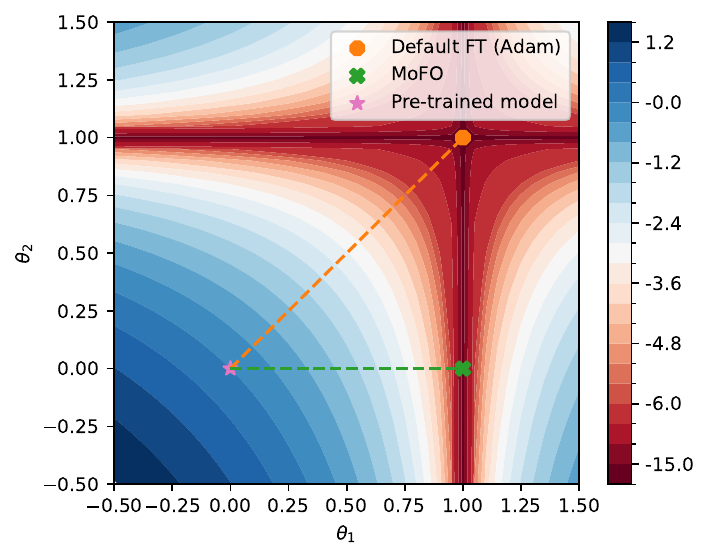}
\vspace{-1em}
\caption{The fine-tuning loss landscape and the training paths of different optimization methods. The color bar indicates the fine-tuning loss value.} A logarithmic scale is applied to the loss values for better visualization. MoFO converges to a minimum closest to the pre-trained model.
\vspace{-2mm}
\label{example_landscape}
\end{figure}

In Example \ref{example:illustrating}, each hyperplane \(S_i\), which forms part of the global minima of \(\mathcal{L}\), can be viewed as an attractor for the optimization algorithms. These attractors (\(S_i\)’s) influence the model’s update direction during training. As illustrated in Figure \ref{example_landscape}, the attractors in this case are two straight lines: \(\theta_1 = 1\) and \(\theta_2 = 1\). When using Adam, the model is simultaneously pulled by both attractors, causing it to move diagonally along the orange line and converge at \((1,1)\), with a resulting pre-training loss of 1. 

In contrast, MoFO is influenced by only one attractor (\(\theta_1 = 1\)), leading it to follow the green line and converge to \((1,0)\), which achieves a lower pre-training loss of 0.5\footnote{By symmetry, MoFO may also be influenced by another attractor $\theta_2 = 1$ and converges vertically to $(0,1)$.}. We hypothesize that, for full-parameter fine-tuning (Adam), interference among multiple attractors drives convergence toward a “balanced” solution, whereas MoFO mitigates such interference by selectively updating only parameters with large momentum magnitudes. Such an updating rule helps MoFO move toward a single attractor, thereby converging closer and forgetting less. In addition to Adam and MoFO, we provide convergence paths of some other baseline methods in Appendix \ref{app:supp-example-explain}.

We believe that the above analysis provides an initial insight into the effectiveness of MoFO. More in-depth analysis of MoFO in forgetting mitigation is left for future work.

\vspace{-2mm}
\section{Experiments}
\label{exp_sec}
\vspace{-1mm}
\subsection{Experimental Settings}
\label{sec_experimental_settings}
We verify the effectiveness of MoFO on \textbf{instruction fine-tuning} and \textbf{continual fine-tuning}. We use Llama-2-7B \citep{touvron2023llama}, Gemma-2B-IT \citep{team2024gemma}, and TinyLlama-1.1B \citep{zhang2024tinyllama} as our base models. The instruction fine-tuning datasets cover question-answer pairs from different domains like mathematical reasoning and medical knowledge. Specifically, the datasets include: MetaMathQA \citep{yu2024metamath}, PMC-LLaMA-Instructions \citep{wu2024pmc}, Magicoder-Evol-Instruct \citep{wei2023magicoder}. We randomly sample 39.5K and 51K instances from these datasets, respectively, for training the LLMs. Additionally, We investigate the performance of MoFO in the continual fine-tuning scenario by implementing our approach on the TRACE benchmark dataset \citep{wang2023trace}.

\textbf{Evaluation metrics for instruction fine-tuning.} We employ widely used benchmarks to assess the performance and potential forgetting effects on the general capabilities of LLMs after instruction fine-tuning. These benchmarks include MMLU \citep{hendrycks2021measuring} (0-shot) for factual knowledge; ARC-Challenge, ARC-Easy \citep{clark2018think}, and HellaSwag \citep{zellers2019hellaswag} (0-shot) for commonsense reasoning (CR); GSM8K \citep{cobbe2021training} (5-shot) for mathematical reasoning; HumanEval (HEval) \citep{chen2021evaluating} (pass@10) for code generation; PubMedQA \citep{jin2019pubmedqa}, MedMCQA \citep{pal2022medmcqa}, and MedQA \citep{jin2021disease} (0-shot) for medical question answering (MedQ) \footnote{For CR and MedQ, we report the average of the benchmarks they comprise.}; IFEval (0-shot) for instruction following.

\textbf{Evaluation metrics for continual fine-tuning.} To evaluate the LLM's performance in continual learning, we consider two key metrics in this scenario: Overall Performance (OP) \citep{chaudhry2018riemannian} and BackWard Transfer (BWT) \citep{lopez2017gradient}.

For more descriptions and implementation details of these metrics and datasets, see Appendix \ref{sec_training_detail}.

\subsection{Instruction Fine-Tuning}
\label{Instruction_ft_subsec}
\vspace{-1mm}
In this section, we investigate the effectiveness of the MoFO algorithm in both preserving general capabilities and learning fine-tuning tasks. The implementation details are provided in Appendix \ref{sec_training_detail}. The specific hyperparameter settings in each experiment are provided in Appendix \ref{subsec:hyperparam_config}.

\textbf{LLM Fine-tuning strategy baselines.} We compare MoFO with the default fine-tuning approach and other methods designed to mitigate forgetting. These baselines include: \textbf{Default fine-tuning (Default FT)} refers to the full-parameter fine-tuning approach using the Adam optimizer. %
\textbf{Half Fine-tuning (HFT)} \citep{hui2024hft} randomly updates half of the parameter blocks within each transformer layer at each iteration while the other half are frozen. HFT can be considered a specific case of the BCD algorithm. \textbf{LoRA} \citep{hu2022lora} is a widely-used, parameter-efficient fine-tuning method. LoRA trains low-rank matrix adaptations on the base model's weights. Recent work \citep{biderman2024lora} demonstrates that LoRA can mitigate forgetting. %

\renewcommand{\arraystretch}{1.5}

\begin{table}[t]
\centering
\caption{The performance of the fine-tuning task (math), measured by GSM8K, and the general capability scores of Llama-2-7B after fine-tuning on the MetaMathQA dataset. The results show that MoFO achieves comparable performance in the fine-tuning task, while significantly mitigating forgetting of general capabilities. Bold values denote the best results among these methods.}%
\label{tab:mathqa_exp}
\begin{tabular}{ccccccc}
\hline \hline
\multirow{2}{*}{Method} & \multirow{2}{*}{GSM8K} & \multicolumn{4}{c}{General Capability} \\ \cline{3-6} 
                         &  & CR & MMLU & HEval &    Avg.   \\ \hline
{Llama-2-7B} & {13.7} & {65.6} & {42.0} & {24.2} & {43.9} \\ \hline
{Default FT} & {49.4} & {62.3} & {36.6} & {16.1} & {38.3} \\ \hline
{HFT} & {47.5} & {65.5} & {42.3} & {23.6} & {43.8}  \\ \hline
{LoRA} & {43.3} & {65.1} & {37.7} & {\textbf{26.4}} & {43.1} \\ \hline
{MoFO} & {47.7} & {\textbf{65.7}} & {\textbf{42.7}} & {24.6} & {\textbf{44.3}} \\
\hline
\hline
\end{tabular}
\end{table}
\textbf{Results of fine-tuning on MetaMathQA.} 
We fine-tune Llama-2-7B on MetaMathQA using various baseline methods and present the experimental results on mathematical reasoning (GSM8K) and general capabilities in Table \ref{tab:mathqa_exp}. %
We report the experimental results of LoRA under the best-performing hyperparameter configuration on the fine-tuning task.
These results demonstrate the effectiveness of our proposed MoFO algorithm in both optimization and mitigating forgetting.

MoFO is compatible to the performance of Default FT and HFT on the math task, yet significantly outperforms these methods in preserving general capability. Specifically, Default FT shows a decline of $5.4\%$ in MMLU accuracy and HFT experiences a drop of $0.6\%$ in HumanEval. In contrast, our MoFO not only maintains but slightly improves these general capability scores by an average of $0.4\%$. %

\textbf{Comparison from a Pareto perspective.}
Generally, improving performance on the fine-tuning task and reducing forgetting are often a pair of competing objectives. It is intriguing to study how different fine-tuning methods balance this tradeoff. By adjusting the hyperparameters of different methods, we can observe a set of fine-tuned models, each representing a different tradeoff between fine-tuning performance and forgetting. The Pareto frontier formed by these models helps visualize the tradeoffs, and we can identify which method offers the best balance between fine-tuning and forgetting.

In this comparison, we also include traditional regularization methods such as $L_2$-regularization \citep{xuhong2018explicit} (denoted as $L_2$ reg) and $L_1$-regularization \citep{panigrahi2023task} (denoted as $L_1$ reg), which are not specifically designed for large models. These methods modify the original fine-tuning loss $\mathcal{L}_{finetune}(\theta)$ by adding a regularization term. For $L_2$-regularization, the modified loss is $\mathcal{L}_{finetune}(\theta) + \lambda_2 \|\theta - \theta_{0}\|^2_2$, and for $L_1$-regularization, it is $\mathcal{L}_{finetune}(\theta) + \lambda_1 \|\theta - \theta_{0}\|_1$, where $\lambda_2$ and $\lambda_1$ are the respective regularization hyperparameters.%

We fine-tune the Llama-2-7B model on the MetaMathQA dataset using $L_1$ and $L_2$ regularization, as well as LoRA, and compare their performance with MoFO. We present the results in Figure \ref{pareto} and plot Pareto optimal fronts\footnote{Since it is impractical to exhaust all hyperparameter configurations in real experiments, we present linear interpolation approximations of the Pareto fronts in Figure \ref{pareto}.} for these methods. Details of the hyperparameter configurations for this experiment are provided in Appendix \ref{subsec:hyperparam_config}. These results show the effectiveness of the MoFO algorithm in both optimization and mitigating forgetting.

\begin{figure}[H]
\centering
\vspace{-2mm}
\subfigure{
\begin{minipage}[h]{0.49\linewidth}
\centering
\includegraphics[width=1\linewidth]{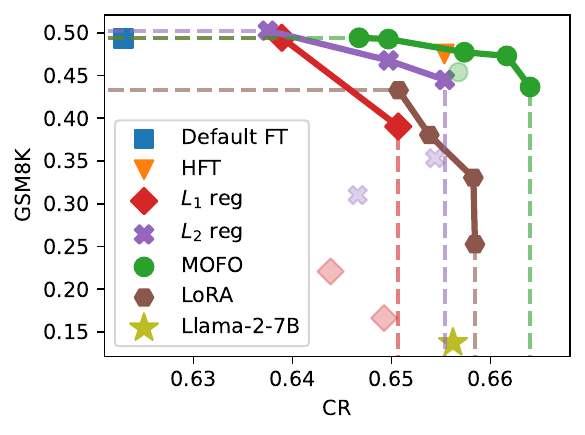}
\end{minipage}%
}%
\subfigure{
\begin{minipage}[h]{0.49\linewidth}
\centering
\includegraphics[width=1\linewidth]{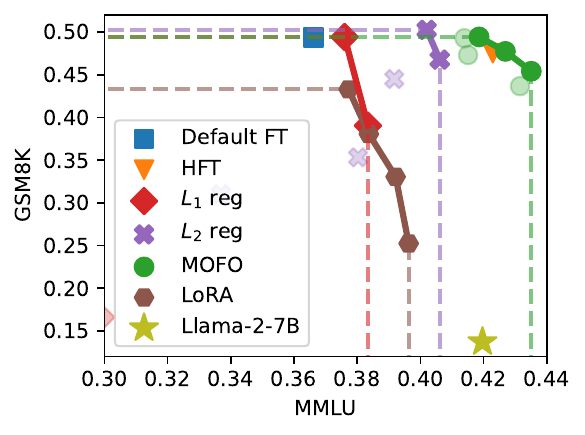}
\end{minipage}%
}%
\centering
\vspace{-1em}
\caption{The performance on the math task (GSM8K) and the scores in general capabilities
of Llama-2-7B after fine-tuning on the MetaMathQA dataset. Only points on the Pareto front are shown as solid points, while the remaining points are presented as semi-transparent. The results show that compared with $L_1$, $L_2$ regularization, and LoRA across various hyperparameter configurations, the MoFO algorithm achieves a better Pareto front.}
\label{pareto}
\vspace{-1.2em}
\end{figure}

The result reveals that MoFO consistently achieves a better Pareto front in comparison to baseline methods. When compared to regularization methods and LoRA, MoFO exhibits less forgetting and can even maintain general capabilities with comparable GSM8K accuracies. Additionally, MoFO outperforms regularization methods in math tasks when the magnitudes of forgetting are similar. 
{We also note that $L_1$ and $L_2$ regularization \citep{panigrahi2023task, xuhong2018explicit} require storing the pre-training weights throughout the entire fine-tuning process for regularization computation, which incurs additional memory overhead.} 
For preliminary analysis on why MoFO might compare favorably to $L_1/L_2$ regularization, see Appendix \ref{subapp:l1l2-mofo}.

Appendix \ref{sec_addition_exp} reports additional instruction fine-tuning results. Specifically, we evaluate
\begin{itemize}
    \vspace{-1em}
    \setlength{\itemsep}{0pt} %
    \setlength{\parskip}{0pt} %
    \item LLM variants: Gemma-2B-IT and Llama-2-7B-Chat;
    \item Domain-specific datasets: medical dataset (PMC-LLaMA-Instruct \citep{wu2024pmc}), coding dataset (Magicoder-Evol-Instruct \citep{wei2023magicoder});
    \item Different baselines: HMA \citep{lin2024mitigating}, CoFiTune \citep{zhang2024balancing}, Soft-masking \citep{ke2023sub, ke2023continual}.
\end{itemize}

\textbf{MoFO Converges Closer to the Pre-trained Model.}

In this part, we empirically investigate whether MoFO converges closer to the pre-trained model.
Building on the fine-tuned models in Table \ref{tab:mathqa_exp}, we compare their distances to the pre-trained model.
In addition, for the $L_1$ and $L_2$ regularization baselines in our Pareto analysis above, we select the models that achieve the best performance on the GSM8K benchmark (corresponding to the fine-tuning task). 

Figure \ref{distance_chek} shows that models fine-tuned with MoFO are closer to the pre-trained model compared to other baseline methods.

\begin{figure}[H]
\centering
\vspace{-2mm}
\subfigure{
\begin{minipage}[h]{0.4\linewidth}
\centering
\includegraphics[width=\linewidth]{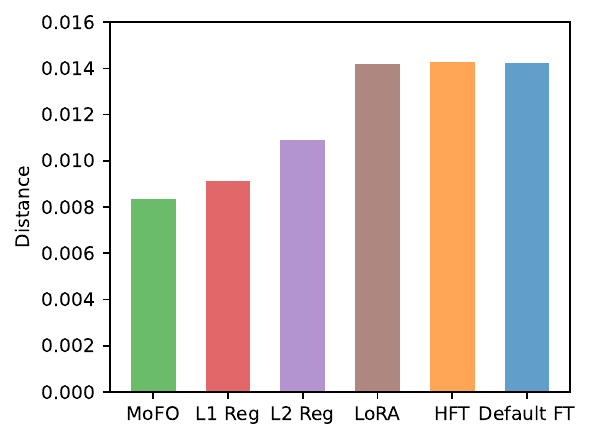}
\end{minipage}%
}%
\centering
\vspace{-1em}
\caption{The distances for the fine-tuned Llama2-7B on MetaMathQA. The results show MoFO achieves minima closer to the pre-trained model.}
\label{distance_chek}
\vspace{-1.2em}
\end{figure}

\subsection{Continual Fine-Tuning}
\label{subsec:cl-ft}

In this section, we explore the performance of our proposed MoFO in continual fine-tuning on the TRACE benchmark \citep{wang2023trace}.
We sequentially train TinyLlama-1.1B on the TRACE dataset, which includes the eight tasks from different domains. The implementation details are provided in Appendix \ref{sec_training_detail}.

\renewcommand{\arraystretch}{1}

\textbf{{Continual learning baselines.}}
We consider several traditional methods from the field of continual learning to compare with MoFO. These methods can also be orthogonal combined with MoFO to further enhance performance. \textbf{Replay} involves optimizing the model using current data along with a memory buffer containing samples from previous tasks to mitigate forgetting, and we follow the implementation in \citet{wang2023trace}. \textbf{Gradient of Episodic Memory (GEM)} \citep{lopez2017gradient} mitigates forgetting by using gradients from old tasks to adjust the parameter updates during the training of new tasks. 
\textbf{Elastic weight consolidation (EWC)} \citep{kirkpatrick2017overcoming} uses a diagonal approximation of Fisher information matrix, which can be calculated by gradients from previous tasks, to regularize parameter updates.

\begin{table}[h]
\vspace{-1em}
\centering
\caption{The OP and BWT scores of TinyLlama-1.1B after fine-tuning on TRACE benchmark. The results show that MoFO outperforms Default FT, HFT, %
GEM, and EWC in continual learning and can combine well with continual learning methods. Bold values denote the best results among these methods in each group.
}
\label{trace_exp}
\begin{tabular}{ccc}
\hline\hline
                   & OP   & BWT   \\ \hline
Default FT           & 38.4 & -10.3 \\
HFT                & 39.9 & -10.1 \\

MoFO        & \textbf{41.3} & \textbf{-5.4}  \\ \hline
GEM                & 40.8 & -8.5  \\
GEM + MoFO & \textbf{41.7} & \textbf{-6.7	}  \\ \hline
EWC                & 41.1 & -8.3  \\
EWC + MoFO & \textbf{43.2} & \textbf{-4.4}  \\
\hline
Replay             & 45.5 & 4.7   \\
Replay + MoFO & \textbf{47.0} & \textbf{4.8}   \\ %
\hline\hline
\end{tabular}
\end{table}

\textbf{Results of continual fine-tuning.} We present the experimental results of sequentially fine-tuning TinyLlama-1.1B on the TRACE benchmark with various methods in Table \ref{trace_exp}. %
The results indicate that in continual fine-tuning, %
MoFO not only outperforms other fine-tuning baselines but also surpasses GEM and EWC. Moreover, MoFO combines well with the Replay method, offering a $1.5\%$ performance gain on the OP metric compared to using Replay alone. 
Moreover, MoFO works well in combination with EWC, yielding at least a 2.1\% improvement in the OP metric over using EWC alone. Additionally, when combined with the GEM method, MoFO provides a $0.9\%$ improvement on the OP metric compared to using GEM alone.

In summary, these results underscore the superior performance of MoFO in continual fine-tuning and its effectiveness in alleviating forgetting.

\vspace{-2mm}
\subsection{Impact of Update Strategy in MoFO}
\label{experiment-sec:further-analy}

\renewcommand{\arraystretch}{1.5}
\begin{table}[t]
\centering
\vspace{-2mm}
\caption{The performance on the math reasoning task (GSM8K) and general capability scores of Llama-2-7B after fine-tuning on MetaMathQA using different updating strategies in MoFO. Bold values denote the best results among the BCD methods.}\label{gradient_mask_exp}
\begin{tabular}{ccccccc}
\hline \hline
\multirow{2}{*}{Method} & \multirow{2}{*}{GSM8K} & \multicolumn{4}{c}{General Capability} \\  \cline{3-6} 
                         &  & CR & MMLU & HEval &    Avg.   \\ \hline

Llama-2-7B & 13.7 & 65.6 & 42.0 & 24.2 & 43.9 \\ \hline
{Default FT} & {49.4}  & {62.3} & {36.6} & {16.1} & {38.3} \\\hline  
{Random BCD} & {35.0}  & {65.8} & {41.1} & {25.1} & {44.0} \\\hline
{Grad BCD} & {40.2} &  {\textbf{66.0}} & {41.6} & {\textbf{28.0}} & {45.2} \\ \hline 
{MV BCD} & {{42.2}} &  {\textbf{66.0}} & {40.0} & {27.6} & {44.5} \\\hline 
{MoFO} & \textbf{{45.4}} &  {65.7} &{\textbf{43.5}} & {27.4} & {\textbf{45.5}}\\
\hline
\hline
\end{tabular}
\end{table}

In addition to MoFO, we consider three other BCD methods with different filtering strategies: \textbf{random BCD}, \textbf{gradient-filtered BCD}, and \textbf{MV-filtered BCD}. \textbf{Random BCD} updates a random subset of parameters at each iteration. \textbf{Gradient-filtered BCD} replaces MoFO’s filter $\texttt{FLT}_{\alpha}(m_t)$ with $\texttt{FLT}_{\alpha}(g_t)$, while \textbf{MV-filtered BCD} uses $\texttt{FLT}_{\alpha}(m_t/\sqrt{v_t})$.

We fine-tune Llama-2-7B on MetaMathQA using these four methods with $10\%$ parameter update fraction and present the results in Table \ref{gradient_mask_exp}. Experimental results show that all four BCD methods exhibit significantly less forgetting compared to Default FT, demonstrating the effectiveness of BCD algorithms in mitigating forgetting.

In terms of GSM8K performance, our proposed MoFO method significantly surpasses Random BCD, Gradient-filtered BCD, and MV-filtered BCD, indicating that updating parameters with the largest momentum leads to strong optimization power. Additional comparative experiments on BCD filtering strategies are presented in Appendix \ref{subapp:bcd-filter-strategy}. More insights towards this result are provided in Appendix \ref{subapp:insight-update-strategy}.

\subsection{Furthur Analysis }

\textbf{Guidelines for setting $\alpha$.} Experiments show that setting the updating fraction $\alpha = 15\%$ works the best for most of our experiments; and $5\%-15\%$ all work quite well. We provide a more detailed guideline for determining \(\alpha\) in Appendix \ref{subsec:how2set-alpha}. The guideline involves randomly sampling a small proxy subset and performing a grid search over possible $\alpha$ values.

\textbf{Efficiency Analysis.} We provide an efficiency analysis on MoFO in Appendix \ref{subapp:efficiency_mofo}. The results show that MoFO requires only around $4\%-5\%$ additional training time 
compared with Default FT throughout the entire training process.

\section{Related Works}
\label{sec:related_works}

\subsection{Forgetting in Continual Learning}
\label{subapp:forget-in-CL}

Catastrophic forgetting, a significant issue where models forget previously learned information upon learning new data, has received considerable attention in machine learning \citep{mccloskey1989catastrophic, goodfellow2013empirical, kemker2018measuring, ramasesh2021effect, verwimp2023continual, liu2024more}. 
Traditional continual learning primarily focuses on addressing catastrophic forgetting in \textit{sequential-task learning} scenarios. In addition to investigating the forgetting of pre-training knowledge during fine-tuning, Section \ref{subsec:cl-ft} conducts experimental studies on catastrophic forgetting in sequential-task fine-tuning processes, which aligns more closely with conventional continual learning paradigms.

\textbf{Replay-based methods}. In sequential-task learning, these methods leverage past experiences to facilitate the learning of new tasks. The most classical scheme is experience replay, which involves replaying data of past tasks during incremental training \citep{rolnick2019experience}
\citep{aljundi2019online, hayes2019memory, cha2021co2l, chaudhry2019tiny, riemer2019scalable}. Other variants utilize gradient information from old tasks \citep{lopez2017gradient, riemer2019learning, chaudhry2019efficient, farajtabar2020orthogonal, aljundi2019gradient, chaudhry2021using, tiwari2022gcr}. In LLMs, \citet{yin2023dynosaur, wang2024inscl,ouyang2022training} propose replay-based methods to mitigate forgetting. While MoFO is a replay-free method,
MoFO can be combined with replay strategies.

\textbf{Regularization-based methods}. These methods introduce constraints to the training process to preserve past knowledge, such as adding regularization to the loss functions \citep{kirkpatrick2017overcoming, aljundi2018memory, zenke2017continual, xuhong2018explicit, ritter2018online, kumar2023maintaining} or the embedding/output changes \citep{li2017learning, rannen2017encoder, buzzega2020dark, huang2021continual, cha2020cpr}. 
Some regularization-based approaches still rely on partial information from previous models \citep{kirkpatrick2017overcoming}. In contrast, MoFO does not require past information and does not alter the original loss function, making it inherently orthogonal to regularization-based methods. To improve generalization and robustness to noise after instruction-tuning, several studies introduce explicit regularization \citep{li2021improved, zhang2023noise}. In particular, \cite{zhang2023noise} proposes a Hessian-based penalty that encourages convergence to flatter minima. It is an interesting direction for future research to evaluate whether MoFO's momentum filtering mechanism implicitly favors more stable minima, thereby further enhancing generalization and noise robustness.

    \textbf{Optimization-based methods}. These methods focus on modifying the training algorithm to mitigate forgetting. 
    In traditional continual learning, optimization-based methods commonly include, but are not limited to, gradient projection techniques \citep{wang2023orthogonal,lopez2017gradient}, meta-learning approaches \citep{beaulieu2020learning, javed2019meta}, and strategies leveraging the structure of the loss landscapes \citep{mirzadeh2020linear, mirzadeh2020understanding}. When it comes to forgetting-mitigation in LLM training, recent studies have explored optimization strategies that update only a subset of parameters at each iteration. For instance, \citet{hui2024hft} randomly freezes half of the model's parameter modules and updatets the rest at each iteration. \citet{ke2023continual,ke2023sub} introduce a soft-masking mechanism that selects parameters for update based on their importance values. Further, \citet{zhang2024balancing} combines selective module updating with soft-masking. MoFO, which also falls into this category, updates parameters with largest momentum magnitudes at each iteration.
    Compared to these works, our study provides a theoretical convergence guarantee of our proposed method, thereby establishing its effectiveness in LLM fine-tuning.

    \textbf{Model merging methods}. These methods balance learning new knowledge and retaining old knowledge by merging the new and past models. One line of research focuses on model averaging, which interpolates between the weights of different LLMs \citep{wortsman2022model, wortsman2022robust, eeckt2022weight, yadav2024ties, lin2023speciality, lin2024mitigating}. Another line of research relies on the observation that task-specific knowledge largely resides in a subspace of the weight space \citep{ilharco2023editing, panigrahi2023task, gueta2023knowledge, zhu2024model, he2024localize}, and leverage task vectors or task localization to preserve pre-training knowledge in the fine-tuned models \citep{panigrahi2023task, yadav2024ties, yu2024language}.

    \textbf{Architecture-based methods}. These methods modify the model's architecture in training. LoRA \citep{hu2022lora}, as the most popular parameter-efficient fine-tuning (PEFT) method, freezes the pre-training weights and introduces low-rank trainable matrices. Variants of LoRA are applied in continual learning for LLMs \citep{ren2024analyzing, wang2023orthogonal}. However, LoRA is observed to forget less but also learn less than default fine-tuning \citep{biderman2024lora}. 
    Apart from LoRA, Adapters \citep{houlsby2019parameter} and BitFit \citep{zaken2021bitfit} are also well-known PEFT methods. In Appendix \ref{subapp:peft}, we include them as baselines for comparison and find that, while they are slightly less effective than MoFO in mitigating forgetting, their performance on the fine-tuning task is substantially worse than that of MoFO.

    Other approaches adaptively expand model capacity or isolate partial weights to mitigate interference between new and old tasks \citep{wang2023orthogonal, razdaibiedina2023progressive}. In contrast, MoFO updates a subset of parameters at each iteration, but does not alter the total trainable parameters.

\subsection{Block Coordinate Descent}
\label{subapp:bcd}

Block Coordinate Descent (BCD) involves iteratively optimizing over a block of coordinates while holding the others constant. The foundational work of \cite{tseng2001convergence} provides a comprehensive analysis of the convergence properties of BCD under certain conditions. Subsequent research has explored various BCD variants \citep{hong2017iteration}, including random
BCD \citep{nesterov2012efficiency, richtarik2014iteration, lu2015complexity}, cyclic BCD \citep{sun2015improved, razaviyayn2013unified}, and greedy BCD \citep{nutini2015coordinate}. Among these, the greedy variant, also known as Gauss-Southwell BCD method, has drawn attention due to its ability to prioritize coordinates that yield the most substantial improvement in each iteration, thereby potentially accelerating convergence.

In the realm of machine learning, BCD has also found applications \citep{nutini2022let}. For example, \citet{luo2024badam}  leverages BCD to perform memory-efficient fine-tuning of LLM and \cite{xu2024random} uses random masking to perform this. In federated learning, \cite{rothchild2020fetchsgd} adopts top-$k$ momentum value unsketch rather than our top-$k$ momentum filtering to tackle communication bottleneck and convergence issues.
In LLMs, some concurrent works propose BCD-based algorithms leveraging task vectors to enhance fine-tuning performance \citep{li2024gradient} and mitigate catastrophic forgetting in multi-task learning \citep{panda2024lottery}.
Our approach can be regarded as a type of greedy BCD adapted to Adam, achieving good performance in fine-tuning tasks and alleviating forgetting.

\section{Conclusion and Limitations}
This paper presents the Momentum-Filtered Optimizer (MoFO), a new approach designed to mitigate the crucial issue of pre-training knowledge forgetting in LLMs during fine-tuning. By selectively updating the parameters with the largest momentum magnitudes in each parameter block, MoFO converges to a point closer to the pre-trained model
compared to full-parameter fine-tuning and effectively preserves pre-trained knowledge.  Our experimental results demonstrate that MoFO not only achieves comparable performance to default fine-tuning but also 
effectively alleviates forgetting. 

While this work provides a preliminary exploration of applying traditional block coordinate descent methods to mitigate forgetting in LLM training, several avenues remain open for further investigation. First, the current framework uses a uniform and consistent update fraction across all parameter blocks throughout training, whereas future work may explore adaptive update fractions and block-wise dynamic adjustments. 
Second, although our focus centers on forgetting mitigation during supervised fine-tuning, extending this methodology to downstream phases such as RLHF represents a promising direction for improving LLM development pipelines.

\section*{Acknowledgement}
The authors would like to express our sincere gratitude to the reviewers for their insightful feedback during the discussion phase. The authors also thank Congliang Chen and Ziniu Li for their helpful suggestions.
This paper is supported by NSFC (No. 12326608 and No. 12401409); Hetao Shenzhen-Hong Kong Science and Technology Innovation Cooperation Zone Project (No.HZQSWS-KCCYB-2024016); University Development Fund UDF01001491, the Chinese University of Hong Kong, Shenzhen; Guangdong Provincial Key Laboratory of Mathematical Foundations for Artificial Intelligence (2023B1212010001).

\bibliography{main}
\bibliographystyle{tmlr}

\newpage

\appendix

\appendix
\onecolumn
\section{Theoretical Analysis}
\label{app:theory}

Appendix \ref{app:theory} is organized into three self-contained parts. A roadmap is provided to guide the reader in locating and understanding each theoretical result efficiently.

\textbf{Roadmap to Appendix A (Quick Reference).}

\textbf{Appendix \ref{app:alpha-ratio-supp}: Supplementary Analysis on the Top-$\alpha$ Filter}

This section formalizes the top-$\alpha$ filter $\mathrm{FLT}_\alpha(\cdot)$ and the induced quantity
$\|z\odot \mathrm{FLT}_\alpha(z)\|_1$ that will serve as our working norm in the analysis.
Proposition~\ref{prop:topalpha} verifies that this is indeed a norm.
Lemma~\ref{lem:flt-diff} establishes a \emph{Lipschitz stability} property for filtered outputs,
which later lets us pass from the momentum-defined filtering to the gradient-defined filtering.
Lemma~\ref{lem:norm-relation} relates the $L_{1,\text{top-}\alpha}$ norm to standard $L_p$ norms,
which is used at the very end to translate the convergence bound to any $p\in[1,\infty]$.

\textbf{Appendix \ref{appendix:mofo-cvrg-proof}: Proof of Theorem 1 (Convergence of MoFO)}

For a high-level narrative, see the proof sketch accompanying Theorem~\ref{thm:mofo-cvrg} in the main text; here we provide a quick-reference map to the technical steps.

Lemma~\ref{lem:delta-theta} bounds each step size and the $\ell_2$-movement of parameters using the number of active coordinates.
Lemma~\ref{lem:delta-g} controls the drift of gradients across iterations via $L$-smoothness.
Lemma~\ref{lem:gmv-it} lower-bounds the per-coordinate alignment term $g_{i,t}\hat m_{i,t}/\sqrt{\hat v_{i,t}}$,
which is the driver in the descent inequality.
Lemma~\ref{lem:mg-flt-diff} shows that the bias-corrected momentum $\hat m_t$ tracks $g_t$ in $\ell_1$ at rate $O(t^{-1/2})$.
Combining these with the descent lemma yields a key inequality in which the filter is $\texttt{FLT}_\alpha(m_t)$:
\begin{equation*}
\label{eq:key1-app}
\frac{C_1}{\sqrt{t}} \Vert g_t \odot \texttt{FLT}_\alpha(m_t) \Vert_1 \le \mathcal{L}(\theta_{t-1})-\mathcal{L}(\theta_t)+\frac{C_2}{t}.
\end{equation*}

Finally, Lemma~\ref{lem:flt-diff} help convert it to the desired inequality with the gradient filter $\texttt{FLT}_\alpha(g_t)$;
summing over $t$ gives $\min_{0\le t\le T-1}\| \nabla\mathcal{L}(\theta_t)\odot \texttt{FLT}_\alpha(\nabla\mathcal{L}(\theta_t))\|_1
= O(\log T/\sqrt{T})$.
By Lemma~\ref{lem:norm-relation}, this implies the same $O(\log T/\sqrt{T})$ rate for any $L_p$ norm $\min_{0\le t\le T-1}\| \nabla\mathcal{L}(\theta_t) \|_p$.

\textbf{Appendix \ref{app:exam_extension}: Proof of Theorem \ref{thm:mofo_adam_dist} (Illustrative example: forgetting mitigation of MoFO)}

In an illustrative example with updating ratio $\alpha=1/d$, MoFO converges to a single-attractor minimum and is strictly closer to the pre-training state than Adam, thus attaining a lower pre-training loss. We prove this by mathematical induction.

\textbf{Appendix \ref{subapp:challenge-nonsmooth}: Challenges and Potential Extensions of Theorem \ref{thm:mofo-cvrg} to Nonsmooth Objectives}

This subsection explains why extending the convergence analysis of Theorem \ref{thm:mofo-cvrg} to nonsmooth objectives is nontrivial and outlines several possible future directions.

\newpage

\subsection{Supplementary Analysis on the Top-$\alpha$ Filter}
\label{app:alpha-ratio-supp}

In this section, we provide supplementary analysis on our top-$\alpha$ filter, which serves as a preliminary for proving Theorem \ref{thm:mofo-cvrg} in Appendix \ref{appendix:mofo-cvrg-proof}.

As introduced in Section \ref{sec:algo-formulation}, the entire parameter space is divided into \( B \) parts, with the \( k \)-th part having a dimension of \( d_k \). We assume the parameter space is \(\mathbb{R}^d\), which can be expressed as the product \(\mathbb{R}^d \cong \mathbb{R}^{d_1} \times \mathbb{R}^{d_2} \times \dots \times \mathbb{R}^{d_B}\). For any \( z \in \mathbb{R}^d \), we represent it as:
\[
    z = \text{Concat}(z^{(1)}, z^{(2)}, \dots, z^{(B)}),
\]
where \( z^{(k)} \in \mathbb{R}^{d_k} \) for each \( 1 \leq k \leq B \).

\begin{defn}\label{defn:greedy-norm}
For any $z \in \mathbb{R}^d$, we define the top-$\alpha$ filter of $z$ as
\begin{equation*}
    \flt (z) := {\rm Concat}(\rve^{(1)}_{S_1}; \rve^{(2)}_{S_2}; \dots; \rve^{(B)}_{S_B}) \in \mathbb{R}^d,
\end{equation*}
where
\begin{equation*}
    S_k = \{i \in [d_k]: |z^{(k)}_i| \text{ ranks within the top-} \alpha \text{ of all } |z^{(k)}| \text{'s entries } (|z^{(k)}_1|, |z^{(k)}_2|, \dots, |z^{(k)}_{d_k}|)\}
\end{equation*}
and $\rve_{S_k}^{(k)}$ is a $d_k$-dimensional vector where the $i$-th entry is 1 if $i \in S_k$, and 0 otherwise.
\end{defn}

\begin{rem}\label{rem:unique-filter}
To ensure that the top-\(\alpha\) filter \(\flt(z)\) is well-defined, when multiple entries share identical absolute values and including all of them in the set \( S_k \) would result in exceeding the \(\alpha\) threshold of set size, the construction of \( S_k \) prioritizes the entries with the smallest indices among those with the same absolute values.
\end{rem}

\begin{defn}
For any $z \in \mathbb{R}^d$, we define the $L_{1, \text{top-}\alpha}$ norm of $z$ as
\begin{equation*}
    \topalpha{z} := \|z \odot \flt(z)\|_1.
\end{equation*}
\end{defn}

\begin{prop}\label{prop:topalpha}
    $\topalpha{\cdot}$ is indeed a norm in $\sR^d$.
\end{prop}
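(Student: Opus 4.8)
The plan is to verify the three norm axioms -- positivity (with definiteness), absolute homogeneity, and the triangle inequality -- directly from the definitions of $\flt$ and $\topalpha{\cdot}$. Since $\topalpha{z} = \|z \odot \flt(z)\|_1$ is built from the genuine norm $\|\cdot\|_1$, the first two axioms will be nearly immediate; the triangle inequality is the one that requires an argument, because the filter $\flt(z)$ depends on $z$, so $\flt(z+w)$ need not agree with $\flt(z)$ or $\flt(w)$.

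First I would observe that $\topalpha{z} \geq 0$ always, and that $\topalpha{z} = 0$ forces every selected entry of $z$ to vanish; since $S_k$ always has size $\lceil \alpha\% \cdot d_k \rceil \geq 1$ and $S_k$ by construction contains an index achieving $\max_i |z^{(k)}_i|$, this forces $z^{(k)} = 0$ for every block $k$, hence $z = 0$. For homogeneity, note that for any scalar $\lambda$, the ranking of the absolute values $|(\lambda z)^{(k)}_i| = |\lambda|\,|z^{(k)}_i|$ is the same as that of $|z^{(k)}_i|$ (ties are broken by index in both cases, per Remark \ref{rem:unique-filter}), so $\flt(\lambda z) = \flt(z)$ when $\lambda \neq 0$, and then $\topalpha{\lambda z} = \|\lambda z \odot \flt(z)\|_1 = |\lambda|\,\topalpha{z}$; the case $\lambda = 0$ is trivial.

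The main work is the triangle inequality $\topalpha{z+w} \leq \topalpha{z} + \topalpha{w}$. The key idea is that the top-$\alpha\%$ filter selects, within each block, the subset of a fixed size $s_k := |S_k|$ that \emph{maximizes} the $L_1$ mass on that block; that is, for any block $k$,
\begin{equation*}
\|z^{(k)} \odot \rve^{(k)}_{S_k(z)}\|_1 = \max_{\substack{T \subseteq [d_k] \\ |T| = s_k}} \sum_{i \in T} |z^{(k)}_i|,
\end{equation*}
which is exactly the sum of the $s_k$ largest absolute entries. Writing $T_k := S_k(z+w)$ for the index set selected by the filter applied to $z+w$, I would then bound, block by block,
\begin{equation*}
\sum_{i \in T_k} |z^{(k)}_i + w^{(k)}_i| \leq \sum_{i \in T_k} |z^{(k)}_i| + \sum_{i \in T_k} |w^{(k)}_i| \leq \max_{|T| = s_k} \sum_{i \in T} |z^{(k)}_i| + \max_{|T| = s_k} \sum_{i \in T} |w^{(k)}_i|,
\end{equation*}
where the first step is the scalar triangle inequality applied entrywise and the second uses that $T_k$ has size $s_k$, so it is a feasible (but not necessarily optimal) choice for each of the two maximizations. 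The right-hand side is precisely $\topalpha{z}^{(k)\text{-block}} + \topalpha{w}^{(k)\text{-block}}$; summing over $k = 1, \dots, B$ gives $\topalpha{z+w} \leq \topalpha{z} + \topalpha{w}$.

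The one subtlety I expect to be the main obstacle is making the "filter selects the maximum-$L_1$-mass subset of fixed size" claim airtight in the presence of ties, so that the equality with $\max_{|T|=s_k}\sum_{i\in T}|z^{(k)}_i|$ truly holds: the tie-breaking rule in Remark \ref{rem:unique-filter} picks a \emph{specific} size-$s_k$ subset among possibly several with the same total mass, but any such choice still achieves the maximum, so the identity is unaffected. I would state this as a short lemma (the $L_1$ mass of the filtered vector equals the sum of the $s_k$ largest absolute coordinates in each block) and then the three axioms follow as above. This completes the verification that $\topalpha{\cdot}$ is a norm on $\mathbb{R}^d$.
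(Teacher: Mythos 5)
Your proposal is correct and follows essentially the same route as the paper: positive definiteness via the observation that $S_k$ contains a maximizing index, homogeneity via the order-invariance of the filter under scalar multiplication, and the triangle inequality via the scalar $L_1$ bound at the common index set $\flt(z+w)$ followed by the maximality property of the filter (the paper writes this as $\|x \odot \flt(x+y)\|_1 \le \|x\odot\flt(x)\|_1$; you write it as a block-wise max over size-$s_k$ subsets, which is the same observation). Your explicit note on tie-breaking, and your use of $|\lambda|$ for general scalars (the paper's proof only states the $c \in \mathbb{R}_+$ case), are minor refinements rather than a different approach.
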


\begin{proof}
    By Definition \ref{defn:greedy-norm}, we get
    \begin{equation}\label{07171402}
        \topalpha{z} = \|z \odot \flt(z)\|_1 = \sum_{k=1}^B \|z^{(k)} \odot \rve_{S_k}^{(k)}\|_1.
    \end{equation}

    First, if $\topalpha{z} = 0$, then by (\ref{07171402}), $\|z^{(k)} \odot \rve_{S_k}^{(k)}\|_1 = 0$ for any $1 \leq k \leq B$. Thus,
    \begin{equation*}
        \|z^{(k)}\|_{\infty} = \argmax_{1\leq i \leq d_k} |z^{(k)}_i| \leq \|z^{(k)} \odot \rve_{S_k}^{(k)}\|_1 = 0.
    \end{equation*}
    So $z^{(k)}$ is a zero vector for any $1 \leq k \leq B$ and then $z$ is a zero vector.

    Second, for any given $c \in \mathbb{R}_{+}$, $\{|z^{(k)}_i|\}_{1 \leq i \leq d_k}$ and $\{|c z^{(k)}_i|\}_{1 \leq i \leq d_k}$ have the same order. So $z$ and $cz$ share the same filter $\flt(z)$ and
    \begin{equation*}
        \topalpha{cz} = \|cz \odot \flt(cz)\|_1 = c \|z \odot \flt(z) \|_1 = c \topalpha{z}.
    \end{equation*}

    Third, for any $x,y \in \mathbb{R}^d$, we let
    \begin{equation*}
    \begin{aligned}
        S_k' &= \{i \in [d_k]: |x^{(k)}_i| \text{ ranks within the top-} \alpha \text{ of all } |x^{(k)}| \text{'s entries } (|x^{(k)}_1|, |x^{(k)}_2|, \dots, |x^{(k)}_{d_k}|)\}, \\
        S_k'' &= \{i \in [d_k]: |x^{(k)}_i + y^{(k)}_i| \text{ ranks within the top-} \alpha \text{ of all } \\
        &\qquad \qquad \qquad |x^{(k)} + y^{(k)}| \text{'s entries } (|x^{(k)}_1+y^{(k)}_1|, |x^{(k)}_2 + y^{(k)}_2|, \dots, |x^{(k)}_{d_k} + y^{(k)}_{d_k}|)\}.
    \end{aligned}
    \end{equation*}
    Then we have
    \begin{equation*}
    \begin{aligned}
        \flt(x) = {\rm Concat}(\rve^{(1)}_{S'_1}; \rve^{(2)}_{S'_2}; \dots; \rve^{(B)}_{S'_B}) \quad \text{and} \quad \flt(x+y) = {\rm Concat}(\rve^{(1)}_{S''_1}; \rve^{(2)}_{S''_2}; \dots; \rve^{(B)}_{S''_B}).
    \end{aligned}
    \end{equation*}
    By the construction of $S'_k$, for any $1 \leq k \leq B$, we have
    \begin{equation*}
        \|x^{(k)} \odot \rve^{(k)}_{S''_k} \|_1 \leq \|x^{(k)} \odot \rve^{(k)}_{S'_k} \|_1.
    \end{equation*}
    So
    \begin{equation*}
        \|x \odot \flt(x+y)\|_1 = \sum_{k=1}^B \|x^{(k)} \odot \rve^{(k)}_{S''_k} \|_1 \leq \sum_{k=1}^B \|x^{(k)} \odot \rve^{(k)}_{S'_k} \|_1 = \|x \odot \flt(x)\|_1.
    \end{equation*}
    Similarly, it holds that
    \begin{equation*}
        \|y \odot \flt(x+y)\|_1 \leq \|y \odot \flt(y)\|_1.
    \end{equation*}
    Thus, we have
    \begin{equation*}
    \begin{aligned}
        \topalpha{x+y} &= \|(x+y) \odot \flt(x+y)\|_1 \\
        &= \|x \odot \flt(x+y) + y \odot \flt(x+y)\|_1 \\
        &\leq \|x \odot \flt(x+y)\|_1 + \|y \odot \flt(x+y)\|_1 \\
        &\leq \|x \odot \flt(x)\|_1 + \|y \odot \flt(y)\|_1 \\
        &= \topalpha{x} + \topalpha{y}.
    \end{aligned}
    \end{equation*}
\end{proof}

We propose a lemma which is useful for the proof of Theorem \ref{thm:mofo-cvrg}.
\begin{lem}\label{lem:flt-diff}
    For any $x,y \in \sR^d$, it holds that
    \begin{equation*}
        \|x \odot \flt(x)\|_1 - \|x \odot \flt(y)\|_1 \leq 2\|x-y\|_1.
    \end{equation*}
\end{lem}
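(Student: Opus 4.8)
The plan is to prove the inequality blockwise, reducing to a statement about a single block $\mathbb{R}^{d_k}$, and then to argue coordinatewise within each block. By Definition~\ref{defn:greedy-norm} and equation~(\ref{07171402}), both $\|x \odot \flt(x)\|_1$ and $\|x \odot \flt(y)\|_1$ decompose as sums over blocks $k = 1, \dots, B$, so it suffices to show that for each $k$,
\begin{equation*}
    \|x^{(k)} \odot \rve^{(k)}_{S'_k}\|_1 - \|x^{(k)} \odot \rve^{(k)}_{S''_k}\|_1 \leq 2\|x^{(k)} - y^{(k)}\|_1,
\end{equation*}
where $S'_k$ is the top-$\alpha\%$ index set for $x^{(k)}$ and $S''_k$ is the top-$\alpha\%$ index set for $y^{(k)}$; summing over $k$ then gives the claim. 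Hence I may assume without loss of generality that $B = 1$, writing $x, y \in \mathbb{R}^n$ with index sets $S_x$ (top-$\alpha\%$ of $|x|$) and $S_y$ (top-$\alpha\%$ of $|y|$), both of the same cardinality $s := |S_x| = |S_y|$.

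The core of the argument is the following: $\|x \odot \rve_{S_x}\|_1 = \sum_{i \in S_x} |x_i|$ is by construction the largest possible value of $\sum_{i \in S} |x_i|$ over all index sets $S$ with $|S| = s$; in particular $\sum_{i \in S_x} |x_i| \geq \sum_{i \in S_y} |x_i|$ would give the difference $\leq 0$ — but that is too crude since we need to compare against $\sum_{i \in S_y}|x_i|$, which is exactly $\|x \odot \rve_{S_y}\|_1$, so this immediate bound only yields a nonnegative left side, not the quantitative estimate. Instead I would proceed by the triangle inequality: $\sum_{i \in S_x} |x_i| \leq \sum_{i \in S_x} |y_i| + \sum_{i \in S_x} |x_i - y_i| \leq \sum_{i \in S_x} |y_i| + \|x - y\|_1$. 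Now $\sum_{i \in S_x}|y_i| \leq \sum_{i \in S_y}|y_i|$ because $S_y$ maximizes $\sum_{i \in S}|y_i|$ over $s$-element sets $S$. Finally, $\sum_{i \in S_y} |y_i| \leq \sum_{i \in S_y}|x_i| + \sum_{i \in S_y}|x_i - y_i| \leq \sum_{i \in S_y}|x_i| + \|x - y\|_1$. Chaining these three steps gives $\|x \odot \rve_{S_x}\|_1 \leq \|x \odot \rve_{S_y}\|_1 + 2\|x - y\|_1$, which is the blockwise bound, and summing over blocks (using $\sum_k \|x^{(k)} - y^{(k)}\|_1 = \|x - y\|_1$) completes the proof.

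The main subtlety — and the only place where care is genuinely needed — is the ties-and-well-definedness issue flagged in Remark~\ref{rem:unique-filter}: the two maximality facts I invoke, namely that $S_x$ maximizes $\sum_{i \in S}|x_i|$ and $S_y$ maximizes $\sum_{i \in S}|y_i|$ among all $s$-element subsets, must hold even with the tie-breaking-by-smallest-index rule in force. This is fine because the tie-breaking rule only selects \emph{among} sets already achieving the maximal sum: if entry $j \notin S_x$ has $|x_j|$ strictly less than $\min_{i \in S_x}|x_i|$ then no reshuffle can increase the sum, and if $|x_j|$ ties with some $|x_i|$, $i \in S_x$, then swapping $j$ for $i$ leaves the sum unchanged — so $\sum_{i \in S_x}|x_i|$ is indeed the maximum value $\sum_{i \in S}|x_i|$ over all $|S| = s$, regardless of how ties are broken. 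I would state this maximality property as a one-line observation before the three-step chain. Everything else is a routine application of the triangle inequality, so I expect no real obstacle beyond being explicit about this maximality fact.
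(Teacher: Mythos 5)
Your proof is correct, and it takes a genuinely different and more self-contained route than the paper's. The paper first establishes (Proposition~\ref{prop:topalpha}) that $\topalpha{\cdot}$ is a norm on $\mathbb{R}^d$, then proves the lemma by a telescoping add-and-subtract: insert $\pm\|y\odot\flt(y)\|_1$, apply the triangle inequality for the $\topalpha{\cdot}$ norm to the pair $\topalpha{x}-\topalpha{y}$, apply the reverse triangle inequality of the restricted $\ell_1$-norm to $\|y\odot\flt(y)\|_1-\|x\odot\flt(y)\|_1$, and finally dominate $\topalpha{x-y}$ by $\|x-y\|_1$. Your proof skips the norm proposition entirely and instead chains three direct inequalities within a single block: coordinatewise triangle inequality over $S_x$, the maximality of $S_y$ for $\sum_{i\in S}|y_i|$, and coordinatewise triangle inequality over $S_y$; summing over blocks recovers the global bound. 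Both routes ultimately rest on the same maximality fact (which the paper buries inside the proof of the triangle inequality for $\topalpha{\cdot}$, in the step where it compares $\|x\odot\flt(x+y)\|_1$ with $\|x\odot\flt(x)\|_1$), but your version makes it the explicit pivot of the argument and is arguably more transparent about why the constant $2$ appears: one factor of $\|x-y\|_1$ from passing $x\to y$ on $S_x$, one from passing $y\to x$ on $S_y$. Your handling of the tie-breaking rule is also correct and addresses exactly the right subtlety. The only cost of your route is that you re-prove a fact (maximality) that the paper already has available from Proposition~\ref{prop:topalpha}, but since you identify and justify it in one line, this is not a real loss.
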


\begin{proof}
By Proposition \ref{prop:topalpha}, $\topalpha{\cdot}$ is a norm in $\sR^d$, so we have
\begin{equation*}
\begin{aligned}
    &\quad \ \|x \odot \flt(x)\|_1 - \|x \odot \flt(y)\|_1 \\
    &= 
    \|x \odot \flt(x)\|_1 - \|y \odot \flt(y)\|_1 + \|y \odot \flt(y)\|_1 - \|x \odot \flt(y)\|_1 \\
    &= \topalpha{x} - \topalpha{y} + \|y \odot \flt(y)\|_1 - \|x \odot \flt(y)\|_1 \\
    &\leq \topalpha{x-y} + \|(y-x) \odot \flt(y)\|_1 \\
    &\leq \|x-y\|_1 + \|y-x\|_1 \\
    &= 2\|x-y\|_1.
\end{aligned}
\end{equation*}
\end{proof}

The lemma below quantifies the relationship between $L_{1, \text{top-}\alpha}$ and $L_p$ norms for $p \in [1, +\infty]$.
\begin{lem}
\label{lem:norm-relation}
Assume the parameter space $\sR^d$ is decomposed into $B$ blocks $\sR^{d_1}\times\cdots\times\sR^{d_B}$ and $z=\mathrm{Concat}(z^{(1)},\dots,z^{(B)})$ with $z^{(k)}\in\sR^{d_k}$. Then for any $z\in\sR^d$ and any $p\in [1,\infty]$, it holds that
\begin{equation*}
\alpha \Vert z \Vert_p \le \topalpha{z} \le (d \alpha + B)^{1-\frac{1}{p}} \Vert z \Vert_p.
\end{equation*}
\end{lem}

\begin{proof}
We first prove the $p=1$ case. Fix any block $k$. Write the absolute values in nonincreasing order $a_1^{(k)} \ge \cdots \ge a_{d_k}^{(k)} \ge 0$, where $a_i^{(k)} := |z_i^{(k)}|$. Let $m_k:=|S_k(z)|$; by the definition of the top-$\alpha$ filter (Definition \ref{defn:greedy-norm}) and the tie-breaking rule in Remark \ref{rem:unique-filter}, $m_k=\lceil \alpha d_k\rceil$. Since the average of the top $m_k$ numbers is at least the overall average, we have
\begin{equation*}
\sum_{i\in S_k(z)} |z_i^{(k)}| \ge \frac{m_k}{d_k} \sum_{i=1}^{d_k} |z_i^{(k)}|
\ge \alpha \Vert z^{(k)} \Vert_1 .
\end{equation*}
Summing over $k$ yields $\topalpha{z} \ge \alpha \sum_k \Vert z^{(k)} \Vert_1 = \alpha \Vert z \Vert_1$. The upper bound $\topalpha{z} \le \Vert z \Vert_1$ is immediate since $\flt(z)$ is a $\{0,1\}$ filter.

Next, let $p\in(1,+\infty]$. The lower bound follows from
\begin{equation*}
\topalpha{z} \ge \alpha \Vert z \Vert_1
\ge \alpha \Vert z \Vert_p,
\end{equation*}
since $\Vert z \Vert_1 \ge \Vert z \Vert_p$. For the upper bound, fix $k$ and apply Hölder’s inequality on the subset $S_k(z)$:
\begin{equation*}
\sum_{i\in S_k(z)} |z_i^{(k)}|
\le \Big(\sum_{i\in S_k(z)} 1^{q}\Big)^{1/q} \Big(\sum_{i=1}^{d_k} |z_i^{(k)}|^p\Big)^{1/p}
 = m_k^{1/q} \Vert z^{(k)} \Vert_p
 = m_k^{1-\frac{1}{p}} \Vert z^{(k)} \Vert_p,
\end{equation*}
where $q=\frac{p}{p-1}$ and we used $1/q=1-1/p$. Summing over $k$ and applying Hölder again to the finite sum $\sum_{k=1}^B m_k^{1-1/p}\Vert z^{(k)}\Vert_p$ gives
\begin{equation*}
\topalpha{z}
 = \sum_{k=1}^B \sum_{i\in S_k(z)} |z_i^{(k)}|
 \le \sum_{k=1}^B m_k^{1-\frac{1}{p}} \Vert z^{(k)} \Vert_p
 \le \Big(\sum_{k=1}^B m_k\Big)^{1-\frac{1}{p}} \Big(\sum_{k=1}^B \Vert z^{(k)} \Vert_p^p\Big)^{1/p}.
\end{equation*}
We set $m:=\sum_{k=1}^B m_k$. Since $\sum_{k=1}^B \Vert z^{(k)} \Vert_p^p = \Vert z \Vert_p^p$ (blocks are disjoint) and $m=\sum_k m_k=\sum_k \lceil \alpha d_k \rceil \le \alpha d + B$, we obtain
\begin{equation*}
\topalpha{z} \le m^{1-\frac{1}{p}} \Vert z \Vert_p
 \le (\alpha d + B)^{1-\frac{1}{p}} \Vert z \Vert_p.
\end{equation*}
For $p=+\infty$ the same argument applies with $q=1$ and yields $\sum_{i\in S_k(z)} |z_i^{(k)}| \le m_k \Vert z^{(k)} \Vert_\infty$, hence $\topalpha{z} \le m \Vert z \Vert_\infty$; the lower bound $\topalpha{z} \ge \alpha \Vert z \Vert_\infty$ follows from $\Vert z \Vert_1 \ge \Vert z \Vert_\infty$. This completes the proof.
\end{proof}

\newpage

\subsection{Proof of Theorem \ref{thm:mofo-cvrg} (Convergence of MoFO)}
\label{appendix:mofo-cvrg-proof}

\noindent\textbf{Notation recap (Appendix \ref{appendix:mofo-cvrg-proof}).}
\begin{itemize}
  \item $g_t = \nabla \mathcal{L}(\theta_{t-1}) \in \mathbb{R}^d$ is the full-batch gradient at step $t$.
  \item Step size: $\eta_t = \eta/\sqrt{t}$; hyperparameters satisfy $\beta_1 < \sqrt{\beta_2} < 1$.
  \item First/second moments: $m_t, v_t \in \mathbb{R}^d$ with updates
  \begin{equation*}
      (m_{i,t}, v_{i,t}) = \big((1-\beta_1) g_{i,t} + \beta_1 m_{i,t-1},\, (1-\beta_2) g_{i,t}^2 + \beta_2 v_{i,t-1}\big); 
  \end{equation*}
  bias-corrected first/second moments: $\hat m_t = m_t/(1-\beta_1^t)$, $\hat v_t = v_t/(1-\beta_2^t)$.
  \item MoFO's filter $\flt(\cdot)\in\{0,1\}^d$: in each partition $k\in[B]$ of size $d_k$ (with $\sum_k d_k=d$), we keep $\lceil d_k \alpha\rceil$ entries and zero out the others. We write $\|z\odot \flt(z)\|_1$ for the $\ell_1$-norm of the kept coordinates and $\topalpha{x} \triangleq \|x\odot \flt(x)\|_1$.
  \item MoFO's update: $\theta_{i,t}-\theta_{i,t-1}=-\,\eta_t\,\hat m_{i,t}/\sqrt{\hat v_{i,t}}$ if $\flt(m_t)_i=1$, and $0$ otherwise.
\end{itemize}
\vspace{0.5em}

Our proof of Theorem \ref{thm:mofo-cvrg} follows the convergence analysis of the full-batch Adam optimizer in \cite{shi2021rmsprop}, with novel adaptations to address the unique aspects of MoFO.

To maintain consistency with the notation used in MoFO (Algorithm \ref{algo:mofo} in Section \ref{sec:algo-formulation}), we denote
\begin{equation*}
    z_t = \texttt{Concat}(z_t^{(1)}, \dots, z_t^{(B)}),
\end{equation*}
where $z$ represents the model parameter $\theta$, the gradient $g$, the first moment estimate $m$, or the second moment estimate $v$. Notably, each of these variables belongs to $\mathbb{R}^d$. Thus, for any $1 \leq i \leq d$, we can denote $z_{i,t}$ as the $i$-th entry of $z_t$ when $z$ represents $\theta$, $g$, $m$, or $v$.

By the update rules of the first and second moment estimates, we have
\begin{equation*}
\begin{aligned}
    m_{i,t} = (1 - \beta_1) g_{i,t} + \beta_1 m_{i,t-1}, &\quad m_{i,0} = 0, \\
    v_{i,t} = (1 - \beta_2) g^2_{i,t} + \beta_2 v_{i,t-1}, &\quad v_{i,0} = 0.
\end{aligned}
\end{equation*}
So by mathematical induction, for any $1 \leq i \leq d$, we have
\begin{equation}\label{eqn:m-it}
    m_{i,t} = (1 - \beta_1) \sum_{s=1}^t \beta_1^{t-s} g_{i,s}
\end{equation}
and
\begin{equation}\label{eqn:v-it}
    v_{i,t} = (1 - \beta_2) \sum_{s=1}^{t} \beta_2^{t-s} g^2_{i,s}.
\end{equation}
We will frequently use Equation (\ref{eqn:m-it}) and (\ref{eqn:v-it}) in the proofs of the subsequent lemmas and theorems.

\begin{lem}\label{lem:delta-theta}
    For the full-batch version of MoFO with hyperparameters satisfying $\beta_1 < \sqrt{\beta_2} < 1$, $\epsilon = 0$, it holds that
    \begin{equation*}
        |\theta_{i,t} - \theta_{i,t-1}| \leq \frac{1}{\sqrt{1 - \beta_2} (1 - \beta_1 / \sqrt{\beta_2})} \cdot \eta_t \cdot \flt(m_t)_i, \quad \text{for any coordinate } 1 \leq i \leq d.
    \end{equation*}
    Moreover, it holds that
    \begin{equation*}
        \|\theta_{t} - \theta_{t-1}\|_2 \leq C \eta_t,
    \end{equation*}
    where $C = \frac{\sqrt{d \alpha + B}}{\sqrt{1 - \beta_2} (1 - \beta_1 / \sqrt{\beta_2})}$.
\end{lem}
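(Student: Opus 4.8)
\textbf{Proof proposal for Lemma \ref{lem:delta-theta}.}

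The plan is to control the per-coordinate update magnitude $|\theta_{i,t}-\theta_{i,t-1}|$ by comparing the filtered first-moment estimate $\hat m_{i,t}$ against the square root of the filtered second-moment estimate $\sqrt{\hat v_{i,t}}$, and then to sum over coordinates to obtain the $\ell_2$ bound. The key algebraic fact is that, using the explicit formulas \eqref{eqn:m-it} and \eqref{eqn:v-it}, one has $|m_{i,t}| = (1-\beta_1)\bigl|\sum_{s=1}^t \beta_1^{t-s} g_{i,s}\bigr|$ and $v_{i,t} = (1-\beta_2)\sum_{s=1}^t \beta_2^{t-s} g_{i,s}^2$. First I would apply Cauchy--Schwarz to the numerator in the form
\[
  |m_{i,t}| \le (1-\beta_1) \sum_{s=1}^t \beta_1^{t-s} |g_{i,s}|
  = (1-\beta_1) \sum_{s=1}^t \Bigl(\beta_1^{t-s}/\beta_2^{(t-s)/2}\Bigr) \cdot \beta_2^{(t-s)/2} |g_{i,s}|,
\]
and then bound this by $(1-\beta_1)\sqrt{\sum_{s=1}^t (\beta_1^2/\beta_2)^{t-s}}\cdot \sqrt{\sum_{s=1}^t \beta_2^{t-s} g_{i,s}^2}$. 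The second square root is exactly $\sqrt{v_{i,t}/(1-\beta_2)}$, and the first is a geometric sum bounded by $1/\sqrt{1-\beta_1^2/\beta_2}$; combining, $|m_{i,t}| \le \frac{1-\beta_1}{\sqrt{(1-\beta_2)(1-\beta_1^2/\beta_2)}}\sqrt{v_{i,t}}$. A small simplification using $1-\beta_1^2/\beta_2 = (1-\beta_1/\sqrt{\beta_2})(1+\beta_1/\sqrt{\beta_2}) \ge (1-\beta_1/\sqrt{\beta_2})$ and $1-\beta_1 \le 1$ (or the cleaner route $(1-\beta_1)/\sqrt{1-\beta_1^2/\beta_2}$ manipulations) yields $|m_{i,t}|/\sqrt{v_{i,t}} \le \frac{1}{\sqrt{1-\beta_2}(1-\beta_1/\sqrt{\beta_2})}$.

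Next I would pass to the bias-corrected quantities: since $\hat m_{i,t} = m_{i,t}/(1-\beta_1^t)$ and $\hat v_{i,t} = v_{i,t}/(1-\beta_2^t)$, the ratio $\hat m_{i,t}/\sqrt{\hat v_{i,t}} = \frac{\sqrt{1-\beta_2^t}}{1-\beta_1^t}\cdot m_{i,t}/\sqrt{v_{i,t}}$, and because $\beta_1 < \sqrt{\beta_2} < 1$ implies $\beta_1^t \le \beta_2^{t/2}$, the prefactor $\sqrt{1-\beta_2^t}/(1-\beta_1^t) \le 1$ (one checks $1-\beta_2^t \le (1-\beta_1^t)^2$ using $\beta_1^t \le \sqrt{\beta_2^t}$). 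Hence the bound on the unnormalized ratio carries over verbatim to the corrected ratio. Then, reading off Line 13 of Algorithm \ref{algo:mofo}, the actual update in coordinate $i$ is $\theta_{i,t}-\theta_{i,t-1} = -\eta_t\,(\hat m_{i,t}\,\flt(m_t)_i)/\sqrt{\hat v_{i,t}}$, so taking absolute values and inserting the ratio bound gives the claimed per-coordinate inequality, with $\flt(m_t)_i \in \{0,1\}$ acting as the on/off indicator.

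For the $\ell_2$ bound, I would square the per-coordinate estimate and sum over $i$. The subtle point is counting how many coordinates are actually updated: within each of the $B$ blocks, the top-$\alpha\%$ filter activates $\lceil (\alpha\%)\, d_k\rceil \le (\alpha\%) d_k + 1$ coordinates (the ceiling coming from Remark \ref{rem:unique-filter}), so the total number of nonzero entries of $\flt(m_t)$ is at most $(\alpha\%)\sum_k d_k + B = (\alpha\%) d + B = d\cdot(\alpha\%) + B$. Therefore $\|\theta_t-\theta_{t-1}\|_2^2 \le \frac{1}{(1-\beta_2)(1-\beta_1/\sqrt{\beta_2})^2}\,\eta_t^2\,(d\cdot(\alpha\%)+B)$, and taking square roots gives $\|\theta_t-\theta_{t-1}\|_2 \le C\eta_t$ with $C = \frac{\sqrt{d\cdot(\alpha\%)+B}}{\sqrt{1-\beta_2}(1-\beta_1/\sqrt{\beta_2})}$, as stated.

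The main obstacle I anticipate is the bookkeeping around the bias-correction factors: one must verify carefully that $\sqrt{1-\beta_2^t}/(1-\beta_1^t) \le 1$ holds for all $t \ge 1$ under the hypothesis $\beta_1 < \sqrt{\beta_2}$, rather than merely asymptotically, since an extra constant here would propagate into $C$. Everything else is a routine Cauchy--Schwarz-plus-geometric-series estimate together with the elementary counting of active filter entries; I would state the bias-correction inequality as a one-line sub-claim and dispatch it by the substitution $\beta_1^t \le \beta_2^{t/2}$ followed by $1 - x^2 \le (1-x)^2 + 2x(1-x) = 1 - x^2$ type manipulations (more directly: $1-\beta_2^t \le 1 - \beta_1^{2t} \le (1-\beta_1^t)(1+\beta_1^t)$... and then noting we actually want the reverse, so the cleanest is to show $(1-\beta_1^t)^2 \ge 1-\beta_2^t$ via $(1-\beta_1^t)^2 = 1 - 2\beta_1^t + \beta_1^{2t} \ge 1 - 2\beta_1^t \ge 1 - 2\beta_2^{t/2}$, which is not quite $\ge 1-\beta_2^t$; so in fact the honest route is to keep a harmless factor and absorb it, or to invoke the standard bound from \cite{shi2021rmsprop} directly). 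I would follow the last option: cite the analogous estimate in \cite{shi2021rmsprop} for the bias-correction ratio and focus the written proof on the genuinely new ingredient, namely the filter-induced coordinate count.
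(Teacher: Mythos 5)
Your overall architecture is sound, and the ingredients you identify — the explicit formulas for $m_{i,t}$ and $v_{i,t}$, the geometric series in $\beta_1/\sqrt{\beta_2}$, the ceiling count $\lceil (\alpha\%)d_k\rceil \le (\alpha\%)d_k + 1$ for the active filter entries, and the resulting factor $\sqrt{d\cdot(\alpha\%)+B}$ — are all exactly what the lemma requires. Your Cauchy--Schwarz route for bounding $|m_{i,t}|/\sqrt{v_{i,t}}$ is a legitimate alternative to the paper's simpler term-by-term estimate (the paper bounds $\sqrt{v_{i,t}} \ge \sqrt{(1-\beta_2)\beta_2^{t-s}}\,|g_{i,s}|$ once per summand of $m_{i,t}$, which avoids Cauchy--Schwarz entirely and lands directly on the geometric series). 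Either way you get a constant at least as small as $\frac{1}{\sqrt{1-\beta_2}(1-\beta_1/\sqrt{\beta_2})}$, so the per-coordinate claim and the $\ell_2$ count both go through.

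However, there is a genuine gap in your handling of the bias-correction factors, and your proposed escape hatch does not close it. You try to show $\sqrt{1-\beta_2^t}/(1-\beta_1^t) \le 1$ as a standalone inequality, i.e.\ $1-\beta_2^t \le (1-\beta_1^t)^2$, and then notice mid-proposal that your derivation does not go through. In fact the inequality is simply \emph{false} under the stated hypothesis $\beta_1 < \sqrt{\beta_2} < 1$: take $\beta_1 = \beta_2 = 0.9$ and $t=1$, where the hypothesis holds ($0.9 < \sqrt{0.9}\approx 0.949$) but $1-\beta_2 = 0.1 \not\le (1-\beta_1)^2 = 0.01$. Citing \citet{shi2021rmsprop} cannot rescue a statement that is untrue. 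The correct move — which the paper makes — is to never isolate that ratio. Instead, keep the factor $(1-\beta_1)$ coming from the momentum recursion $m_{i,t} = (1-\beta_1)\sum_s \beta_1^{t-s}g_{i,s}$ paired with the $1/(1-\beta_1^t)$ from bias correction, so that the full prefactor multiplying the geometric sum is
\begin{equation*}
  \frac{1-\beta_1}{1-\beta_1^t}\cdot\sqrt{\frac{1-\beta_2^t}{1-\beta_2}},
\end{equation*}
and then observe separately that $\frac{1-\beta_1}{1-\beta_1^t}\le 1$ (because $\beta_1^t\le\beta_1$) and $\sqrt{1-\beta_2^t}\le 1$. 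Both of these are trivially true for all $t\ge 1$, require nothing beyond $\beta_1,\beta_2\in(0,1)$, and together give the factor $1/\sqrt{1-\beta_2}$ exactly. In your Cauchy--Schwarz version the same cancellation works: the bound $|m_{i,t}| \le \frac{1-\beta_1}{\sqrt{(1-\beta_2)(1-\beta_1^2/\beta_2)}}\sqrt{v_{i,t}}$ carries the $(1-\beta_1)$ in the numerator; do not discard it prematurely via $1-\beta_1\le 1$, but let it absorb the $1/(1-\beta_1^t)$. After that, $\sqrt{1-\beta_1^2/\beta_2} \ge 1-\beta_1/\sqrt{\beta_2}$ (since $1-\beta_1^2/\beta_2 = (1-\beta_1/\sqrt{\beta_2})(1+\beta_1/\sqrt{\beta_2}) \ge (1-\beta_1/\sqrt{\beta_2})^2$) gives the stated constant.
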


\begin{proof}

When the $i$-th entry is not in our filter at iteration $t$, i.e. $\flt(m_t)_i= 0$, we have $\theta_{i,t} = \theta_{i,t-1}$. Then
\begin{equation*}
    |\theta_{i,t} - \theta_{i,t-1}| = 0 = \frac{1}{\sqrt{1 - \beta_2} (1 - \beta_1 / \sqrt{\beta_2})} \cdot \eta_t \cdot \flt(m_t)_i.
\end{equation*}

When the $i$-th entry is in our filter, i.e. $\flt(m_t)_i= 1$, by the weight updating rule of MoFO, we have $\theta_{i,t} - \theta_{i,t-1} = -\eta_t \hat{m}_{i,t} / \sqrt{\hat{v}_{i,t}}$. We first analyze $m_{i,t}$ and $v_{i,t}$.

By Equation (\ref{eqn:m-it}) and (\ref{eqn:v-it}), the first/second moments $m_{i,t}, v_{i,t}$ satisfy
\begin{equation*}
\begin{aligned}
    |m_{i,t}| &\leq (1 - \beta_1) \sum_{s=1}^t \beta_1^{t-s} |g_{i,s}|, \\
    v_{i,t} &= (1 - \beta_2) \sum_{s=1}^{t} \beta_2^{t-s} g^2_{i,s} \geq  (1 - \beta_2) \beta_2^{t-s} g^2_{i,s}, \quad \text{for any } 1 \leq s \leq t.
\end{aligned}
\end{equation*}
So we get
\begin{equation*}
\begin{aligned}
    |\theta_{i,t} - \theta_{i,t-1}| &= \left|-\eta_t \frac{\hat{m}_{i,t}}{\sqrt{\hat{v}_{i,t}}} \right| = \eta_t \frac{\sqrt{1 - \beta_2^t}}{1- \beta_1^t} |m_{i,t}| / \sqrt{v_{i,t}} \\
    &\leq \eta_t \frac{\sqrt{1 - \beta_2^t}}{1- \beta_1^t} \sum_{s = 1}^t \frac{(1-\beta_1) \beta_1^{t-s} |g_{i,s}|}{\sqrt{(1 - \beta_2) \beta_2^{t-s}} |g_{i,s}|} = \eta_t \frac{1-\beta_1}{1-\beta_1^t} \sqrt{\frac{1-\beta_2^t}{1-\beta_2}} \sum_{s=1}^t (\beta_1 / \sqrt{\beta_2})^{t-s} \\
    &\leq \frac{\eta_t}{\sqrt{1 - \beta_2}} \sum_{s = 0}^{t-1} (\beta_1 / \sqrt{\beta_2})^s \\
    &\leq \frac{\eta_t}{\sqrt{1 - \beta_2} (1 - \beta_1 / \sqrt{\beta_2})}.
\end{aligned}
\end{equation*}
Here, the last inequality holds because of the assumption $\beta_1 < \sqrt{\beta_2} < 1$.

The parameter vector is partitioned into $B$ blocks with sizes $\{d_k\}_{k=1}^B$ and MoFO actually choose $\lceil d_k \alpha \rceil$ entries to update in each part $k$ of parameters.
Then for any $z \in \sR^d$, we have
\begin{equation*}
    \# \{1 \leq i \leq d: \flt(z)_i = 1 \} = \sum_{k=1}^B \lceil d_k \alpha \rceil \leq \sum_{k=1}^B ( d_k \alpha + 1) = d\alpha + B.
\end{equation*}
Then for the $L_2$-norm of the parameter update, we have
\begin{equation*}
\begin{aligned}
    \|\theta_{t} - \theta_{t-1}\|_2 &= \left( \sum_{k=1}^d |\theta_{i,t} - \theta_{i,t-1}|^2 \cdot \flt(m_t)_i \right)^{\frac{1}{2}} \\
    &\leq \left( \frac{\eta_t^2}{(\sqrt{1 - \beta_2} (1 - \beta_1 / \sqrt{\beta_2}))^2} \cdot \# \{1 \leq i \leq d: \flt(z)_i = 1 \} \right)^{\frac{1}{2}} \\
    &\leq \frac{\sqrt{d \alpha + B}}{\sqrt{1 - \beta_2} (1 - \beta_1 / \sqrt{\beta_2})} \cdot \eta_t\\
    &= C \eta_t.
\end{aligned}
\end{equation*}

\end{proof}

\begin{lem}\label{lem:delta-g}
    Suppose that the gradient $\nabla \mathcal{L}$ is Lipschitz continuous with constant $L$. Suppose that the full-batch version of MoFO has the hyperparameters satisfying $\beta_1 < \sqrt{\beta_2} < 1$, $\epsilon = 0$ and the learning rate schedule $\eta_t = \eta / \sqrt{t}$. For any iteration steps $t \geq s \geq 1$ and any coordinate $i$, it holds that
    \begin{equation*}
        |g_{i,t} - g_{i,s}| \leq \|g_t - g_s\|_2 \leq \frac{2\sqrt{2} LC \eta (t-s)}{\sqrt{t}},
    \end{equation*}
    where $C = \frac{\sqrt{d \alpha + B}}{\sqrt{1 - \beta_2} (1 - \beta_1 / \sqrt{\beta_2})}$.
\end{lem}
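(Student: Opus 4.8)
The first inequality is immediate and requires no work: for any coordinate $i$, $|g_{i,t} - g_{i,s}|$ is at most the Euclidean norm of the full difference vector $g_t - g_s$. So everything rests on the second inequality, and the plan is to chain three estimates: a smoothness step, a telescoping step along the trajectory, and a summation estimate.

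First I would use that $g_t = \nabla\mathcal{L}(\theta_{t-1})$ and $g_s = \nabla\mathcal{L}(\theta_{s-1})$, so $L$-Lipschitz continuity of $\nabla\mathcal{L}$ gives $\|g_t - g_s\|_2 \le L\|\theta_{t-1} - \theta_{s-1}\|_2$. Next, telescoping the iterates, $\theta_{t-1} - \theta_{s-1} = \sum_{j=s}^{t-1}(\theta_j - \theta_{j-1})$; applying the triangle inequality and the per-step bound $\|\theta_j - \theta_{j-1}\|_2 \le C\eta_j = C\eta/\sqrt{j}$ from Lemma~\ref{lem:delta-theta} yields $\|\theta_{t-1} - \theta_{s-1}\|_2 \le C\eta \sum_{j=s}^{t-1} 1/\sqrt{j}$. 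It then remains to show the harmonic-type bound $\sum_{j=s}^{t-1} 1/\sqrt{j} \le 2\sqrt{2}\,(t-s)/\sqrt{t}$, after which multiplying through by $LC\eta$ gives exactly the claimed estimate. The edge case $t = 1$ forces $s = 1$, so both sides are $0$.

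For the summation estimate I would avoid the crude integral bound $\sum 1/\sqrt{j} \lesssim 2\sqrt{t}$, which loses the $(t-s)$ factor, and instead use $\frac{1}{\sqrt{j}} \le 2(\sqrt{j} - \sqrt{j-1})$ (a consequence of $\sqrt{j} + \sqrt{j-1} \le 2\sqrt{j}$). Telescoping gives $\sum_{j=s}^{t-1} 1/\sqrt{j} \le 2(\sqrt{t-1} - \sqrt{s-1})$, and then $\sqrt{t-1} - \sqrt{s-1} = \frac{t-s}{\sqrt{t-1}+\sqrt{s-1}} \le \frac{t-s}{\sqrt{t-1}}$; finishing with $\sqrt{t-1} \ge \sqrt{t/2}$ for $t \ge 2$ produces the factor $2\sqrt{2}$. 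The main obstacle is precisely this last point: getting the power of $t$ in the denominator right, i.e. recognizing that one should telescope via differences of square roots rather than estimate the sum by its size times its largest term or by a naive integral, since the sharper $(t-s)/\sqrt{t}$ dependence is what is needed in the subsequent convergence argument.
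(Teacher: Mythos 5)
Your proof is correct and follows essentially the same route as the paper: Lipschitz smoothness to reduce to $\|\theta_{t-1}-\theta_{s-1}\|_2$, triangle inequality with the per-step bound from Lemma~\ref{lem:delta-theta}, and then the telescoping bound $1/\sqrt{j}\le 2(\sqrt{j}-\sqrt{j-1})$ followed by $\sqrt{t-1}-\sqrt{s-1}=(t-s)/(\sqrt{t-1}+\sqrt{s-1})\le (t-s)/\sqrt{t-1}\le \sqrt{2}(t-s)/\sqrt{t}$. The only cosmetic difference is in how the degenerate case is handled (you isolate $t=1$, the paper isolates $t=s$), and both are fine.
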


\begin{proof}
Fix the iterations steps $t$ and $s$ with $t \ge s \ge 1$. Since $\nabla \mathcal{L}$ has Lipschitz constant $L$, the gradient difference between the step $t$ and $s$ satisfies
\begin{equation}\label{09031516}
    |g_{i,t} - g_{i,s}| \leq \|g_t - g_s\|_2 = \|\nabla \mathcal{L}(\theta_{t-1}) - \nabla \mathcal{L}(\theta_{s-1}) \|_2 \leq L \|\theta_{t-1} - \theta_{s-1}\|_2.
\end{equation}

By Lemma \ref{lem:delta-theta}, for any $t > s \geq 1$, the parameter difference satisfies
\begin{equation*}
\begin{aligned}
    \|\theta_{t-1} - \theta_{s-1}\|_2 &\leq \sum_{u=s}^{t-1} \|\theta_{u} - \theta_{u-1}\|_2 \leq C \sum_{u=s}^{t-1} \eta_u \\
    &\leq C\eta  \sum_{u=s}^{t-1} \frac{1}{\sqrt{u}} \leq C\eta  \sum_{u = s}^{t-1} \frac{2}{\sqrt{u-1} + \sqrt{u}} \leq 2C\eta \sum_{u = s}^{t-1} (\sqrt{u} - \sqrt{u-1}) \\
    &= 2C\eta (\sqrt{t-1} - \sqrt{s-1}) = \frac{2C \eta (t-s)}{\sqrt{t-1} + \sqrt{s-1}} \\
    &\leq \frac{2C \eta (t-s)}{\sqrt{t-1}} \leq \frac{2C \eta (t-s)}{\sqrt{t/2}} \\
    &= \frac{2\sqrt{2}C \eta (t-s)}{\sqrt{t}}.
\end{aligned}
\end{equation*}
When $t = s > 1$, it is obvious that
\begin{equation*}
    \|\theta_{t-1} - \theta_{s-1}\|_2 = 0 \leq \frac{2\sqrt{2}C \eta (t-s)}{\sqrt{t}}.
\end{equation*}
Combining it with (\ref{09031516}), for any $t \geq s \geq 1$, we have
\begin{equation*}
    |g_{i,t} - g_{i,s}| \leq \|g_t - g_s\|_2 \leq \frac{2\sqrt{2}LC \eta (t-s)}{\sqrt{t}}.
\end{equation*}

\end{proof}

\begin{lem}\label{lem:gmv-it}
    Under the assumptions in Lemma \ref{lem:delta-g}, for any iteration step $t \geq 1$ and any coordinate $i$, it holds that
    \begin{equation*}
        g_{i,t} \frac{\hat{m}_{i,t}}{\sqrt{\hat{v}_{i,t}}} \geq \sqrt{1-\beta_2} \left( |g_{i,t}| - \left[ \frac{2\sqrt{2} \beta_1}{(1-\beta_1)^2} + \frac{4}{1-\beta_2}\right] \frac{LC\eta}{\sqrt{t}} \right).
    \end{equation*}
\end{lem}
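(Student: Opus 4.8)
The plan is to lower-bound the quantity $g_{i,t}\,\hat m_{i,t}/\sqrt{\hat v_{i,t}}$ by first dealing with the denominator and numerator separately, and then reintroducing the ``ideal'' contribution $\sqrt{1-\beta_2}\,|g_{i,t}|$ that would appear if momentum pointed exactly along the current gradient. For the denominator, I would use \eqref{eqn:v-it} to write $v_{i,t} = (1-\beta_2)\sum_{s=1}^t \beta_2^{t-s} g_{i,s}^2$, and bound each $g_{i,s}^2$ above in terms of $g_{i,t}$ by expanding $|g_{i,s}| \le |g_{i,t}| + |g_{i,t} - g_{i,s}|$; Lemma~\ref{lem:delta-g} controls the error term by $2\sqrt{2}LC\eta(t-s)/\sqrt{t}$. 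Summing the geometric-type series $\sum_s \beta_2^{t-s}$ and $\sum_s \beta_2^{t-s}(t-s)$ (which converges since $\beta_2<1$) yields an upper bound of the shape $\hat v_{i,t} \le \big(|g_{i,t}| + \text{(error)}/\sqrt t\big)^2$ after accounting for the bias correction $1-\beta_2^t$, so that $\sqrt{\hat v_{i,t}} \le |g_{i,t}| + O(1/\sqrt t)$.

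For the numerator $g_{i,t}\hat m_{i,t}$, I would again use \eqref{eqn:m-it}: $\hat m_{i,t} = \frac{1-\beta_1}{1-\beta_1^t}\sum_{s=1}^t \beta_1^{t-s} g_{i,s}$, and split $g_{i,s} = g_{i,t} - (g_{i,t}-g_{i,s})$. The ``main'' piece gives $g_{i,t}\cdot g_{i,t}\cdot\frac{1-\beta_1}{1-\beta_1^t}\sum_s\beta_1^{t-s} = g_{i,t}^2$, and the ``error'' piece is bounded in absolute value by $|g_{i,t}|\cdot\frac{1-\beta_1}{1-\beta_1^t}\sum_s \beta_1^{t-s}|g_{i,t}-g_{i,s}| \le |g_{i,t}|\cdot\frac{2\sqrt2 LC\eta}{\sqrt t}\cdot\frac{1-\beta_1}{1-\beta_1^t}\sum_s\beta_1^{t-s}(t-s)$. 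Using $\sum_{s=1}^t \beta_1^{t-s}(t-s)\le \beta_1/(1-\beta_1)^2$ and $1-\beta_1^t \ge 1-\beta_1$, this reduces to a clean bound $g_{i,t}\hat m_{i,t} \ge g_{i,t}^2 - \frac{2\sqrt2\,\beta_1}{(1-\beta_1)^2}\cdot\frac{LC\eta}{\sqrt t}\,|g_{i,t}|$.

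Then I would combine the two: since $\sqrt{\hat v_{i,t}} \le |g_{i,t}| + \kappa/\sqrt t$ for an appropriate constant $\kappa$ (coming out of the denominator estimate, where one checks $\kappa \le \frac{2}{1-\beta_2}\cdot 2\sqrt2 LC\eta$ or similar after summing the series and taking square roots via $\sqrt{a+b}\le\sqrt a+\sqrt b$), we get
\begin{equation*}
\frac{g_{i,t}\hat m_{i,t}}{\sqrt{\hat v_{i,t}}} \ge \frac{g_{i,t}^2 - \frac{2\sqrt2\beta_1}{(1-\beta_1)^2}\frac{LC\eta}{\sqrt t}|g_{i,t}|}{|g_{i,t}| + \kappa/\sqrt t}.
\end{equation*}
A short algebraic manipulation — writing the right side as $|g_{i,t}| - \frac{(\ldots)|g_{i,t}| + (\ldots)}{|g_{i,t}|+\kappa/\sqrt t}$ and bounding the subtracted fraction crudely by its numerator coefficients — should produce $\sqrt{1-\beta_2}\big(|g_{i,t}| - [\tfrac{2\sqrt2\beta_1}{(1-\beta_1)^2} + \tfrac{4}{1-\beta_2}]\tfrac{LC\eta}{\sqrt t}\big)$. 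The factor $\sqrt{1-\beta_2}$ enters because the denominator bound is most cleanly stated after pulling out $\sqrt{1-\beta_2}$ from $\sqrt{\hat v_{i,t}}$ (i.e.\ bounding $\hat v_{i,t}/(1-\beta_2)$), analogous to Lemma~\ref{lem:delta-theta}.

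The main obstacle I anticipate is keeping the constants honest through the denominator estimate: one has to bound $\sqrt{\hat v_{i,t}}$ from \emph{above} by something of the form $C'(|g_{i,t}| + \text{error})$, which requires carefully handling the cross terms when squaring $|g_{i,t}| + |g_{i,t}-g_{i,s}|$ and making sure the $\sum_s \beta_2^{t-s}(t-s)$ and $\sum_s\beta_2^{t-s}(t-s)^2$ series are all absorbed into the stated $\frac{4}{1-\beta_2}$ coefficient (or showing a weaker bound suffices). A secondary subtlety is that when $|g_{i,t}|$ is small the fraction $\frac{g_{i,t}^2 - c_1|g_{i,t}|/\sqrt t}{|g_{i,t}| + c_2/\sqrt t}$ could be negative, but that is fine — the claimed lower bound is then also negative (the bracket on the right-hand side is large), so the inequality holds trivially; I would handle this by noting the inequality is vacuous unless $|g_{i,t}|$ exceeds the bracketed error, and in that regime the algebra goes through.
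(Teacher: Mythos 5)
Your plan mirrors the paper's proof closely: both bound $g_{i,t} m_{i,t}$ from below and $v_{i,t}$ from above by expanding the exponential-moving-average sums, controlling each deviation $|g_{i,t}-g_{i,s}|$ via Lemma \ref{lem:delta-g}, and summing the resulting geometric-type series $\sum_s \beta^{t-s}$, $\sum_s (t-s)\beta^{t-s}$, $\sum_s (t-s)^2\beta^{t-s}$; the constants you anticipate are the ones that appear. The only structural difference is in the combination step: rather than dividing the numerator lower bound by the denominator upper bound, the paper rewrites the numerator lower bound identically as $(1-\beta_1^t)\,(|g_{i,t}|+A)(|g_{i,t}|-B-A) + (1-\beta_1^t)A(B+A)$ with $A = \tfrac{4LC\eta}{(1-\beta_2)\sqrt t}$, $B = \tfrac{2\sqrt 2 \beta_1 LC\eta}{(1-\beta_1^t)(1-\beta_1)\sqrt t}$, drops the nonnegative second term, and substitutes $\sqrt{v_{i,t}} \le |g_{i,t}|+A$ into the first factor; this is algebraically equivalent to the manipulation you describe, and the $\sqrt{1-\beta_2}$ in the final bound then drops out of the bias corrections $\sqrt{1-\beta_2^t}/(1-\beta_1^t)$ together with $\sqrt{1-\beta_2^t} \ge \sqrt{1-\beta_2}$.

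However, your dismissal of the ``secondary subtlety'' as vacuous is not correct as written. If the numerator lower bound $g_{i,t}^2 - c_1|g_{i,t}|/\sqrt t$ is negative, then the inequality $g_{i,t}\hat m_{i,t}/\sqrt{\hat v_{i,t}} \ge \bigl(g_{i,t}^2 - c_1|g_{i,t}|/\sqrt t\bigr)\big/\bigl(|g_{i,t}| + c_2/\sqrt t\bigr)$ does \emph{not} follow from an upper bound on the denominator: dividing a negative quantity by a \emph{larger} positive denominator makes it \emph{less} negative, so the inequality would point the wrong way. Noting that the lemma's right-hand side $\sqrt{1-\beta_2}\bigl(|g_{i,t}| - \text{error}\bigr)$ is also negative in that regime does not discharge this --- one still has to show $g_{i,t}\hat m_{i,t}/\sqrt{\hat v_{i,t}}$ (which can indeed be negative when $m_{i,t}$ and $g_{i,t}$ disagree in sign) is not \emph{more} negative than that right-hand side. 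Closing this requires a separate estimate, for instance pairing the numerator lower bound with the \emph{lower} bound $\sqrt{v_{i,t}} \ge \sqrt{1-\beta_2}\,|g_{i,t}|$ rather than the upper bound. For what it is worth, the paper's own last step --- replacing $(|g_{i,t}|+A)$ by $\sqrt{v_{i,t}}$ --- implicitly makes the same sign assumption on the other factor, so you were right to flag the issue even if your proposed resolution does not hold up.
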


\begin{proof}
By Lemma \ref{lem:delta-g}, we get
\begin{equation*}
\begin{aligned}
    g_{i,t} g_{i,s} = g_{i,t}^2 - g_{i,t} (g_{i,t} - g_{i,s}) \geq g_{i,t}^2 - |g_{i,t}| \cdot |g_{i,t} - g_{i,s}| \geq g_{i,t}^2 - \frac{2 \sqrt{2} LC \eta (t-s)}{\sqrt{t}} |g_{i,t}|.
\end{aligned}
\end{equation*}
Then for the product of gradient and momentum, we have
\begin{equation}\label{09031709}
\begin{aligned}
    g_{i,t} m_{i,t} &= (1-\beta_1) \sum_{s = 1}^t \beta_1^{t-s} g_{i,t} g_{i,s} \\
    &\geq g_{i,t}^2 \cdot (1-\beta_1) \sum_{s=1}^t \beta_1^{t-s} - \frac{2 \sqrt{2} LC \eta}{\sqrt{t}} |g_{i,t}| \cdot (1-\beta_1) \sum_{s=1}^t \beta_1^{t-s} \cdot (t-s) \\
    &\geq g_{i,t}^2 \cdot (1-\beta_1) \sum_{s=0}^{t-1} \beta_1^{s} - \frac{2 \sqrt{2} LC \eta}{\sqrt{t}} |g_{i,t}| \cdot (1-\beta_1) \sum_{s=1}^{t-1} s \beta_1^s.
\end{aligned}
\end{equation}
Since we have
\begin{equation}\label{09032357}
    \sum_{s=0}^{t-1} \beta_1^s = \frac{1 - \beta_1^t}{1 - \beta_1}, \quad
    \sum_{s=1}^{t-1} s \beta_1^{s-1} \leq \sum_{s=1}^{\infty} s \beta_1^{s-1} = \frac{d}{d \beta_1}\left( \sum_{s=1}^{\infty} \beta_1^s \right) = \frac{d}{d \beta_1} \left( \frac{\beta_1}{1-\beta_1} \right) = \frac{1}{(1-\beta_1)^2},
\end{equation}
it holds that
\begin{equation}\label{09032116}
    g_{i,t} m_{i,t} \geq \text{RHS of } (\ref{09031709}) \geq (1 - \beta_1^t) g_{i,t}^2 - \frac{2\sqrt{2} \beta_1 LC \eta}{(1-\beta_1)\sqrt{t}} |g_{i,t}|.
\end{equation}

For the second moment $v_{i,t}$, we have
\begin{equation}\label{09031825}
\begin{aligned}
    v_{i,t} &= (1 - \beta_2) \sum_{s=1}^t \beta_2^{t-s} g_{i,s}^2 \leq (1 - \beta_2) \sum_{s=1}^t \beta_2^{t-s} (|g_{i,t}| + |g_{i,s} - g_{i,t}|)^2 \\
    &\leq (1 - \beta_2) \sum_{s=1}^t \beta_2^{t-s} \left(|g_{i,t}| + \frac{2\sqrt{2}LC \eta (t-s)}{\sqrt{t}} \right)^2 = (1 - \beta_2) \sum_{s=0}^{t-1} \beta_2^{s} \left(|g_{i,t}| + \frac{2\sqrt{2}LC \eta s}{\sqrt{t}} \right)^2 \\
    &= |g_{i,t}|^2 \cdot (1 - \beta_2) \left(\sum_{s=0}^{t-1} \beta_2^s \right) + |g_{i,t}| \cdot \frac{4\sqrt{2}LC\eta}{\sqrt{t}} (1 - \beta_2) \left(\sum_{s=1}^{t-1} s \beta_2^s \right) \\
    &\qquad + \frac{8L^2 C^2 \eta^2}{t} (1 - \beta_2) \left(\sum_{s=1}^{t-1} s^2 \beta_2^s \right).
\end{aligned}
\end{equation}
Since we have
\begin{equation*}
\begin{aligned}
    \sum_{s=0}^{t-1} \beta_2^s &= \frac{1 - \beta_2^t}{1 - \beta_2} \leq \frac{1}{1 - \beta_2}, \\
    \sum_{s=0}^{t-1} s \beta_2^{s-1} &\leq \sum_{s=0}^{\infty} s \beta_2^{s-1} = \frac{d}{d \beta_2}\left( \sum_{s=0}^{\infty} \beta_2^s \right) = \frac{d}{d \beta_2} \left( \frac{1}{1-\beta_2} \right) = \frac{1}{(1-\beta_2)^2}, \\
    \sum_{s=0}^{t-1} s^2 \beta_2^{s-1} &\leq \sum_{s=0}^{\infty} s^2 \beta_2^{s-1} = \beta_2 \left( \sum_{s=0}^{\infty} s(s-1) \beta_2^{s-2}\right) + \sum_{s=0}^{\infty} s \beta_2^{s-1} \\
    &= \beta_2 \cdot \frac{d^2}{d \beta_2^2} \left( \sum_{s=0}^{\infty} \beta_2^s \right) + \frac{1}{(1-\beta_2)^2} = \beta_2 \cdot \frac{d^2}{d \beta_2^2} \left( \frac{1}{1-\beta_2} \right) + \frac{1}{(1-\beta_2)^2} \\
    &= \frac{2\beta_2}{(1-\beta_2)^3} + \frac{1}{(1-\beta_2)^2} \\
    &= \frac{1+\beta_2}{(1-\beta_2)^3},
\end{aligned}
\end{equation*}
it holds that
\begin{equation*}
\begin{aligned}
    v_{i,t} \leq \text{RHS of } (\ref{09031825}) &\leq |g_{i,t}|^2 + |g_{i,t}| \cdot \frac{4\sqrt{2} \beta_2 LC \eta}{(1-\beta_2) \sqrt{t}} + \frac{8(1+\beta_2)\beta_2 L^2 C^2 \eta^2}{(1-\beta_2)^2 t} \\
    &\leq |g_{i,t}|^2 + |g_{i,t}| \cdot \frac{8 LC \eta}{(1-\beta_2) \sqrt{t}} + \frac{16 L^2 C^2 \eta^2}{(1-\beta_2)^2 t} \\
    &= \left( |g_{i,t}| + \frac{4 LC \eta}{(1-\beta_2) \sqrt{t}} \right)^2.
\end{aligned}
\end{equation*}
Thus, we get
\begin{equation*}
    \sqrt{v_{i,t}} \leq |g_{i,t}| + \frac{4 LC \eta}{(1-\beta_2) \sqrt{t}}.
\end{equation*}

Recalling (\ref{09032116}), we have
\begin{equation*}
\begin{aligned}
    g_{i,t} m_{i,t} &\geq (1-\beta_1^t) \left( |g_{i,t}| + \frac{4LC\eta}{(1-\beta_2) \sqrt{t}} \right) \left( |g_{i,t}| - \frac{2\sqrt{2} \beta_1 LC \eta}{(1-\beta_1^t)(1-\beta_1) \sqrt{t}} - \frac{4LC\eta}{(1-\beta_2) \sqrt{t}} \right) \\
    &\quad + (1-\beta_1^t) \cdot \frac{4LC\eta}{(1-\beta_2) \sqrt{t}} \left( \frac{2\sqrt{2} \beta_1 LC \eta}{(1-\beta_1^t)(1-\beta_1) \sqrt{t}} + \frac{4LC\eta}{(1-\beta_2) \sqrt{t}} \right) \\
    &\geq (1-\beta_1^t) \left( |g_{i,t}| + \frac{4LC\eta}{(1-\beta_2) \sqrt{t}} \right) \left( |g_{i,t}| - \frac{2\sqrt{2} \beta_1 LC \eta}{(1-\beta_1^t)(1-\beta_1) \sqrt{t}} - \frac{4LC\eta}{(1-\beta_2) \sqrt{t}} \right) \\
    &\geq (1-\beta_1^t) \sqrt{v_{i,t}} \left( |g_{i,t}| - \frac{2\sqrt{2} \beta_1 LC \eta}{(1-\beta_1^t)(1-\beta_1) \sqrt{t}} - \frac{4LC\eta}{(1-\beta_2) \sqrt{t}} \right).
\end{aligned}
\end{equation*}

Therefore, for the bias-corrected first/second moments $\hat{m}_{i,t}$ and $\hat{v}_{i,t}$, it holds that
\begin{equation*}
\begin{aligned}
    g_{i,t}\frac{\hat{m}_{i,t}}{\sqrt{\hat{v}_{i,t}}} = \frac{\sqrt{1-\beta_2^t}}{1-\beta_1^t} g_{i,t}\frac{m_{i,t}}{\sqrt{v_{i,t}}} &\geq \sqrt{1-\beta_2^t} \left( |g_{i,t}| - \frac{2\sqrt{2} \beta_1 LC \eta}{(1-\beta_1^t)(1-\beta_1) \sqrt{t}} - \frac{4LC\eta}{(1-\beta_2) \sqrt{t}} \right) \\
    &\geq \sqrt{1-\beta_2} \left( |g_{i,t}| - \left[ \frac{2\sqrt{2} \beta_1}{(1-\beta_1)^2} + \frac{4}{1-\beta_2}\right] \frac{LC\eta}{\sqrt{t}} \right).
\end{aligned}
\end{equation*}

\end{proof}

\begin{lem}\label{lem:mg-flt-diff}
Under the assumptions in Lemma \ref{lem:delta-g}, for any iteration step $t \geq 1$ and any coordinate $i$, it holds that
\begin{equation*}
    \left\| \hat{m}_t - g_t \right\|_1 \leq \frac{2\sqrt{2}\beta_1 \sqrt{d} LC\eta }{(1-\beta_1)^2 \sqrt{t}}.
\end{equation*}
\end{lem}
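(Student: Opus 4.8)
The plan is to start from the closed-form expression for the first moment estimate, Equation \eqref{eqn:m-it}, namely $m_{i,t} = (1-\beta_1)\sum_{s=1}^t \beta_1^{t-s} g_{i,s}$, together with the elementary identity $(1-\beta_1)\sum_{s=1}^t \beta_1^{t-s} = 1-\beta_1^t$. Combining these gives, coordinatewise,
\[
\frac{m_{i,t}}{1-\beta_1^t} - g_{i,t} = \frac{1-\beta_1}{1-\beta_1^t} \sum_{s=1}^t \beta_1^{t-s} (g_{i,s} - g_{i,t}),
\]
so that, after summing absolute values over coordinates and applying the triangle inequality,
\[
\left\| \frac{m_t}{1-\beta_1^t} - g_t \right\|_1 \leq \frac{1-\beta_1}{1-\beta_1^t} \sum_{s=1}^t \beta_1^{t-s} \|g_s - g_t\|_1.
\]
First I would establish this identity and bound cleanly, treating the debiased momentum as a weighted average of past gradients.

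Next I would control each increment $\|g_s - g_t\|_1$ by passing to the Euclidean norm, $\|g_s - g_t\|_1 \leq \sqrt{d}\,\|g_s - g_t\|_2$, and then invoke Lemma \ref{lem:delta-g}, which gives $\|g_s - g_t\|_2 \leq 2\sqrt{2}\,LC\eta(t-s)/\sqrt{t}$ for all $t \geq s \geq 1$. Substituting yields
\[
\left\| \frac{m_t}{1-\beta_1^t} - g_t \right\|_1 \leq \frac{1-\beta_1}{1-\beta_1^t} \cdot \frac{2\sqrt{2}\,\sqrt{d}\,LC\eta}{\sqrt{t}} \sum_{s=1}^t \beta_1^{t-s} (t-s).
\]

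The remaining work is to estimate the series $\sum_{s=1}^t \beta_1^{t-s}(t-s) = \sum_{j=0}^{t-1} j\,\beta_1^j$, which I would bound by the convergent tail $\sum_{j=0}^\infty j\,\beta_1^j = \beta_1/(1-\beta_1)^2$ — the same derivative-of-geometric-series identity already used in \eqref{09032357}. Finally I would simplify the prefactor using $1-\beta_1^t \geq 1-\beta_1$ for $t \geq 1$, which cancels one factor of $(1-\beta_1)$ and produces the claimed bound $2\sqrt{2}\,\beta_1 \sqrt{d}\,LC\eta / ((1-\beta_1)^2\sqrt{t})$.

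I do not expect a genuine obstacle here; the argument is a direct manipulation followed by a geometric-series estimate, all of whose ingredients are already available from Lemma \ref{lem:delta-g} and \eqref{09032357}. The only point needing a little care is to perform the norm conversion at the vector level, bounding $\|g_s - g_t\|_1 \leq \sqrt{d}\,\|g_s - g_t\|_2$ on each whole increment, rather than coordinate by coordinate — the latter would inflate the dimensional factor from $\sqrt{d}$ to $d$ and miss the stated constant.
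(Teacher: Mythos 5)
Your argument is correct and is essentially the paper's own proof: both start from the identity $m_t - (1-\beta_1^t)g_t = (1-\beta_1)\sum_{s=1}^t \beta_1^{t-s}(g_t - g_s)$, invoke Lemma \ref{lem:delta-g}, and bound $\sum_{j\ge 0} j\beta_1^j$ by $\beta_1/(1-\beta_1)^2$. The only cosmetic difference is the order of operations — the paper bounds $\|\cdot\|_2$ throughout and applies Cauchy–Schwarz ($\|\cdot\|_1 \le \sqrt{d}\|\cdot\|_2$) once at the very end, whereas you pass to $\|\cdot\|_1$ immediately and convert each increment $\|g_s - g_t\|_1 \le \sqrt{d}\|g_s - g_t\|_2$ inside the sum; both keep the conversion at the vector level and so land on the same $\sqrt{d}$ constant.
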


\begin{proof}

Recalling the calculation of the momentum $m_t$ in (\ref{eqn:m-it}), we get
\begin{equation*}
\begin{aligned}
    m_t = (1 - \beta_1) \sum_{s=1}^t \beta_1^{t-s} g_{s},
\end{aligned}
\end{equation*}
and
\begin{equation*}
\begin{aligned}
    m_t - (1 - \beta_1^t) g_t = (1 - \beta_1) \sum_{s=1}^t \beta_1^{t-s} (g_t - g_s).
\end{aligned}
\end{equation*}

By Lemma \ref{lem:delta-g} and Equation (\ref{09032357}) in the proof of Lemma \ref{lem:gmv-it}, we get
\begin{equation*}
\begin{aligned}
    \Vert \hat{m}_t - g_t \Vert_2 =  \left\|\frac{m_t}{1-\beta_1^t} - g_t \right\|_2 &\leq \frac{1 - \beta_1}{1-\beta_1^t} \sum_{s=1}^t \beta_1^{t-s} \|g_t - g_s\|_2 \leq \sum_{s=1}^t \beta_1^{t-s} \|g_t - g_s\|_2 \\
    &\leq \frac{2\sqrt{2}LC\eta}{\sqrt{t}} \sum_{s=1}^t \beta_1^{t-s} (t-s) = \frac{2\sqrt{2}LC\eta}{\sqrt{t}} \sum_{s=0}^{t-1} s \beta_1^{s} \\
    &\leq \frac{2\sqrt{2}\beta_1 LC\eta }{(1-\beta_1)^2 \sqrt{t}}.
\end{aligned}
\end{equation*}

By Cauchy-Schwarz's inequality, we have
\begin{equation*}
    \Vert \hat{m}_t - g_t \Vert_1 \leq \sqrt{d} \Vert \hat{m}_t - g_t \Vert_2 \leq \frac{2\sqrt{2}\beta_1 \sqrt{d} LC\eta }{(1-\beta_1)^2 \sqrt{t}}.
\end{equation*}
    
\end{proof}

Now we will complete the proof of Theorem \ref{thm:mofo-cvrg}.

\begin{proof}[Proof of Theorem \ref{thm:mofo-cvrg}]
By the descent lemma, since $\nabla \mathcal{L}$ is Lipschitz with constant $L$, we have
\begin{equation}\label{09042001}
\begin{aligned}
    \mathcal{L}(\theta_{t}) - \mathcal{L}(\theta_{t-1}) &\leq \nabla \mathcal{L}(\theta_{t-1})^\top (\theta_{t} - \theta_{t-1}) + \frac{L}{2} \|\theta_{t} - \theta_{t-1}\|_2^2 \\ &\leq g_t^\top (\theta_{t} - \theta_{t-1}) + \frac{L}{2} \|\theta_{t} - \theta_{t-1}\|_2^2.
\end{aligned}
\end{equation}
By Lemma \ref{lem:delta-theta} and Lemma \ref{lem:gmv-it}, we have
\begin{equation}\label{09051021}
\begin{aligned}
    &\quad \ \mathcal{L}(\theta_{t}) - \mathcal{L}(\theta_{t-1}) \leq \text{RHS of } (\ref{09042001}) \leq -\eta_t \left(\sum_{i=1}^d g_{i,t} \frac{\hat{m}_{i,t}}{\sqrt{\hat{v}_{i,t}}} \cdot \flt(m_t)_i \right) + \frac{LC^2 \eta_t^2}{2} \\
    &\leq \frac{LC^2 \eta^2}{2t} - \frac{\eta}{\sqrt{t}} \sum_{i=1}^d \sqrt{1-\beta_2} \left( |g_{i,t}| - \left[ \frac{2\sqrt{2} \beta_1}{(1-\beta_1)^2} + \frac{4}{1-\beta_2}\right] \frac{LC\eta}{\sqrt{t}} \right) \cdot \flt(m_t)_i \\
    &= -\frac{\sqrt{1-\beta_2}\cdot \eta}{\sqrt{t}} \|g_t \odot \flt(m_t) \|_1 + \left[ \frac{2\sqrt{2} \beta_1 \sqrt{1-\beta_2}}{(1-\beta_1)^2} + \frac{4}{\sqrt{1-\beta_2}} + \frac{C}{2} \right] \frac{LC\eta^2}{t} \cdot \|\flt(m_t)\|_1 \\
    &\leq -\frac{\sqrt{1-\beta_2}\cdot \eta}{\sqrt{t}} \|g_t \odot \flt(m_t) \|_1 + \left[ \frac{2\sqrt{2} \beta_1 \sqrt{1-\beta_2}}{(1-\beta_1)^2} + \frac{4}{\sqrt{1-\beta_2}} + \frac{C}{2} \right] \frac{LC\eta^2  (d \alpha + B)}{t}.
\end{aligned}
\end{equation}

By Lemma \ref{lem:flt-diff} and Lemma \ref{lem:mg-flt-diff}, we have
\begin{equation*}
\begin{aligned}
    \|g_t \odot \flt(g_t)\|_1 - \|g_t \odot \flt(m_t)\|_1  &= \|g_t \odot \flt(g_t)\|_1 - \left\| g_t \odot \flt \left(\frac{m_t}{1-\beta_1^t} \right) \right\|_1 \\
    &= \|g_t \odot \flt(\hat{m}_t)\|_1 \\
    &\leq 2 \left\| g_t - \hat{m}_t \right\|_1 \\
    &\leq \frac{4\sqrt{2}\beta_1 \sqrt{d} LC\eta}{(1-\beta_2)^2 \sqrt{t}}.
\end{aligned}
\end{equation*}

Thus,
\begin{equation}\label{09051109}
\begin{aligned}
    &\quad \ \mathcal{L}(\theta_{t}) - \mathcal{L}(\theta_{t-1}) \leq \text{RHS of } (\ref{09051021}) \\
    &\leq -\frac{\sqrt{1-\beta_2}\cdot \eta}{\sqrt{t}} \|g_t \odot \flt(g_t) \|_1 + \left[ \frac{2\sqrt{2} \beta_1 \sqrt{1-\beta_2}}{(1-\beta_1)^2} + \frac{4}{\sqrt{1-\beta_2}} + \frac{C}{2} \right] \frac{LC\eta^2  (d \alpha + B)}{t} \\
    &\qquad \qquad \qquad \qquad \qquad + \frac{4\sqrt{2}\beta_1 \sqrt{d} LC \eta^2}{(1-\beta_2)^{\frac{3}{2}} t} \\
    &= - \frac{C_1}{\sqrt{t}} \topalpha{g_t} + \frac{C_2}{t} \leq - \frac{C_1}{\sqrt{t}} \min_{1 \leq t \leq T} \topalpha{g_t} + \frac{C_2}{t},
\end{aligned}
\end{equation}
where
\begin{equation*}
\begin{aligned}
    C_1 &= \sqrt{1-\beta_2} \cdot \eta, \\
    C_2 &= LC \eta^2 \cdot \left\{ \left[ \frac{2\sqrt{2} \beta_1 \sqrt{1-\beta_2}}{(1-\beta_1)^2} + \frac{4}{\sqrt{1-\beta_2}} + \frac{C}{2} \right] (d \alpha + B) + \frac{4\sqrt{2}\beta_1 \sqrt{d}}{(1-\beta_2)^{\frac{3}{2}}} \right\}.
\end{aligned}
\end{equation*}

Taking the summation of (\ref{09051021}) from $1$ to $T$, we get
\begin{equation*}
\begin{aligned}
    \mathcal{L}^* - \mathcal{L}(\theta_0) &\leq \mathcal{L}(\theta_T) - \mathcal{L}(\theta_0) = \sum_{t=1}^T \mathcal{L}(\theta_t) - \mathcal{L}(\theta_{t-1}) \\
    &\leq - C_1 \left( \sum_{t=1}^T \frac{1}{\sqrt{t}} \right) \cdot \min_{1 \leq t \leq T} \|g_t \odot \flt(g_t) \|_1 + C_2 \sum_{t=1}^T \frac{1}{t}.
\end{aligned}
\end{equation*}

Since
\begin{equation*}
\begin{aligned}
    \sum_{t=1}^T \frac{1}{\sqrt{t}} &\geq  \sum_{t=1}^T \frac{2}{\sqrt{t} + \sqrt{t+1}} = \sum_{t=1}^T 2(\sqrt{t+1} - \sqrt{t}) = 2 (\sqrt{T+1} - 1), \\
    \sum_{t=1}^T \frac{1}{t} &= 1 + \sum_{t = 1}^{T-1} \frac{1}{t+1} \leq 1 + \sum_{t = 1}^{T-1} \int_{t}^{t+1} \frac{1}{u} \,du \leq 1 + \int_{1}^{T} \frac{1}{u} \,du = 1 + \log T,
\end{aligned}
\end{equation*}
we get
\begin{equation*}
\begin{aligned}
     \min_{0 \leq t \leq T-1} \topalpha{\nabla \mathcal{L} (\theta_{t})} &= \min_{1 \leq t \leq T} \topalpha{g_t} = \min_{1 \leq t \leq T} \Vert g_t \odot \flt{(g_t)} \Vert_{1} \\
    &\leq \frac{\mathcal{L}(\theta_0) - \mathcal{L}^* + C_2 \sum_{t=1}^T \frac{1}{t}}{C_1 \sum_{t=1}^T \frac{1}{\sqrt{t}}} \\
    &\leq \frac{\mathcal{L}(\theta_0) - \mathcal{L}^* + C_2 (1 + \log T)}{2C_1 (\sqrt{T+1} - 1)}.
\end{aligned}
\end{equation*}

Thus, we have
\begin{equation*}
    \min_{0 \leq t \leq T-1} \topalpha{\nabla \mathcal{L} (\theta_{t})} = \min_{1 \leq t \leq T} \Vert \mathcal{L} (\theta_{t}) \odot \flt{(\mathcal{L} (\theta_{t}))} \Vert_{1} = \mathcal{O} \left( \frac{\log T}{\sqrt{T}} \right).
\end{equation*}

By the relationship between $L_{1,\text{top-}\alpha}$ norm and $L_p$ norm in Lemma \ref{lem:norm-relation}, for any $p \in [1,+\infty]$,
\begin{equation*}
    \min_{0 \leq t \leq T-1} \Vert \nabla \mathcal{L} (\theta_{t}) \Vert_p \le \frac{1}{\alpha} \min_{0 \leq t \leq T-1} \topalpha{\nabla \mathcal{L} (\theta_{t})}.
\end{equation*}
Therefore,
\begin{equation*}
    \min_{0 \leq t \leq T-1} \Vert \nabla \mathcal{L} (\theta_{t}) \Vert_p = \mathcal{O} \left( \frac{\log T}{\sqrt{T}} \right). 
\end{equation*}

\end{proof}

\newpage

\subsection{Proof of Theorem \ref{thm:mofo_adam_dist} (Illustrative example: forgetting mitigation of MoFO)}
\label{app:exam_extension}

\begin{proof}[Proof of Theorem \ref{thm:mofo_adam_dist}]
Let the set $U := \{\theta \in \mathbb{R}^d : \theta_i < b_i / a_i, \ \forall 1 \leq i \leq d\}$. We note that:
\begin{enumerate}
\item The boundary of $U$ is the subset of $S = \cup_{i=1}^d S_i$, which is the collection all global minima of the fine-tuning loss.
\item The pre-training state $\theta_{\rm pretrain} = (0, 0, \dots, 0)$, which is also the starting point of fine-tuning, lies in $U$.
\end{enumerate}

We may as well assume that with proper learning rates, the parameter $\theta$ remains within $U$ during training, unless it converges to a minimum on the boundary. If it goes across the boundary at a certain iteration before converging, the learning rate can be adjusted to ensure that it remains within $U$. 
For any $\theta \in U$ and coordinate $i \in \{1,2,\dots,d\}$, we have
\begin{equation}\label{01252048}
\begin{aligned}
    \frac{\partial \mathcal{L}}{\partial \theta_i} = 2a_i(a_i \theta_i - b_i) \prod_{j \neq i} (a_j \theta_j - b_j)^2 = \frac{2 \mathcal{L}(\theta)}{\theta_i - \frac{b_i}{a_i}} < 0.
\end{aligned}
\end{equation}

For clarity of definition, we let $\theta_t$, represent the parameter at iteration $t$. We let $\theta_{i, t}$, $g_{i, t}$, $m_{i, t}$ denote the $i$-th coordinate of $\theta_{t}$, $g_{t}$, $m_{t}$ at iteration $t$, respectively.

\paragraph{Analysis of MoFO.}
We will use mathematical induction to show the following results: \textit{If MoFO selects the coordinate $i_0$ at the first iteration step, it will always select $i_0$ at any iteration $t$. Moreover, for any coordinate $i \neq i_0$, we have
\begin{itemize}
    \item $m_{i_0,1} \leq m_{i,1} \leq 0$, and $m_{i_0,t} < m_{i,t} \leq 0$ for any iteration step $t \geq 2$.
    \item $0 < \frac{b_{i_0}}{a_{i_0}} - \theta_{i_0, t} < \frac{b_{i}}{a_{i}} - \theta_{i, t}$ for any iteration step $t \geq 1$ if the algorithm has not reached the minimum.
\end{itemize}
}

\item \textbf{Base case (1st iteration step).} At the first iteration step, the momentum $m_1 = (1-\beta_1) g_1$. So for any $1 \leq i \leq d$, we get
\begin{equation*}
    m_{1,t} = (1-\beta_1) g_{1,t} = (1-\beta_1) \frac{\partial \mathcal{L}}{\partial \theta_i} \bigg|_{\theta_{\rm pretrain}} < 0.
\end{equation*}
According to the momentum filtering mechanism of MoFO, we have
\begin{equation*}
\begin{aligned}
    i_0 \in \arg \max_{1 \leq i \leq d} |m_{i,t}| = \arg \max_{1 \leq i \leq d} |g_{i,t}| = \arg \max_{1 \leq i \leq d} \bigg| \frac{\partial \mathcal{L}}{\partial \theta_i} \bigg| = \arg \max_{1 \leq i \leq d} \frac{1}{\frac{b_i}{a_i} - \theta_{i, {\rm pretrain}}} = \arg \min_{1 \leq i \leq d} \bigg\{\frac{b_i}{a_i}\bigg\}.
\end{aligned}
\end{equation*}
Obviously, we get $m_{i_0,1} \leq m_{i,1} \leq 0$ for any coordinate $i \neq i_0$. 

The parameter updates at the first iteration are:
\begin{equation*}
\begin{aligned}
    \theta_{i_0, 1} &= \theta_{i_0, 0} - \frac{\eta_1 \sqrt{1-\beta_2} m_{i_0,1}}{(1-\beta_1) \sqrt{v_{i_0, t}} } = \theta_{i_0, 0} - \frac{\eta_1 g_{i,0}}{|g_{i,0}|} = \theta_{i_0, 0} - \eta_1 {\rm sign}\left( \frac{\partial \mathcal{L}}{\partial \theta_i} (\theta_{\rm pretrain}) \right) > \theta_{i_0, 0}, \\
    \theta_{i, 1} &= \theta_{i, 0}, \quad \forall i \neq i_0.
\end{aligned}
\end{equation*}

If the algorithm has not converged at the first iteration, then we have $\theta_{i_0, 1} < b_{i_0} / a_{i_0}$. Moreover, for any $i \neq i_0$,
\begin{equation*}
    0 < \frac{b_i}{a_i} - \theta_{i_0, 1} < \frac{b_i}{a_i} = \frac{b_i}{a_i} - \theta_{i_0, {\rm pretrain}} = \frac{b_i}{a_i} - \theta_{i, {\rm pretrain}} = \frac{b_i}{a_i} - \theta_{i, 1}.
\end{equation*}

\textbf{Induction step.} Suppose that the induction hypothesis holds up to iteration $t$. Then, for any coordinate $i \neq i_0$, we have
\begin{itemize}
    \item $m_{i_0,t} \leq m_{i,t} \leq 0$.
    \item $0 \leq \frac{b_{i_0}}{a_{i_0}} - \theta_{i_0, t} < \frac{b_{i}}{a_{i}} - \theta_{i, t}$.
\end{itemize}
So for the gradient,
\begin{equation*}
    g_{i_0,t+1} = \frac{2\mathcal{L}(\theta_t)}{\theta_{i_0,t} - \frac{b_i}{a_i}} < \frac{2\mathcal{L}(\theta_t)}{\theta_{i_0,t} - \frac{b_i}{a_i}} = g_{i,t+1} < 0,
\end{equation*}
and
\begin{equation*}
\begin{aligned}
    m_{i_0,t+1} &= \beta_1 m_{i_0,t} + (1-\beta_1) g_{i_0,t+1} \\
    &< \beta_1 m_{i,t} + (1-\beta_1) g_{i,t+1} = m_{i_0,t+1} < 0.
\end{aligned}
\end{equation*}

Thus, $i_0$ is the only coordinate in $\arg\max_{1 \leq i \leq d} |m_{i,t+1}|$ and MoFO still chooses the coordinate $i_0$ to update. In addition,
\begin{equation*}
\begin{aligned}
    \theta_{i_0, t+1} &= \theta_{i_0, t} - \frac{\eta_{t+1} \hat{m}_{i,t+1}}{\sqrt{\hat{v}_{i,t+1}}} > \theta_{i, t}, \\
\theta_{i, t+1} &= \theta_{i, t}, \quad \forall i \neq i_0.
\end{aligned}
\end{equation*}

If the algorithm has not converged at iteration step $t+1$, then we have $\theta_{i_0, t+1} < b_{i_0} / a_{i_0}$. Moreover, for any $i \neq i_0$,
\begin{equation*}
    0 < \frac{b_i}{a_i} - \theta_{i_0, t+1} < \frac{b_i}{a_i} - \theta_{i_0, t} \leq \frac{b_i}{a_i} - \theta_{i, t} = \frac{b_i}{a_i} - \theta_{i, t+1}.
\end{equation*}

\textbf{Conclusion.} 
MoFO consistently updates $\theta_{i_0}$ and eventually converges to $\theta^*_{\text{MoFO}} = (0, \dots, 0, \frac{b_{i_0}}{a_{i_0}}, 0, \dots, 0)$, with pre-training loss
\begin{equation*}
    \mathcal{L}_{\rm pretrain}(\theta^*_{\text{MoFO}}) = \frac{b_{i_0}^2}{2a_{i_0}^2}.
\end{equation*}

\paragraph{Analysis of Adam.}
Unlike MoFO, Adam updates all the parameters. By Inequality (\ref{01252048}), we have $g_{i,t} < 0$. By the momentum update rule of Adam:
\begin{equation*}
    m_{i,t+1} = \beta_1 m_{i,t} + (1-\beta_1) g_{i,t+1},
\end{equation*}
we get that $m_{i,t} < 0$ for any $1 \leq i \leq d$ and any iteration $t$. Therefore, it holds for Adam that
\begin{equation*}
    \theta_{i, t+1} = \theta_{i, t} - \frac{\eta_t \mathcal{L}(\theta_t)}{\theta_{i, t} - \frac{b_i}{a_i}} > \theta_{i, t}.
\end{equation*}

Assuming that Adam converges to $\theta^*_{\text{Adam}}$, we have
\begin{itemize}
    \item $\theta^*_{\text{Adam}, i} > 0$ for any $1 \leq i \leq d$,
    \item There exists $j_0$ such that $\theta^*_{\text{GD}, j_0} = b_{j_0} / a_{j_0}$.
\end{itemize}

Recall that at iteration 1, MoFO selects
\begin{equation*}
    i_0 \in  \arg \min_{1 \leq i \leq d} \bigg\{\frac{b_i}{a_i}\bigg\}.
\end{equation*}

Thus, the pre-training loss for Adam is
\begin{equation*}
    \mathcal{L}_{\rm pretrain}(\theta^*_{\text{Adam}}) = \frac{b_{j_0}^2}{2a_{j_0}^2} + \sum_{i \neq j_0} \theta^{*2}_{\text{Adam}, i} > \frac{b_{j_0}^2}{2a_{j_0}^2} \geq \frac{b_{i_0}^2}{2a_{i_0}^2} = \mathcal{L}_{\rm pretrain}(\theta^*_{\text{MoFO}}).
\end{equation*}
In other words,
\begin{equation*}
    \|\theta^*_{\rm MoFO} - \theta_{\rm pretrain}\|^2_2 = 2 \mathcal{L}_{\rm pretrain}(\theta^*_{\text{MoFO}}) < 2\mathcal{L}_{\rm pretrain}(\theta^*_{\text{Adam}}) = \|\theta^*_{\rm Adam} - \theta_{\rm pretrain}\|^2_2.
\end{equation*}

In conclusion, MoFO converges to a minimum closer to the pre-training state than Adam, preserving a lower pre-training loss. 
\end{proof}

\newpage

\subsection{Challenges and Potential Extensions of Theorem \ref{thm:mofo-cvrg} to Nonsmooth Objectives}
\label{subapp:challenge-nonsmooth}

This subsection outlines challenges and possible future directions for extending our convergence analysis to nonsmooth objectives; a complete extension is left for future work.

\textbf{Why the extension is challenging.}

Our Theorem \ref{thm:mofo-cvrg} relies on the standard $L$-smoothness assumption—i.e., the gradient of $\mathcal{L}$ is $L$-Lipschitz—to invoke the descent lemma and to derive an upper bound on $\min_{0 \le t \le T-1} \Vert\nabla \mathcal{L}(\theta_t) \Vert$. When $\mathcal{L}$ is nonsmooth,
\begin{enumerate}[(i)]
\item $\nabla \mathcal{L}(\theta)$ may not exist at nondifferentiable points, which requires working with generalized gradients (e.g., subgradients) rather than classical gradients in the derivation.
\item The upper bound on $\min_{0 \le t \le T-1} \Vert\nabla \mathcal{L}(\theta_t) \Vert$ in the proof of Theorem \ref{thm:mofo-cvrg} (Appendix \ref{appendix:mofo-cvrg-proof}) scales with the smoothness constant $L$; when $L$ is unbounded (or effectively very large), these inequalities become non-informative.
\end{enumerate}

\textbf{Possible extensions.}

We outline below several plausible directions; full development is left for future work.
\begin{enumerate}[(i)]
    \item \textbf{Subgradient analysis.} Replace classical gradients with subgradients and assess convergence via a subgradient-based stationarity criterion.
    \item \textbf{Smoothing.} Introduce a family of smoothed surrogates ${\mathcal{L}_\mu}$, analyze the MoFO algorithm under these surrogates to obtain $\mu$-dependent bounds, and then let $\mu \downarrow 0$ to recover convergence results for the original loss $\mathcal{L}$.
    \item \textbf{Algorithmic modifications.} Incorporate techniques such as gradient clipping or stochastic subgradient steps \citep{xiao2024adam}, and then analyze the convergence of the modified algorithm.
\end{enumerate}

\newpage

\section{Supplemental Figures and Explanations}
\label{app:supp fig and explain}

\subsection{Reason for using RedPajama to approximate LLaMA-2’s training data}
\label{subapp:redpajama}

We note that original LLaMA-2 training dataset has not been publicly released. Thus, we can only rely on public datasets to approximate LLaMA-2's original training data.

RedPajama project was explicitly designed as an open-source reproduction of the LLaMA training dataset \citep{together2023redpajama, together2023redpajama-web}. It closely mirrors the data sources outlined in the original LLaMA paper and adopts similar strategies for data collection, mixture, and preprocessing. We believe it serves as a reasonable proxy for approximating LLaMA-2’s training dataset.

\subsection{Supplementary Figures for Figure \ref{pythia_landscape_lion}}
\label{supp_figure1}

\begin{figure}[h]
\vspace{-1.5em}
\centering
\subfigure[Fine-tuning loss landscape]{
\begin{minipage}[h]{0.33\linewidth}
\centering
\includegraphics[width=1.087\linewidth]{figure/landscape_pythia_lion_sft.pdf}
\end{minipage}%
}%
\subfigure[Pre-training loss landscape]{
\begin{minipage}[h]{0.33\linewidth}
\centering
\includegraphics[width=1.087\linewidth]{figure/landscape_pythia_lion_pretrain.pdf}
\end{minipage}%
}%
\subfigure[CR scores]{
\begin{minipage}[h]{0.33\linewidth}
\centering
\includegraphics[width=\linewidth]{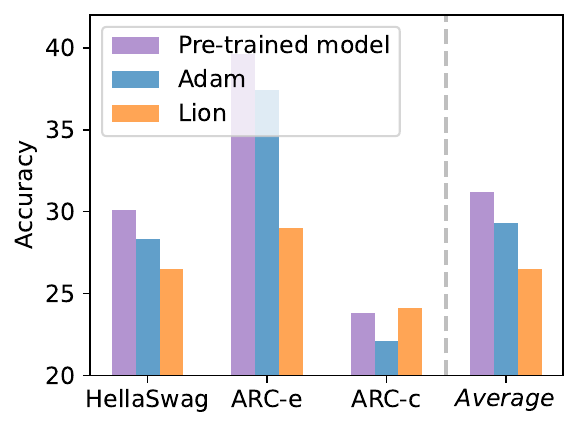}
\end{minipage}%
}%
\centering
\vspace{-2mm}
\caption{The loss landscapes of Pythia-160M after fine-tuning on a subset of the FLAN dataset using Adam and Lion. We plot the loss landscapes on (a) the fine-tuning dataset and (b) the pre-training dataset (Pile dataset \citep{gao2020pile}) and (c) the accuracies on CR tasks, including HellaSwag, ARC-c, and ARC-e. We visualize a 2D weight-space plane spanned by the vector from the pre-trained model to the Lion-tuned model (x-axis) and to the Adam-tuned model (y-axis). Axes are normalized so that one unit equals the length of the pre-trained$\to$Adam vector. The color bar indicates the loss value—(a) fine-tuning loss and (b) pre-training loss. A logarithmic scale is applied to the loss values for better visualization. Two training methods converge to different minima with similar fine-tuning loss. Lion converges to a farther minimum from the pre-trained model and performs more forgetting than Adam.} %
\label{pythia_landscape_lion_appendix}
\end{figure}

\vspace{-2mm}

\subsection{Supplementary Experiments on the Correlation between Distance and Forgetting}
\label{app:supp-correlation—additional}

\begin{figure}[t]
\centering
\subfigure{
\begin{minipage}[t]{0.42\linewidth}
\centering
\includegraphics[width=\linewidth]{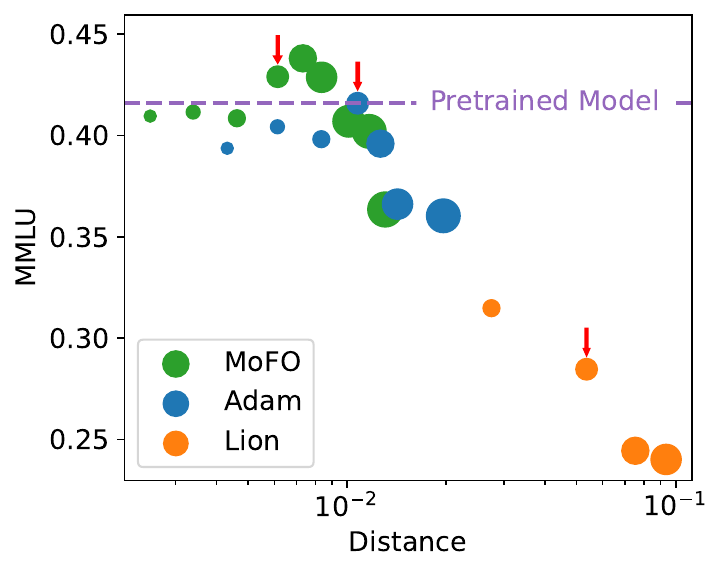}
\end{minipage}%
}%
\centering
\vspace{0mm}
\vspace{-3mm}
\caption{%
Average accuracy on the MMLU benchmark (measuring preservation of factual knowledge) for Llama-2-7B after fine-tuning on MetaMathQA with Adam, Lion, and MoFO. Building on Figure \ref{relation_fig}(b), we add points for runs where Llama-2-7B was trained for 0.1, 0.2, and >3 epochs using both Adam and MoFO. The marker size encodes the number of training epochs (larger means more epochs). Red arrows indicate the points obtained after exactly 1 epoch for each optimizer.}
\label{relation_fig_additional}
\end{figure}

In this subsection, we augment Figure \ref{relation_fig}(b) by probing the relationship between a model’s parameter distance from its pre-trained state and evaluation accuracy under additional training budgets and optimizers. Concretely, under the same settings as Figure \ref{relation_fig}(b), we add runs at 0.1 and 0.2 epochs and extend training beyond 3 epochs, using Adam and MoFO. We report results on MMLU (as in the main text, measuring preservation of factual knowledge) and newly include HumanEval (measuring preservation of code-generation ability). Since the fine-tuning task is math, both can serve as forgetting mitigation metrics. The corresponding scatter plots are shown in Figure \ref{relation_fig_additional} (MMLU) and Figure \ref{code_relation_fig_additional} (HumanEval). When examining the points \textbf{for each optimizer separately}, we make the following observations:

\begin{itemize}
    \item \textbf{Observation 1 (sufficient training).} Once training exceeds approximately 1 epoch, MMLU and HumanEval scores show a consistent strong negative correlation with the parameter distance to the pre-trained state.
    \item \textbf{Observation 2 (early training).} For Adam or MoFO at less than 1 epoch, we may observe a mild correlation with the parameter distance to the pre-trained state. The correlation may be unstable and can be positive or negative. 
\end{itemize}

We speculate that this short-lived positive trend may be related to benchmark alignment. The MMLU benchmark (used to measure the preservation of factual knowledge) might share partial overlap with the patterns of our math fine-tuning task (measured by GSM8K benchmark); in the early training steps, the model might incidentally acquire features that also benefit MMLU, resulting in a temporary gain. By contrast, as for HumanEval in Figure \ref{code_relation_fig_additional}, which measures code generation and differs from our math fine-tuning in both domain and output format, it may exhibit an unstable correlation whose sign can be either negative or positive. Another possible factor could stochasticity, since at less than 1 epoch the dataset has not yet been fully traversed.

Overall, the negative correlation becomes clear and consistent after sufficient training; while the early training presents a mild, benchmark-dependent relationship.

\begin{figure}[t]
\centering
\subfigure{
\begin{minipage}[t]{0.42\linewidth}
\centering
\includegraphics[width=\linewidth]{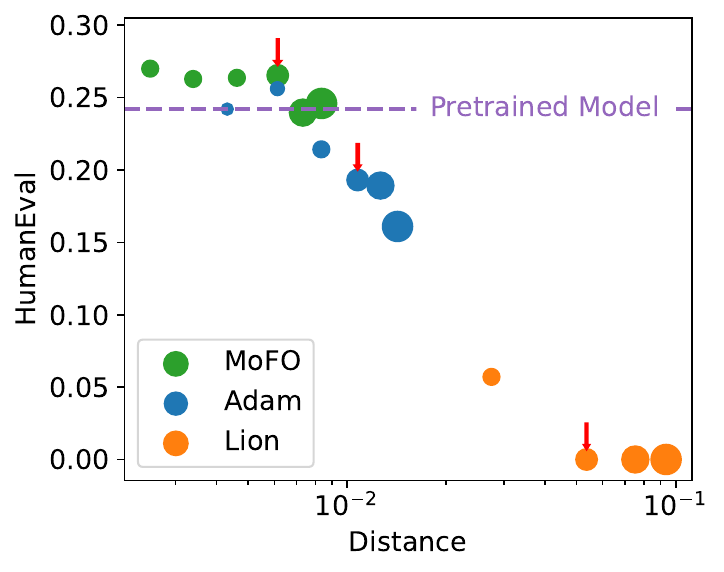}
\end{minipage}%
}%
\centering
\vspace{0mm}
\vspace{-3mm}
\caption{
Scores on HumanEval benchmark for Llama-2-7B after fine-tuning on MetaMathQA with Adam, Lion, and MoFO. Building on Figure \ref{relation_fig}(b), we add points for runs where Llama-2-7B was trained for 0.1, 0.2, and >3 epochs using both Adam and MoFO. The marker size encodes the number of training epochs (larger means more epochs). Red arrows indicate the points obtained after exactly 1 epoch for each optimizer.
}
\label{code_relation_fig_additional}
\end{figure}

In addition, we emphasize an empirical point: the negative relationship between parameter distance and the preservation of pre-trained knowledge is evident \textbf{across optimizers}. In Figure \ref{relation_fig}(b), \ref{relation_fig_additional}, and \ref{code_relation_fig_additional}, the parameter distances roughly follow the ordering \textbf{Lion > Adam > MoFO}, whereas the scores measuring preservation of pre-trained knowledge follow the inverse ordering \textbf{MoFO > Adam > Lion}. For a broader comparison, we evaluate five optimizers—MoFO, NAdam \citep{dozat2016incorporating}, Adam, RMSProp \citep{tieleman2012lecture}, and Lion. After two epochs of training, we report (i) their parameter distance to the pre-trained state and (ii) their forgetting-mitigation performance on MMLU and HumanEval, shown in Figure \ref{fig:optimizers-compare}(a) and Figure \ref{fig:optimizers-compare}(b), respectively. The results show a consistent negative correlation between distance and performance. Notably, MoFO remains closer to the pre-trained state and achieves higher scores compared with the other optimizers.
\textbf{Therefore, this cross-optimizer rank-order correlation is sufficient to motivate our algorithm design}: favor optimizers that converge closer to the pre-trained state, so as to better preserve the pre-trained knowledge.

\begin{figure}[H]
\centering
\vspace{-2mm}
\subfigure[MMLU Accuracies]{
\begin{minipage}[h]{0.49\linewidth}
\centering
\includegraphics[width=1\linewidth]{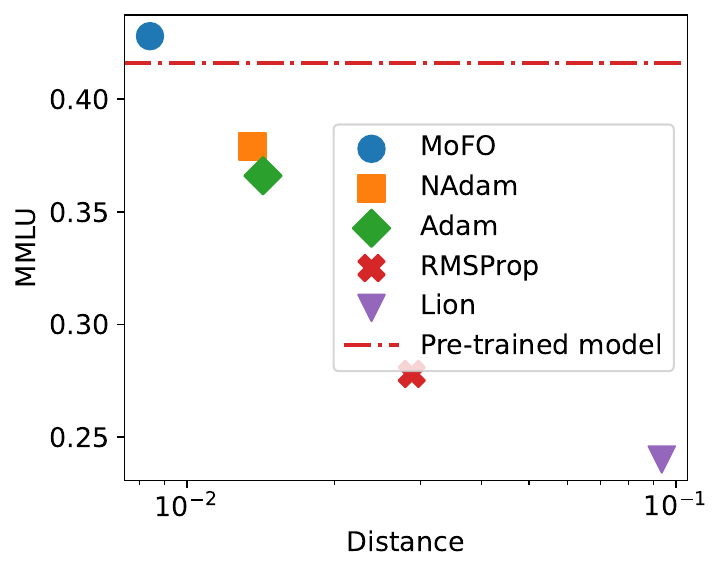}
\end{minipage}%
}%
\subfigure[HumanEval Scores]{
\begin{minipage}[h]{0.49\linewidth}
\centering
\includegraphics[width=1\linewidth]{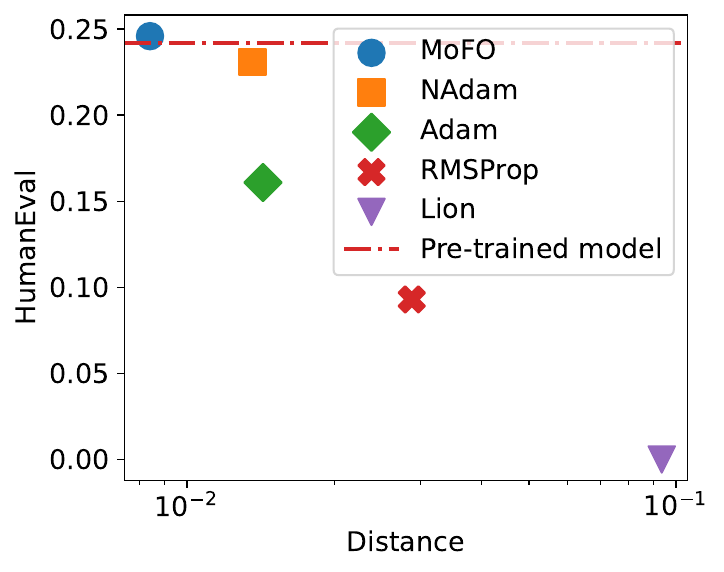}
\end{minipage}%
}%
\centering
\vspace{-1em}
\caption{(a) MMLU accuracies and (b) HumanEval scores of Llama-2-7B after fine-tuning on the MetaMathQA dataset using different optimizers. All experiments are run for 2 epochs.}
\label{fig:optimizers-compare}
\vspace{-1.2em}
\end{figure}

\subsection{Supplemental Explanation of Example \ref{example:illustrating}}
\label{app:supp-example-explain}

\begin{figure}[h]
\centering
\vspace{-1em}
\subfigure[Fine-tuning loss landscape]{
\begin{minipage}[t]{0.45\linewidth}
\centering
\includegraphics[width=1\linewidth]{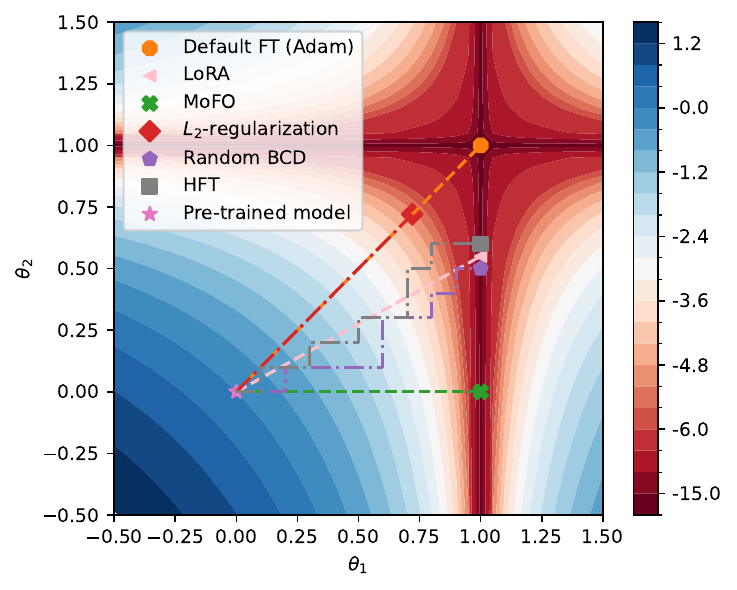}
\end{minipage}%
}%
\subfigure[Pre-training loss landscape]{
\begin{minipage}[t]{0.45\linewidth}
\centering
\includegraphics[width=1\linewidth]{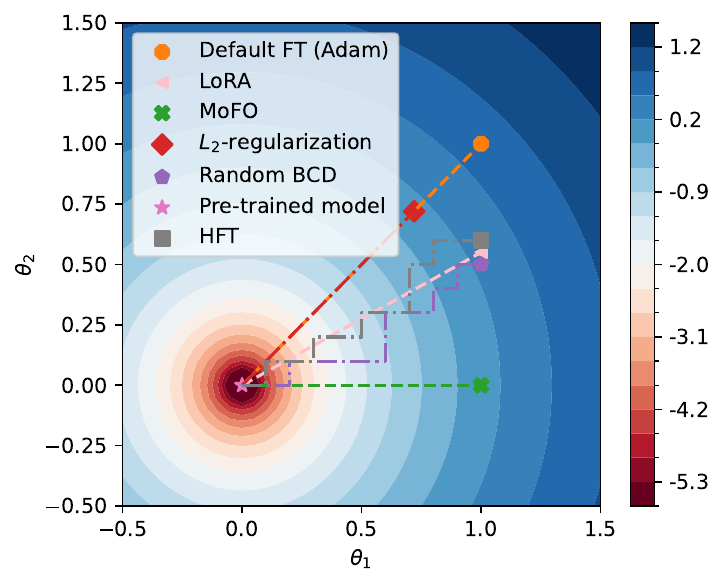}
\end{minipage}%
}%
\centering
\vspace{-1em}
\caption{The loss landscapes of the example. We plot the landscapes on (a) the fine-tuning loss and (b) the pre-training loss. The color bar indicates the loss value—(a) fine-tuning loss and (b) pre-training loss. A logarithmic scale is applied to the loss values for better visualization.
In Example \ref{example:illustrating}, MoFO converges to a minimum closest to the pre-trained model, with a low pre-training loss.}
\label{appfig:example_landscape}
\end{figure}

In addition to Default FT and MoFO, we also analyze four other optimization methods in Example 1, namely $L_2$ regularization \citep{xuhong2018explicit}, Half Fine-tuning (HFT) \citep{hui2024hft}, Random BCD \citep{nesterov2012efficiency}, and Low-Rank Adaptation (LoRA) \citep{hu2022lora}. These methods are also introduced in Section \ref{exp_sec}.

For the $L_2$ regularization method, we add a regularization term \(\lambda \|\theta - \theta_{\mathrm{pretrain}}\|^2_2\) to the fine-tuning loss to encourage the model to stay closer to the pre-trained state. %
As shown in Figure \ref{appfig:example_landscape}(b), the $L_2$ regularization approach remains closer to the pre-trained model, thereby achieving a smaller pre-training loss. However, since the fine-tuning objective is modified, Figure \ref{appfig:example_landscape}(a) shows that $L_2$ regularization does not reach the minimum of the fine-tuning loss.

We note that HFT operates in a manner similar to Random BCD, which randomly selects a subset of coordinates (e.g., \(\theta_1\) or \(\theta_2\)) to update. Figure \ref{appfig:example_landscape} further illustrates that both HFT and LoRA do not converge to minima as close to the pre-trained model as MoFO does, indicating that they may undergo higher levels of forgetting compared to MoFO.

For LoRA, we make the following modelling in the landscape visualization example. The core principle of LoRA (Low-Rank Adaptation) involves approximating the original training space by a low-rank subspace. Since we consider a two-dimensional training space for visualizing the landscape, we set the rank of LoRA space to 1. Specifically, the parameters $\theta_1$ and $\theta_2$ exhibit a linear relationship. Given that the pre-trained model is $(0, 0)$, the parameters under LoRA are set to satisfy $\theta_2 = \beta \theta_1$, where $\beta$ is a hyperparameter we set to 
0.5. Figure \ref{appfig:example_landscape} shows that LoRA converges to a closer local minimum than Default FT.

\subsection{Synthetic Experiment for Example \ref{example:illustrating}}
\label{app:epx_example_1}

In this subsection, we conduct a synthetic experiment to provide a more concrete illustration of Example \ref{example:illustrating}. Specifically, we set the parameter dimension to $d=10$, with  parameters $\theta = (\theta_1, \dots, \theta_{10})$. The pretraining loss is defined as the squared $L_2$ norm of the parameters: $L_{\rm{pretrain}}(\theta) = \frac{1}{2} \Vert \theta \Vert_2^2$, with the pre-trained model given by $\theta_{\rm pretrain} = (0, 0, \dots, 0)$. Starting from the pre-trained model, we optimize the parameters with respect to the fine-tuning loss $\mathcal{L}(\theta) = \prod_{i=1}^d (a_i \theta_i - b_i)^2$, where $a_i, b_i > 0$ for any $1 \leq i \leq d$. The coefficients $a_i$ and $b_i$ are sampled from a standard normal distribution; to ensure positivity, we take their absolute values and add 0.3 and 0.1, respectively.

In this experiment, we compare two optimizers: Adam and MoFO. For each, we perform a grid search for the optimal learning rate over the set $\{10^{-2}, 10^{-3}, 10^{-4}\}$. We consider the fine-tuning process to have converged to a minimum when the fine-tuning loss drops below $10^{-8}$ within 10000 iterations. At convergence, we record two metrics: the Euclidean distance from the fine-tuned model to the original pre-trained model, and the value of the pretraining loss. The entire experiment is repeated across three different random seeds for robustness.

As presented in Table \ref{exp_for_example1}, the results indicate that the minimum found by MoFO is at roughly half the distance from the pre-trained model compared to the one found by Adam. Concurrently, MoFO achieves a lower pre-training loss. These findings provide strong evidence that MoFO can converge to minima closer to the pre-trained model in Example \ref{example:illustrating}, thereby supporting Theorem \ref{thm:mofo_adam_dist}.

\begin{table}[h]
\centering
\caption{The Euclidean distance from the fine-tuned model to the original pre-trained model, and the pretraining loss of parameters after optimizing the fine-tuning loss using Adam and MoFO. The results show that MoFO finds a fine-tuning minimum that is closer to the pre-trained model compared to the one found by Adam.
}
\label{exp_for_example1}
\begin{tabular}{ccc}
\hline\hline
                   & Adam & MoFO  \\ \hline
Distance to pre-trained model            & 0.549 & 0.275 \\
Pre-training loss              
& 0.151
& 0.038
 \\
\hline\hline
\end{tabular}
\end{table}

\newpage

\section{Implementation Details}
\label{sec_training_detail}

\subsection{Datasets for Fine-Tuning.}
\label{subsec:datasets_for_ft}

\textbf{MetaMathQA} \citep{yu2024metamath}. This dataset comprises 395K math question-answer pairs. Numerous studies indicate that LLMs significantly enhance performance metrics on mathematical benchmarks such as GSM8K after fine-tuning on this dataset. We randomly select 10\% of this dataset for training LLMs, which includes 39.5K question-answer pairs.

\textbf{PMC-LLaMA-Instructions} \citep{wu2024pmc}. This dataset comprises 514K instruction-response pairs. Fine-tuning LLMs on this dataset has been shown to enhance performance on medical NLP tasks, such as PubMedQA \citep{jin2019pubmedqa}, MedMCQA \citep{pal2022medmcqa}, and MedQA \citep{jin2021disease}.
    We randomly sampled 51K instances with prompt lengths less than 750 characters for training our models.

\textbf{Magicoder-Evol-Instruct} \citep{wei2023magicoder}. This dataset comprises 110K instruction-response pairs related to coding, and it is decontaminated and redistributed from the Evol-CodeAlpaca-V1 dataset \citep{luo2023wizardcoder}
 We randomly select 39.5K question-answer pairs from this dataset to fine-tune LLM and enhance their coding capability.

\textbf{TRACE benchmark dataset} \citep{wang2023trace}. TRACE benchmark is designed with a comprehensive set of 8 distinct tasks across various domains, including domain-specific knowledge, multilingual proficiency, code generation, and mathematical reasoning.

\subsection{Evaluation Metrics for Instruction Fine-Tuning} 
\label{subsec:eval_metric_ift}

We employ a comprehensive suite of widely used benchmarks to assess the performance and potential catastrophic forgetting effects on the general capabilities of LLMs after instruction fine-tuning. The benchmarks are as follows:

\begin{itemize}

\item \textbf{Factual knowledge (MMLU)}: We use the Massive Multitask Language Understanding (MMLU) benchmark \citep{hendrycks2021measuring} to evaluate factual knowledge across 57 diverse subjects, ranging from STEM fields and the humanities to social sciences. Evaluations are performed using 8-bit precision with the open-instruct implementation, and by following the setup of \citep{hui2024hft}, we report the 0-shot accuracy.

\item \textbf{Common sense reasoning (CR)}: To measure the commonsense reasoning capabilities of LLMs, we employ the widely recognized benchmarks ARC-Challenge (ARC-C), ARC-Easy (ARC-E) \citep{clark2018think}, and HellaSwag \citep{zellers2019hellaswag}, collectively referred to as the Commonsense benchmark. We use the average of their metrics as the evaluation, conducting assessments using the LM Eval Harness framework \citep{eval-harness} and reporting the 0-shot accuracy based on the "acc\_norm, none" metric.

\item \textbf{Mathematical Reasoning (GSM8K)}: We assess mathematical reasoning capability using GSM8K \citep{cobbe2021training}, which consists of 8.5K high-quality grade school math problems. Evaluations are conducted on the test set using the LM Eval Harness framework prompting in a 5-shot setting, reporting the "exact\_match, flexible-extract" metric.

\item \textbf{Code Generation (HumanEval)}: We adopt HumanEval \citep{chen2021evaluating}, comprising 164 unique programming problems, to evaluate the coding capabilities of LLMs. For chat experiments, we %
report the pass@10 performance.

\item \textbf{Medical Question Answering (MedQ)}: To assess medical knowledge, we utilize three benchmarks—PubMedQA \citep{jin2019pubmedqa}, MedMCQA \citep{pal2022medmcqa}, and MedQA \citep{jin2021disease}. Evaluations are performed using the LM Eval Harness framework. For PubMedQA, we report the "acc, none" metric; for MedMCQA and MedQA, we report the "acc\_norm, none" metric.

\item \textbf{Instruction Following (IFEval)}: We evaluate the instruction-following ability of LLMs using the IFeval benchmark. Evaluations are conducted with the LM Eval Harness implementation, and we report the "inst\_level\_strict\_acc, none" metric.
 
\end{itemize}

All benchmarks—including CommonSense, GSM8K, PubMedQA, MedMCQA, MedQA, and IFeval—are evaluated using the LM Eval Harness framework \citep{eval-harness}, following their default settings unless specified otherwise.

\subsection{Hyperparameter Configurations}
\label{subsec:hyperparam_config}

\textbf{Instruction fine-tuning.} In our instruction fine-tuning experiments, we follow the implementation of \cite{ivison2023camels}. For instruction fine-tuning, we set the maximum sequence length to 1024, the global batch size to 128, and we train the model for 2 epochs. For the Llama-2-7B model, we use a learning rate of 2e-5 and 0 warm-up ratio, with a cosine decay learning rate scheduler. The learning rate is set to 2e-5 for fine-tuning both the Llama-2-7B-Chat model on the MetaMathQA dataset and the Gemma-2B-IT model, while a learning rate of 1e-5 is used for fine-tuning the Llama-2-7B-Chat model on the PMC-LLaMA-Instruct dataset; all these settings employ a warm-up ratio of 0.03 and a cosine decay learning rate scheduler. For LoRA, we set the learning rate as 1e-4.
The other hyperparameters in the experiments are as follows.

\textbf{Fine-tuning Llama-2-7B on MetaMathQA.} 
\begin{itemize}
    \item Learning rate:  2e-5.
    \item Update fraction of MoFO: $\alpha=15\%$.
    \item LoRA: $r=4, 16, 64, 256.$ We report the best-performing hyperparameter configuration for the fine-tuning task in Table \ref{tab:mathqa_exp}, which, in this case, is $r=256$.
\end{itemize}

\textbf{Fine-tuning Llama-2-7B-Chat on PMC-LLaMA-Instruct.} 
\begin{itemize}
    \item Learning rate:  1e-5.
    \item Update fraction of MoFO: $\alpha=15\%$.
    \item LoRA: $r=16, 256.$ We report the best-performing hyperparameter configuration for the fine-tuning task in Table \ref{llama2_chat_pmc}, which, in this case, is $r=256$.
\end{itemize}

\textbf{Fine-tuning Llama-2-7B-Chat on MetaMathQA.} 
\begin{itemize}
    \item Learning rate:  2e-5.
    \item Update fraction of MoFO: $\alpha=15\%$.
    \item LoRA: $r=16, 256.$ We report the best-performing hyperparameter configuration for the fine-tuning task in Table \ref{llama2-7b-chat + math}, which, in this case, is $r=256$.
\end{itemize}

\textbf{Fine-tuning Gemma-2B-IT on MetaMathQA.} 
\begin{itemize}
    \item Learning rate:  2e-5.
    \item Update fraction of MoFO: $\alpha=5\%$.
    \item LoRA: $r=16, 256, 512.$ We report the best-performing hyperparameter configuration for the fine-tuning task in Table \ref{gemma2_it_math}, which, in this case, is $r=512$.
\end{itemize}

\textbf{Fine-tuning Llama-2-7B-Chat on Magicoder-Evol-Instruct.} 
\begin{itemize}
    \item Learning rate: 1e-5.
    \item Update fraction of MoFO: $\alpha=20\%$.
    \item LoRA: $r=16, 256.$ We report the best-performing hyperparameter configuration for the fine-tuning task in Table \ref{gemma2_it_math}, which, in this case, is $r=256$.
\end{itemize}

\textbf{Hyperparameters in the Pareto comparison.}  To provide a comprehensive comparison, we explore various hyperparameter settings for $\lambda_1$, $\lambda_2$, LoRA's rank, and the update fraction $\alpha$ in MoFO in Figure \ref{pareto}. Specifically, we set $\lambda_1$ as 1e-4, 1e-5, 1e-6, 1e-7, while $\lambda_2$ is set as 1e-2, 5e-3, 1e-3, 5e-4, and 1e-4. The update fraction $\alpha$ in MoFO is set as $5\%$, $10\%$, $15\%$, $20\%$, $40\%$, $80\%$. The rank of LoRA is set as 4, 16, 64, 256.

\textbf{Continual fine-tuning.} In our continual fine-tuning experiments, we follow the default settings of the TRACE benchmark. We sequentially train TinyLlama-1.1B on the TRACE benchmark datasets: C-STANCE, FOMC, MeetingBank, Py150, ScienceQA, NumGLUE-cm, NumGLUE-ds, and 20Minuten for 5, 3, 7, 5, 3, 5, 5, and 7 epochs, respectively. We use a learning rate of 1e-5 with a cosine decay schedule and a batch size of 64. The parameter update fraction for MoFO is set to $5\%$.

All experiments are conducted on four A800 (80GB) GPUs.

\subsection{More Explanation on the partitioning and Calculation of distance}
\label{exp_par_dis}

\textbf{Partitioning.} We use the default partitioning scheme in PyTorch's Transformer implementation. Different types of parameters within the Transformer, such as query (Q), key (K), value (V) weights for attention heads, and feed-forward network (FFN) weights, are divided into separate partitions. Notably, in the default PyTorch implementation, within a layer, the query (Q) weights of all attention heads are grouped into a single partition. The same applies to the key (K) and value (V) weights. Our momentum-based filtering mechanism is applied to each partition individually. A different parameter partition scheme, along with its corresponding experiments, is presented in Appendix \ref{subapp:diff-partition}.

\textbf{Calculation of distance.} Following the notation in Section \ref{sec:algo-formulation}, we suppose that the parameter parameters are partitioned into
\begin{equation*}
    \theta = (\theta^{(1)}, \theta^{(2)}, \dots, \theta^{(B)}).
\end{equation*}
Denote the pre-trained model by $\theta_0$ and the fine-tuned model by $\theta$.

First, we calculate the relative change of parameters $\frac{\Vert \theta^{(k)} - \theta_0^{(k)} \Vert}{\Vert \theta_0^{(k)} \Vert}$ in each partition $k \in \{1,2,\dots, B\}$. Second, we compute the distance from the pre-trained model $\theta_0$ to the fine-tuned model $\theta$ by averaging the relative changes across all partitions, defined as:
\begin{equation*}
    D(\theta, \theta_0) = \frac{1}{B} \sum_{k=1}^B \frac{\| \theta^{(k)} - \theta^{(k)}_0 \|}{\| \theta^{(k)}_0 \|}.
\end{equation*}

\newpage

\section{Guideline for Setting  $\alpha$}
\label{subsec:how2set-alpha}

Given a pre-trained LLM and a dataset for fine-tuning, we recommend the following procedure:

\begin{enumerate}
    \item Random Sampling: Randomly sample a small subset of the dataset to serve as a proxy.
    \item Grid Search: Perform a grid search over candidate values of \(\alpha\) using this proxy subset.
    \item Selection: Choose the \(\alpha\) configuration that strikes a good balance between fine-tuning performance on the target dataset and preserving the model’s general capability.
\end{enumerate}

To illustrate this procedure, we use Llama-2-7B as an example. We randomly sample 10\% of the instances from the current training set of MetaMathQA dataset (39.5k) for fine-tuning, and then perform a grid search over \(\alpha\) values of 5\%, 10\%, 15\%, 20\%, 40\%, and 80\%. As shown by the green line in Figure \ref{fig:fraction_proxy}(a), the fine-tuning performance is relatively stable across these values of \(\alpha\). However, the green line in Figure \ref{fig:fraction_proxy}(b) indicates that \(\alpha = 15\%\) best preserves the model’s general capability. Therefore, we set \(\alpha = 15\%\).

\begin{figure}[H]
\centering
\vspace{-2mm}
\subfigure[Fine-tuning performance]{
\begin{minipage}[h]{0.4\linewidth}
\centering
\includegraphics[width=\linewidth]{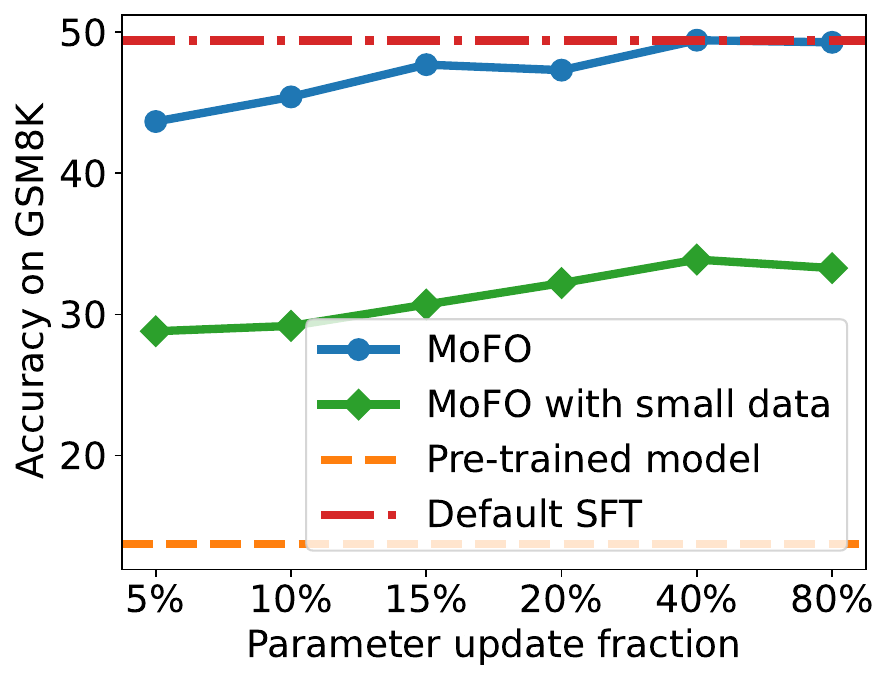}
\end{minipage}%
}%
\subfigure[Preservation of pre-training knowledge]{
\begin{minipage}[h]{0.4\linewidth}
\centering
\includegraphics[width=\linewidth]{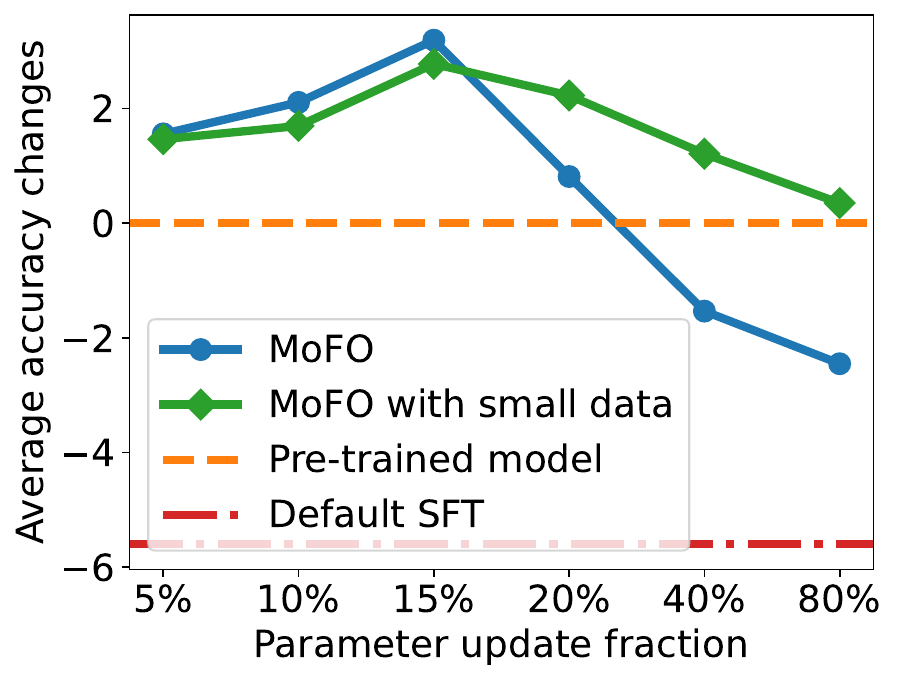}
\end{minipage}%
}%
\centering
\vspace{-1em}
\caption{\textbf{(a) Fine-tuning performance:} Accuracy on the GSM8K math reasoning task for LLMs of different sizes, fine-tuned via MoFO with varying update fractions ($\alpha$).
\textbf{(b) Preservation of pre-training knowledge:} Average accuracy changes on MMLU, HumanEval, and commonsense reasoning benchmarks relative to the original pre-trained LLMs, illustrating how much pre-training knowledge is retained.
All results are obtained by fine-tuning Llama2-7B on MetaMathQA and its proxy subset. The performance trends under different update fractions on the proxy subset align with those observed on the full dataset.}
\label{fig:fraction_proxy}
\end{figure}

Furthermore, by comparing the blue and green lines in Figure \ref{fig:fraction_proxy}, we observe that the trend of model performance with respect to \(\alpha\) on the small proxy subset is consistent with the trend observed when fine-tuning on the full dataset. This implies that a small, randomly sampled subset is sufficient to guide the selection of a suitable \(\alpha\). 

Empirically, the optimal \(\alpha\) often lies between 5\% and 20\%. A more fine-grained grid search within this range can be performed if needed. Designing more refined and efficient strategies for tuning \(\alpha\) is left for future work.

\newpage
\section{Additional Experiments on Instruction Fine-tuning}
\label{sec_addition_exp}

This section begins with a comparison of MoFO and baseline methods across additional datasets and models in \ref{sec_more_exp_it}. In \ref{subapp:mofo+lora}, we explore the combination of LoRA and MoFO to assess their performance. Finally, in \ref{subapp:HMA-experiment}, we compare MoFO with several algorithms designed to mitigate forgetting.

\subsection{More Experimental Results in Instruction Fine-Tuning}
\label{sec_more_exp_it}

\begin{table}[h]
\caption{The performance on the fine-tuning task (medical QA task), measured by MedQ, and
general capability scores of Llama-2-7B-Chat after fine-tuning on the PMC-LLaMA-Instruct dataset. The figure on the right visualizes both MedQ accuracy and general capability scores. The results show
that MoFO achieves comparable performance in the MedQ while significantly mitigating forgetting of general capabilities. Bold values denote the best results among these
methods.}
\label{llama2_chat_pmc}
\begin{minipage}[h]{0.65\textwidth}
\flushleft
\begin{footnotesize}
\begin{tabular}{ccccccc}
\hline \hline
\multirow{2}{*}{Method} & \multirow{2}{*}{MedQ} & \multicolumn{4}{c}{General Capability} \\ \cline{3-6} 
                         &      & CR & IFEval & HumanEval &    Avg.   \\ \hline
{Llama-2-7B-Chat} & {49.8} & {65.6} & {41.4} & {24.3} & {43.8} \\ \hline
{Default FT} & {54.3} & {64.6} & {32.1} & {20.6} & {39.1} \\ \hline
{HFT} & {\textbf{54.4}} & {65.2} & {33.5} & {23.1} & {40.6} \\ \hline
{LoRA} & {54.2} &  {64.4} & {33.9} & {23.5} & {40.6} \\ \hline
{MoFO} & {54.3} & {\textbf{65.6}} & {\textbf{38.6}} & {\textbf{25.0}} & {\textbf{43.1}} \\
\hline
\hline
\end{tabular}
\end{footnotesize}
\end{minipage}
\begin{minipage}[h]{0.35\textwidth}
\flushright
\includegraphics[width=1\textwidth]{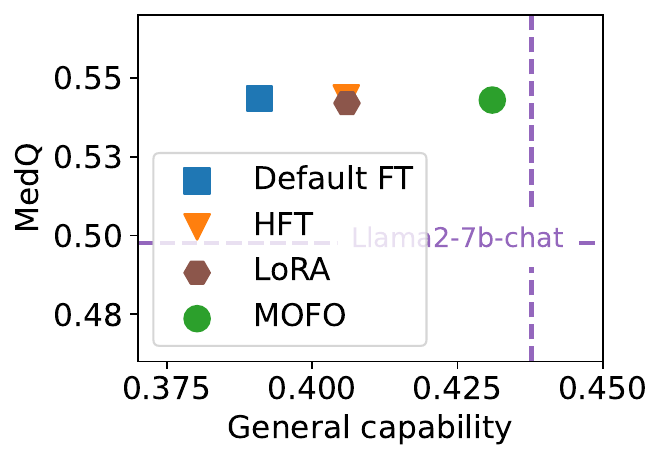}
\vspace{-3em}
\end{minipage}

\end{table}

\paragraph{Results of fine-tuning on PMC-LLaMA-Instruct.} We fine-tune Llama-2-7B-Chat on the PMC-LLaMA-Instructions dataset using various baseline methods and present the experimental results on medical question answering (MedQ) and general capabilities in Table \ref{llama2_chat_pmc}. Since the MMLU benchmark already contains medical-related instances \citep{hendrycks2021measuring}, which may lead to improved performance after fine-tuning, we instead use IFEval to assess general capabilities.

MoFO performs well on the fine-tuning task of medical QA. It achieves compatible performance compared to Default FT and HFT. In terms of general capabilities, MoFO demonstrates the least degradation compared to other baselines,
with an average accuracy reduction of only 0.2\%. Specifically, on the IFEval benchmark, our method only exhibits a minor reduction of 0.3\%, while Default FT, HFT, and LoRA experience significant degradations ranging from 7.5\% to 9.3\%. On code generation (HumanEval) tasks and commonsense reasoning (CR) benchmarks, our method also only exhibits a minor reduction less than 0.2\%.

\renewcommand{\arraystretch}{1.5}

\begin{table}[htbp]
\caption{The performance of the fine-tuning task (math), measured by GSM8K, and the general
capability scores of Gemma-2B-IT after fine-tuning on the MetaMathQA dataset. The figure on the
right visualizes both GSM8K accuracy and general capability scores. The results show that MoFO
achieves comparable performance in the fine-tuning task, while significantly mitigating forgetting of general capabilities. Bold values denote the best results among these methods.}
\label{gemma2_it_math}
\begin{minipage}[h]{0.65\textwidth}
\flushleft
\begin{footnotesize}
\begin{tabular}{ccccccc}
\hline \hline
\multirow{2}{*}{Method} & \multirow{2}{*}{GSM8K} & \multicolumn{4}{c}{General Capability} \\  \cline{3-6} 
                         &        & CR & IFeval & HumanEval &    Avg.  \\ \hline
{Gemma-2B-IT} & {11.4} & {57.6} & {33.6} & {31.5} & {40.9} \\ \hline
{Default FT} & {42.0} & {52.1} & {24.3} & {20.6} & {32.3} \\ \hline
                           
{HFT} & {41.5} & {53.9} & {24.1} &{21.2} & {33.1} \\ \hline
{LoRA} & {40.6} & {54.4} & {26.1} & {\textbf{29.8}} & {36.8} \\ \hline
                          
{MoFO} & {\textbf{42.1}} & {\textbf{55.0}} & {\textbf{28.7}} & {29.1} & {\textbf{37.6}} \\ \hline
\hline
\end{tabular}
\end{footnotesize}
\end{minipage}
\begin{minipage}[h]{0.35\textwidth}
\flushright
\includegraphics[width=1\textwidth]{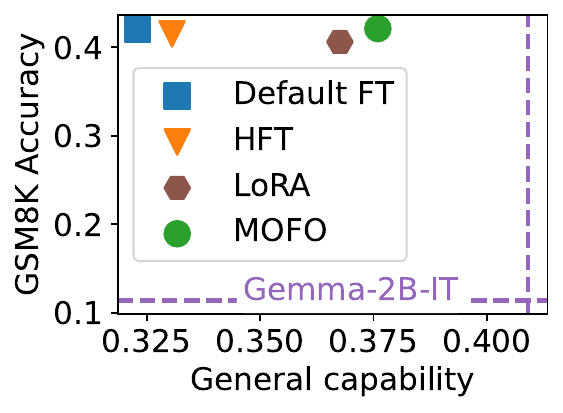}
\end{minipage}
\vspace{-1em}
\end{table}

\paragraph{Results of fine-tuning Gemma-2B-IT on MetaMathQA.} We also explore how MoFO performs in other LLMs. Specifically, we fine-tune Gemma-2B-IT on MetaMathQA using various
baseline methods and present the experimental results on mathematical reasoning (GSM8K) and
general capabilities in Table \ref{gemma2_it_math}.  The experimental results demonstrate that MoFO achieves comparable performance of the fine-tuning task to Default FT and HFT across different models. In terms of general capabilities, MoFO exhibits significantly less forgetting compared to other baselines. This result demonstrates the versatility of the MoFO algorithm.

\renewcommand{\arraystretch}{1.5}

\begin{table}[H]
\caption{The performance of the fine-tuning task (math), measured by GSM8K, and the general capability scores of Llama-2-7B-chat after fine-tuning on the MetaMathQA dataset. The figure on the right visualizes both GSM8K accuracy and general capability scores. The results show that MoFO achieves comparable performance in the fine-tuning task, while significantly mitigating forgetting of general capabilities. Bold values denote the best results among these methods.}
\label{llama2-7b-chat + math}
\begin{minipage}[h]{0.63\textwidth}
\flushleft
\begin{footnotesize}
\begin{tabular}{ccccccc}
\hline \hline
\multirow{2}{*}{Method} & \multirow{2}{*}{GSM8K} & \multicolumn{4}{c}{General Capability} \\ \cline{3-6} 
                         &       & CR & IFeval & HumanEval &    Avg.   \\ \hline
Llama-2-7B-Chat & 13.7 & 65.6 & 41.4 & 24.3 & 43.8 \\ \hline
{Default FT} & {\textbf{48.4}} & {62.8} & {30.7} & {15.6} & {36.4} \\
                         \hline
                           
{HFT} & {46.9} & {63.4} & {31.8} & {20.0} & {38.4} \\ \hline
{LoRA} & {45.3} & {63.9} & {35.6} & {21.0} & {40.2} \\ \hline
                          
{MoFO} &{47.1} & {\textbf{64.0}} & {\textbf{37.1}} & {\textbf{{21.7}}} & {\textbf{40.9}} \\ \hline
\hline
\end{tabular}
\end{footnotesize}
\end{minipage}
\begin{minipage}[h]{0.35\textwidth}
\flushright
\includegraphics[width=1\textwidth]{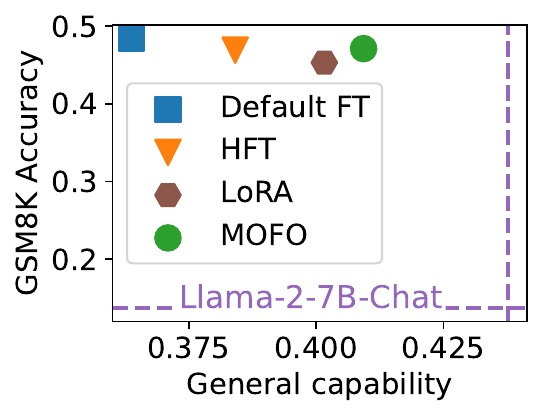}
\end{minipage}
\end{table}

\paragraph{Results of fine-tuning Llama-2-7B-Chat on MetaMathQA.} We also fine-tune the Llama-2-7B-Chat on the MetaMathQA dataset. The results are presented in Table \ref{llama2-7b-chat + math}. The results demonstrate that our approach achieves performance comparable to Default FT and HFT while exhibiting less forgetting compared to baseline methods.

\begin{table}[h]
\centering
\caption{The performance of the fine-tuning task (coding), measured by HumanEval, and the general capability scores of Llama-2-7B-Chat after fine-tuning on the Magicoder-Evol-Instruct dataset. Here we choose the three benchmarks exhibiting the most significant forgetting. We set the rank of LoRA as 256, and $\alpha$ of MoFO is set as 20\%. The results show that MoFO achieves comparable performance in the fine-tuning task, while mitigating forgetting of general capabilities. Bold values denote the best results among these methods.}%
\label{magiccodr_exp}
\begin{minipage}[h]{0.64\textwidth}
\flushleft
\begin{footnotesize}
\begin{tabular}{ccccccc}
\hline \hline
\multirow{2}{*}{Method} & \multirow{2}{*}{HumanEval} & \multicolumn{4}{c}{General Capability} \\ \cline{3-6} 
                         &  &  ARC-E &  ARC-C & IFEval &    Avg.   \\ \hline
{Llama-2-7B-Chat} & {24.2} & {74.5} & {46.3} & {41.1} & {54.0} \\ \hline
{Default FT} & {56.2} & {71.2} & {45.2} & {33.5} & {50.0} \\ \hline
{HFT} & {50.7} & {71.5} & {45.6} & \textbf{36.3} & {51.1}  \\ \hline
{LoRA} & {48.6} & {\textbf{72.1}} & {45.1} & {33.4} & {50.2} \\ \hline
{MoFO} & {53.3} & {\textbf{72.1}} & {\textbf{46.0}} & {36.1} & {\textbf{51.4}} \\
\hline
\hline
\end{tabular}
\end{footnotesize}
\end{minipage}
\begin{minipage}[h]{0.34\textwidth}
\flushright
\includegraphics[width=1\textwidth]{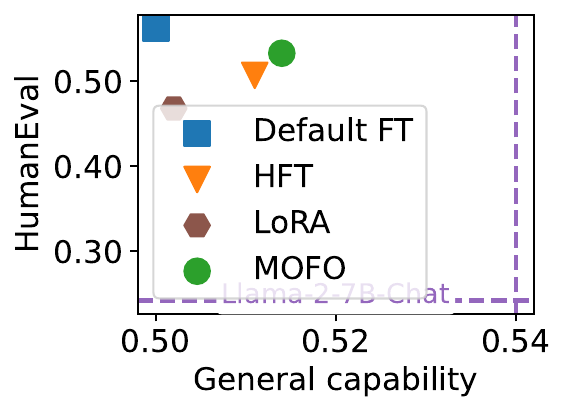}
\end{minipage}
\end{table}

\paragraph{Results of fine-tuning Llama-2-7B-Chat on Magicoder-Evol-Instruct.} We also fine-tune the Llama-2-7B-Chat on the Magicoder-Evol-Instruct dataset. We use ARC-Easy (ARC-E) and ARC-Challenge (ARC-C) scores to measure general capability. 
The results in Table \ref{magiccodr_exp} demonstrate that our approach outperforms the baselines in the fine-tuning tasks and exhibits less forgetting compared to baseline methods.

In summary, our MoFO algorithm shows competitive performance in instruction fine-tuning while
preserving the general capabilities, effectively alleviating forgetting.

\subsection{Experiment on the Combination of MoFO and LoRA}
\label{subapp:mofo+lora}

In addition to using LoRA as a baseline for comparison, we can also view it as an orthogonal method that can be integrated with MoFO. In the fine-tuning stage, LoRA restricts the trainable parameter space to a low-rank subspace. We note that the comparison of LoRA and MoFO (PEFT version) essentially evaluates MoFO and Adam within this same low-rank subspace defined by LoRA. To investigate this further, we conduct a comparative experiment following the setup in Table \ref{tab:mathqa_exp} of Section \ref{Instruction_ft_subsec}. Table \ref{tab:mofo+lora} implies that MoFO + LoRA effectively mitigates the forgetting issue that arises when using LoRA alone. %

\begin{table}[h]
\centering
\caption{The performance of the fine-tuning task (math), measured by GSM8K, and the general capability scores of Llama-2-7B after fine-tuning on the MetaMathQA dataset. The results show that MoFO + LoRA preserve more pre-training knowledge than using LoRA alone.}
\label{tab:mofo+lora}
\begin{footnotesize}
\begin{tabular}{cccccc}
\hline \hline
\multirow{2}{*}{Method} & \multirow{2}{*}{GSM8K} & \multicolumn{3}{c}{General Capability} & \multirow{2}{*}{Avg.}\\
\cline{3-5}
& & CR & MMLU & HumanEval & \\
\hline
LoRA & 43.3 & 65.1 & 37.7 & 26.4 & 43.1 \\
\hline
LoRA + MoFO & \textbf{43.4} & \textbf{65.5} & \textbf{39.4} & \textbf{26.7} & \textbf{43.9} \\
\hline\hline
\end{tabular}
\end{footnotesize}
\end{table}

\subsection{Comparison with More Fine-Tuning Methods} 
\label{subapp:HMA-experiment}

\textbf{Experiments on Heterogeneous Model Averaging (HMA)}

We compare our proposed method with the Heterogeneous Model Averaging (HMA) \citep{lin2024mitigating}. HMA approach evenly divides the LLM into three parts—the input part, the middle part, and the output part—and averages these parts with different ratios. To facilitate a comprehensive comparison, following the setting in Section \ref{Instruction_ft_subsec}, we evaluate the fine-tuning and forgetting mitigation performance for different HMA strategies.  We select 15 different combinations of averaging ratios for different parts as follows: \{(0.05, 0.2, 0.35), (0.1, 0.2, 0.3), (0.2, 0.2, 0.2), (0.3, 0.2, 0.1), (0.35, 0.2, 0.05), (0.3, 0.5, 0.7), (0.4, 0.5, 0.6), (0.5, 0.5, 0.5), (0.6, 0.5, 0.4), (0.7, 0.5, 0.3), (0.65, 0.8, 0.95), (0.7, 0.8, 0.9), (0.8, 0.8, 0.8), (0.9, 0.8, 0.7), (0.95, 0.8, 0.65)\}. We plot the results to construct a Pareto front in Figure \ref{comparison_HMA}.

\begin{figure}[t]
    \centering
    \includegraphics[width=0.4\linewidth]{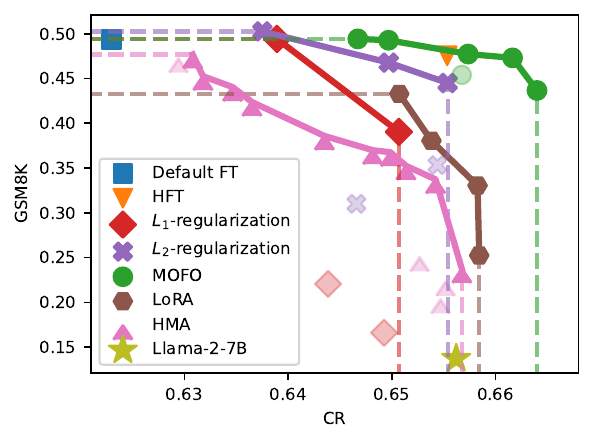}
    \caption{The performance on the math task (GSM8K) and the scores in Commonsense Reasoning of Llama-2-7B after fine-tuning on the MetaMathQA dataset. The results show that the MoFO algorithm achieves a better Pareto front. The pink triangle represents the model obtained through HMA.}
    \label{comparison_HMA}
\end{figure}

Results show that our proposed method, MoFO achieves a more effective Pareto front compared to the baselines.

\textbf{Experiments on CoFiTune and Soft-masking}

\citet{zhang2024balancing} introduces CoFiTune, a coarse-to-fine framework that balances specificity and versatility in LLMs by selectively updating specific modules and employing a soft-masking mechanism, which is introduced by \citet{ke2023continual,ke2023sub}. We have compared MoFO with CoFiTune (with and without soft-masking) and the vanilla soft-masking method alone, following the setting in Table \ref{tab:mathqa_exp} of Section \ref{Instruction_ft_subsec}. The results, presented in Table \ref{tab:cofitune} below, demonstrate that MoFO outperforms these methods in both fine-tuning performance and mitigating forgetting. The results demonstrate that
\begin{itemize}
    \item CoFiTune achives similar forgetting mitigation performance as MoFO, but underperforms MoFO on fine-tuning tasks.
    \item Vanilla Soft-masking exhibits slightly reduced performance in both fine-tuning tasks and mitigating forgetting than MoFO. These findings underscore the advantages of our proposed method.
\end{itemize}

\begin{table}[h]
\centering
\caption{The performance of the fine-tuning task (math), measured by GSM8K, and the general capability scores of Llama-2-7B after fine-tuning on the MetaMathQA dataset. The results show that MoFO achieves comparable performance in the fine-tuning task, while significantly mitigating forgetting of general capabilities.}%
\label{tab:cofitune}
\begin{footnotesize}
\begin{tabular}{cccccc}
\hline \hline
\multirow{2}{*}{Method} & \multirow{2}{*}{GSM8K} & \multicolumn{4}{c}{General Capability} \\ \cline{3-6}
 &  & CR & MMLU & HumanEval & Avg. \\
\hline
MoFO & \textbf{47.7} & 65.7 & 42.7 & 24.6 & 44.3 \\
Vanilla-SoftMask & 46.4 & 65.6 & 42.9 & 23.2 & 43.9 \\
CoFiTune w/o SoftMask & 37.7 & 65.4 & 42.1 & 25.8 & \textbf{44.4} \\
CoFiTune w/ SoftMask & 34.4 & 65.0 & 41.5 & 25.6 & 44.0 \\
\hline \hline
\end{tabular}
\end{footnotesize}
\end{table}

From the results, we can see that MoFO achieves higher scores on the fine-tuning tasks while effectively reducing knowledge forgetting, demonstrating its superiority over these methods.

\subsection{Comparison with More Parameter-Efficient Fine-Tuning Methods}
\label{subapp:peft}

Parameter-Efficient Fine-Tuning (PEFT) encompasses a collection of fine-tuning methods designed to reduce the computational cost required for model training. In this subsection, we mainly focus on three famous PEFT methods:
\begin{itemize}
    \item Adapter \citep{houlsby2019parameter}: This approach involves inserting trainable "adapter" modules into every layer of model. During fine-tuning, the parameters of these adapter modules are updated for downstream tasks, while the majority of the original model's parameters remain frozen. 

    \item BitFit \citep{zaken2021bitfit}: This method reduces the number of trainable parameters by fine-tuning the bias terms of the model on a given downstream task, keeping other weights frozen.

    \item LoRA \citep{hu2022lora}: is a widely-used, parameter-efficient fine-tuning method. LoRA trains low-rank matrix adaptations on the base model's weights. Recent work \citep{biderman2024lora} demonstrates that LoRA can mitigate forgetting.
\end{itemize}

We compared MoFO with Adapter, BitFit, and LoRA, following the setup in Table \ref{tab:mathqa_exp} of Section \ref{Instruction_ft_subsec}. For the Adapter method, we set the learning rate to 1e-4 and performed a grid search for the adapter size over the values \{16, 64, 128\}. We report the best-performing hyperparameter configuration for each method on the fine-tuning task.

As shown in Table \ref{tab:peft}, all three PEFT baselines (BitFit, Adapter, and LoRA) achieve higher average general capability scores than Default FT, indicating that they are indeed effective at mitigating catastrophic forgetting. However, they still lag behind MoFO in this regard.

Moreover, compared to MoFO, the three PEFT methods perform markedly worse on the fine-tuning task (GSM8K), with the gap being especially pronounced for Adapters and BitFit. We conjecture that this may stem from the fact that Adapters and BitFit were originally proposed and evaluated on relatively simple classification or QA benchmarks (e.g., GLUE) with masked language models. While these methods were effective in such settings, they may be less suited to today’s more challenging domain-specific LLM tasks, leading to the weaker performance observed here.

\begin{table}[h]
\centering
\caption{The performance of the fine-tuning task (math), measured by GSM8K, and the general capability scores of Llama-2-7B after fine-tuning on the MetaMathQA dataset. The results show that MoFO outperforms three PEFT methods.}%
\label{tab:peft}
\begin{footnotesize}
\begin{tabular}{cccccc}
\hline \hline
\multirow{2}{*}{Method} & \multirow{2}{*}{GSM8K} & \multicolumn{4}{c}{General Capability} \\ \cline{3-6}
 &  & CR & MMLU & HumanEval & Avg. \\
\hline
Llama-2-7B & 13.7 & 65.6 & 42.0 & 24.2 & 43.9 \\ 
{Default FT} & {49.4} & {62.3} & {36.6} & {16.1} & {38.3} \\
MoFO & \textbf{47.7} & \textbf{65.7} & \textbf{42.7} & 24.6 & \textbf{44.3} \\
BitFit & 15.1 & 64.8 & 36.1 & 24.4 & 41.8 \\
Adapter & 24.4 & 63.2 & 32.3 & 21.8 & 39.1 \\
{LoRA} & {43.3} & {65.1} & {37.7} & {\textbf{26.4}} & {43.1} \\
\hline \hline
\end{tabular}
\end{footnotesize}
\end{table}

\newpage

\section{More Explorations on MoFO}
\label{app_more_explora}

This section aims to provide a deeper understanding of MoFO through a series of experiments. 
In Appendix \ref{subapp:efficiency_mofo}, we conduct an efficiency analysis of MoFO. 
In Appendix \ref{subapp:bcd-filter-strategy}, we present additional comparative experiments on different filtering strategies.
In Appendix \ref{subapp:insight-update-strategy}, we investigate why the momentum-filtered update rule in MoFO achieves optimal fine-tuning performance compared to other update strategies. In Appendix \ref{subapp:l1l2-mofo}, we present a preliminary analysis on why MoFO might compare favorably to $L_1/L_2$ regularization. In Appendix \ref{subapp:diff-partition}, we investigate the performance of MoFO under an alternative parameter partitioning strategy. Finally, in Appendix \ref{subapp:mofo+lion}, we explore the application of MoFO to the Lion optimizer.

\subsection{Validating MoFO's Impact on Preserving Pre-training Knowledge through Proximity}
\label{sec_Proximity}

In this section, we empirically examine whether MoFO achieves its intended goal of converging to a minimum closer to the pre-trained model and mitigating forgetting mentioned in Section \ref{section_mofo}.

Our exploratory experiment shows that MoFO indeed converges to a minimum closer to the pre-training model. As shown in Figure \ref{pythia_landscape}(a), both MoFO and the Adam optimizer achieve minimal fine-tuning loss, indicating that switching from Adam to MoFO does not lead to performance degradation. Moreover, the distance from the pre-trained model to the minimum reached by MoFO is approximately 20\% of that reached by the default Adam optimizer.

\begin{figure}[t]
\centering
\subfigure[Loss landscape on fine-tuning dataset]{
\begin{minipage}[t]{0.45\linewidth}
\centering
\includegraphics[width=\linewidth]{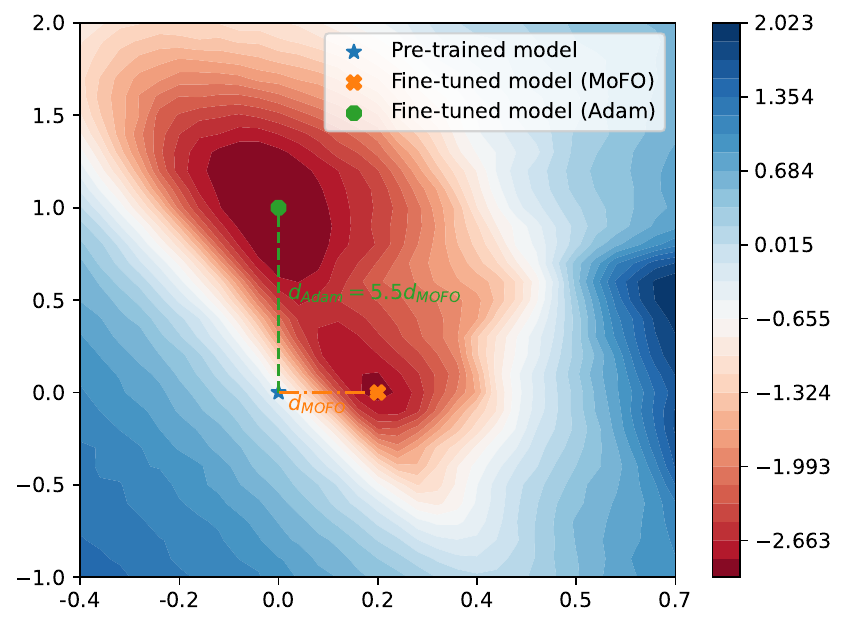}
\end{minipage}%
}%
\subfigure[Loss landscape on pre-training dataset]{
\begin{minipage}[t]{0.45\linewidth}
\centering
\includegraphics[width=\linewidth]{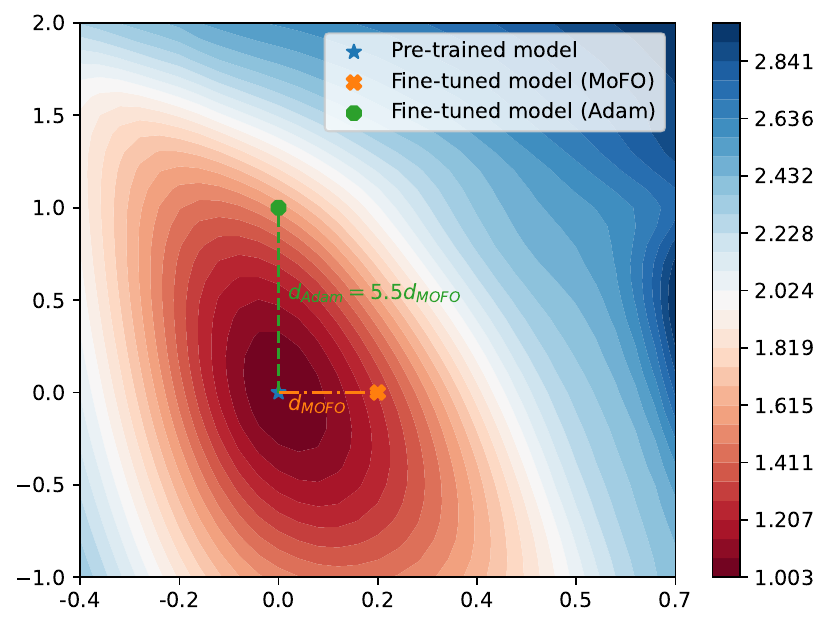}
\end{minipage}%
}%
\centering
\vspace{0mm}
\caption{The loss landscapes of Pythia-160m after fine-tuning on a subset of the FLAN dataset using Adam optimizer and MoFO. We plot the loss landscapes on (a) the fine-tuning dataset and (b) the pre-training dataset (Pile). We visualize a 2D weight-space plane spanned by the vector from the pre-trained model to the MoFO-tuned model (x-axis) and to the Adam-tuned model (y-axis). Axes are normalized so that one unit equals the length of the pre-trained$\to$ Adam vector. The color bar indicates the loss value—(a) fine-tuning loss and (b) pre-training loss. A logarithmic scale is applied to the loss values for better visualization. We find that MoFO, reaching a closer point to the pre-trained model, has minimal fine-tuning loss and lower pre-training loss, compared to Adam.} %
\label{pythia_landscape}
\end{figure}

\begin{table}[t]
\centering
\caption{%
Pythia-160m's performance on common sense tasks, after being fine-tuned with the Adam optimizer and MoFO. The results indicate that MoFO significantly mitigates catastrophic forgetting. Bold values denote the best results among these optimizers.}\label{pythia_forget}
\begin{tabular}{ccccc}
\hline\hline
                   & HellaSwag   & ARC-easy &ARC-challenge & Average  \\ \hline
Pythia-160m        & 30.1	& 39.6	& 23.8	& 31.2 \\ \hline
Adam                &28.3	&37.4	&22.1	& 29.3 \\
MoFO                & \textbf{29.9}	& \textbf{42.0}	& \textbf{22.9}	& \textbf{31.6}  \\ \hline\hline
\end{tabular}
\end{table}

Our experiment demonstrates that the reduced parameter movement achieved by MoFO effectively mitigates the forgetting of pre-training knowledge. As shown in Figure \ref{pythia_landscape}(b), the fine-tuned model using MoFO experiences a smaller increase in pre-training loss. Additionally, Table \ref{pythia_forget} shows that MoFO achieves higher accuracy on commonsense reasoning tasks, indicating less forgetting.

\subsection{Efficiency Analysis on MoFO}
\label{subapp:efficiency_mofo}

We claim that MoFO does not lead to significant reduced fine-tuning efficiency. We provide an efficiency analysis by comparing the total training time between MoFO and Default FT on three LLMs with different sizes. The parameter update fraction of MoFO is set as 10\%.
The experimental results show that although MoFO requires computing a filter to select parameters with the largest momentum magnitudes, the additional computational overhead is minor.
As shown in Table \ref{tab:training_time-diff-size-model}, the additional training time incurred by MoFO is approximately 4\%--5\%, which is relatively minor and manageable in practical applications.

\begin{table}[htbp]
    \centering
    \caption{Comparison of total training time for Default FT and MoFO on various LLaMA models and datasets. The additional time incurred by MoFO is around 4--5\%, which is relatively minor in practical applications. LLaMA3-8B is fine-tuned on the UltraFeedback dataset \citep{cui2024ultrafeedback}.}
    \label{tab:training_time-diff-size-model}
    \begin{footnotesize}
    \begin{tabular}{lccc}
        \toprule
        \textbf{Model} & \textbf{Default FT} & \textbf{MoFO} & \textbf{Additional Training Time} \\
        \midrule
        LLaMA3.2-1B  & 49m22s    & 51m24s    & 4.1\% \\
        LLaMA3.2-3B  & 1h30m18s  & 1h34m24s  & 4.5\% \\
        LLaMA3-8B
                    & 5h2m0.69s & 5h17m34.01s & 5.0\% \\
        \bottomrule
    \end{tabular}
    \end{footnotesize}
\end{table}

\subsection{Further Comparative Experiments on Filtering Strategies}
\label{subapp:bcd-filter-strategy}

To further substantiate the claim in Section \ref{experiment-sec:further-analy}, we conduct an additional comparison of filtering strategies—complementary to the results in Table~\ref{gradient_mask_exp}—using a different dataset and model. Specifically, we fine-tune Gemma-2B-IT on the IFEval-like dataset\footnote{The ``filtered'' subset from \url{https://huggingface.co/datasets/argilla/ifeval-like-data}.} using several BCD variants. The IFEval-like dataset contains instruction–response pairs in the style of the IFEval benchmark. For these experiments, we randomly sample 39.5k instances from the dataset for training, set the learning rate to $1{\times}10^{-5}$, and train the model for 2 epochs. We evaluate fine-tuning performance with the IFEval benchmark and assess general capabilities with CR (common-sense reasoning), HumanEval, and BBH (0-shot) \citep{suzgun2022challenging}.

As shown in Table \ref{tab:gemma_bcd}, all BCD variants mitigate forgetting: their average general-capability score exceeds that of Default fine-tuning by at least 2.6\%. Although MoFO is lower than MV BCD by only 0.1\% in average general capability, it achieves the strongest performance on the target fine-tuning task (IFEval), outperforming Random BCD, Grad BCD, and MV BCD by 5.8\%, 2.6\%, and 1.3\%, respectively.

Taken together, these results indicate that while all filtering strategies are broadly effective at mitigating forgetting, they exhibit distinct differences in task-specific fine-tuning performance. In our setting, MoFO offers a favorable balance—matching the strongest methods on forgetting mitigation while delivering the highest IFEval score among the tested strategies.

\begin{table}[h]
\centering
\caption{The performance on the instruction-following task (IFEval) and general capability scores of Gemma-2B-IT after fine-tuning on IFEval-like dataset using different updating strategies in MoFO. Here we choose the three benchmarks exhibiting the most significant forgetting. For all the update strategies, we set the parameter update fraction $\alpha$ as 10\%. Bold values denote the best results among the BCD methods.}
\label{tab:gemma_bcd}
\begin{footnotesize}
\begin{tabular}{ccccccc}
\hline \hline
\multirow{2}{*}{Method} & \multirow{2}{*}{IFEval} & \multicolumn{4}{c}{General Capability} \\  \cline{3-6} 
                         &  & CR & HumanEval & BBH &    Avg.   \\ \hline

Gemma-2B-IT & 33.6 & 57.6 & 31.5 & 32.7 & 40.6 \\ \hline
{Default FT} & 56.7  & {56.9} & {22.9} & {32.5} & {37.4} \\\hline  
{Random BCD} & {51.4}  & \textbf{57.5} & {28.0} & {33.9} & {39.8} \\\hline
{Grad BCD} & {54.6} &  {57.3} & {28.1} & {34.1} & {39.8} \\ \hline 
{MV BCD} & {55.9} &  \textbf{57.5} & {28.4} & \textbf{34.6} & \textbf{40.2} \\\hline 
{MoFO} & \textbf{57.2} &  {57.3} &\textbf{28.5} & {34.4} & {40.1}\\
\hline
\hline
\end{tabular}
\end{footnotesize}
\end{table}

\subsection{Insights of the Choice of Filtering Strategy}
\label{subapp:insight-update-strategy}

In this section, we attempt to address the question: \textit{What makes the momentum-filtered update rule optimal among the candidates?}

We hypothesize that the good performance of the momentum-filtered updating rule arises from its ability to promote more stable and consistent updates throughout training.
Specifically, we hypothesize that:

\begin{enumerate}
    \item \textbf{Utilizing momentum instead of gradient filtering leads to more stable updates.} Momentum accumulates historical gradients, so it promotes stability by smoothing out fluctuations in the gradient updates. Thereby, during the training process, the momentum-filtering mechanism chooses updating parameters in a more stable manner.

    \item \textbf{Excluding the introduction of in the filtering mechanism contributes to more stable updates.} The 2nd-order moment $v_t$ may normalize gradients based on their magnitudes, potentially averaging out the importance of individual parameters within the filtering mechanism. Thereby, during the training process, not incorporating $v$ may help choose updating parameters in a more stable manner.
 
\end{enumerate}

For the ablation study, we add the GV-filtered BCD methods as a baseline, which replaces MoFO's filter by $\flt(g_t / \sqrt{v_t})$. Following the setting of experiments in Appendix \ref{supp_figure1} and Figure \ref{pythia_landscape_lion}, we run all four methods with the updating fraction $\alpha = 3\%$ over approximately 200 steps. To assess how many parameters change significantly during training, we calculate the percentage of weight parameters whose absolute change exceeds a threshold of 2e-6.

\begin{table}[htbp]
    \centering
    \caption{Percentage of weight parameters with significant changes (absolute change > \(2 \times 10^{-6}\)) during training.}
    \label{tab:ablation_study}
    \begin{footnotesize}
    \begin{tabular}{l c}
    \toprule
    \textbf{Method} & \textbf{Percentage of Significant Updates} \\
    \midrule
    MoFO               & \textbf{29.8\%} \\
    Gradient BCD       & 35.7\% \\
    MV-filtered BCD    & 83.6\% \\
    GV-filtered BCD    & 87.1\% \\
    \bottomrule
    \end{tabular}
    \end{footnotesize}
    \label{tab:why-momentum-filter-better}
\end{table}

Table \ref{tab:why-momentum-filter-better} indicates that the parameter updating process of MoFO is more stable, which we think may contribute to its better fine-tuning performance.

\subsection{Preliminary Analysis on Why MoFO Might Compare Favorably to $L_1/L_2$ Regularization}
\label{subapp:l1l2-mofo}
As shown in Figure \ref{pareto} (Section \ref{Instruction_ft_subsec}), when fine-tuning Llama-2-7B on MetaMathQA, MoFO yields a more favorable Pareto front—balancing fine-tuning performance and forgetting mitigation—than $L_1$/$L_2$ regularization and LoRA across hyperparameter configurations.
This section presents a preliminary analysis offering one possible explanation for why MoFO may compare favorably to $L_1/L_2$ regularization. Importantly, we reuse the exact checkpoints from Figure \ref{pareto}; no additional training runs are introduced. 

\textbf{Setup.} We reuse the Llama-2-7B MetaMathQA sweeps underlying Figure \ref{pareto}—including $L_1$, $L_2$, and MoFO across the same hyperparameters. For each checkpoint, we compute the (unregularized) fine-tuning loss and the $\ell_2$ norm of the gradient on 512 examples. To factor out severe forgetting, we report results for the subset whose CR scores fall within an acceptable range (same CR metric as in the main text). Figure \ref{fine-tune_grad_norm}(a) and Figure \ref{fine-tune_grad_norm}(b) plot fine-tuning loss and gradient norm versus CR, respectively, for this subset of checkpoints drawn from Figure \ref{pareto}.

\textbf{Observations.} For comparable CR scores, MoFO checkpoints generally achieve lower fine-tuning losses than those trained with $L_1$ or $L_2$; moreover, they also exhibit smaller gradient norms. Together, these appear to suggest that MoFO converges better than $L_1$/$L_2$ regularization.

\textbf{One plausible explanation.} Penalty methods promote proximity to the pretrained parameters, but they also \textbf{modify the training objective and its gradient field}, thereby possibly hindering convergence to a local minimum of the original fine-tuning loss. MoFO instead constrains how updates are applied (via momentum filtering) while continuing to optimize the original fine-tuning loss, which may allow the optimizer to follow task-relevant directions more effectively.

\begin{figure}[H]
\centering
\vspace{-2mm}
\subfigure[Fine-tuning loss]{
\begin{minipage}[h]{0.45\linewidth}
\centering
\includegraphics[width=\linewidth]{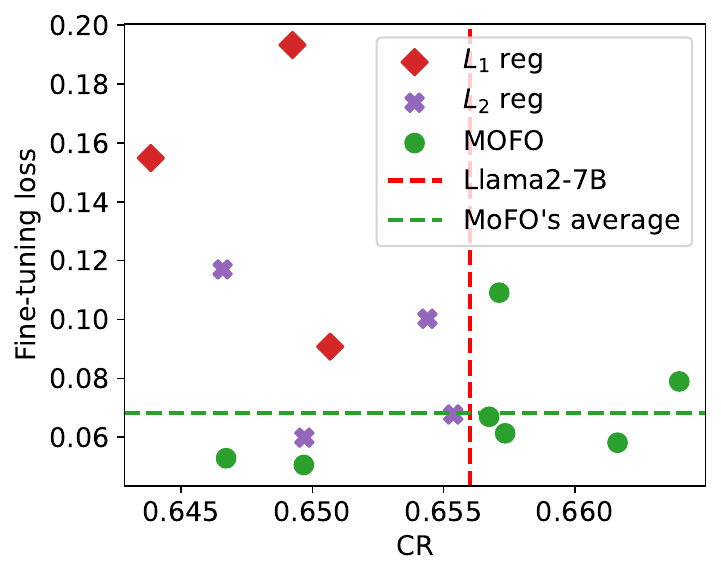}
\end{minipage}%
}%
\subfigure[Gradient norm]{
\begin{minipage}[h]{0.45\linewidth}
\centering
\includegraphics[width=\linewidth]{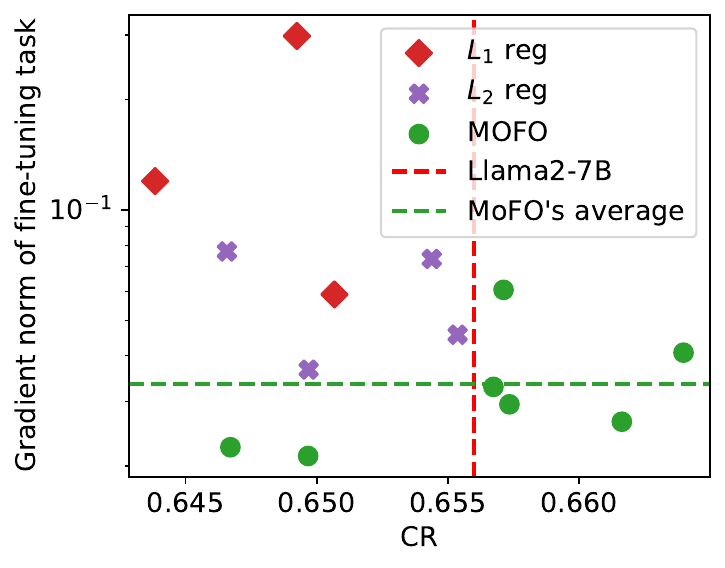}
\end{minipage}%
}%
\centering
\vspace{-1em}
\caption{(a) Fine-tuning loss and (b) gradient norm versus CR score for Llama2-7B fine-tuned on MetaMathQA with MoFO, $L_1$, and $L_2$ regularization. The results show that MoFO generally achieves a lower fine-tuning loss and gradient norm than  $L_1$ and  $L_2$ while effectively resisting forgetting.}
\label{fine-tune_grad_norm}
\vspace{-1.2em}
\end{figure}

\subsection{A Different Parameter Partition Scheme}
\label{subapp:diff-partition}

In this section, we propose a different strategy for partitioning model parameters. Different from the default partitioning scheme in PyTorch’s Transformer implementation (Appendix \ref{exp_par_dis}), our alternative approach partitions parameters at the granularity of individual attention heads. Recent studies \citep{zhang2024adam, zhang2024transformers} show that the Hessian matrix in Transformers is nearly block-diagonal, with several dense principal sub-blocks. In particular, \citet{zhang2024transformers} finds that within the same layer, distinct Q (or K) heads form different blocks in the Hessian. 

Motivated by this finding, we treat each head’s Q, K, and V weights as separate partitions and apply our momentum-based filtering mechanism to these finer-grained partitions individually. To evaluate our method, we conduct experiments on the LLaMA2-7B models, comparing the default partitioning approach against our alternative head-level partitioning scheme.

\begin{table}[h]
\centering
\caption{Performance on the fine-tuning task (GSM8K) and general capabilities of \textbf{Llama-2-7B} after fine-tuning. 
Bold values denote the best results among these methods. The updating fraction $\alpha$ is $15\%$.}
\label{tab:llama2_7b_exp-diff-partition}
\begin{footnotesize}
\begin{tabular}{cccccc}
\hline \hline
\multirow{2}{*}{Method} & \multirow{2}{*}{GSM8K} & \multicolumn{4}{c}{General Capability} \\ \cline{3-6}
& & CR & MMLU & HumanEval & Avg. \\ 
\hline
MoFO with default partition          & \textbf{47.7} & 65.7 & \textbf{42.7} & 24.6 & \textbf{44.3} \\
MoFO with individual head partition & 46.6          & 65.7 & 41.6          & 24.1 & 43.8 \\
\hline \hline
\end{tabular}
\end{footnotesize}
\end{table}

Table \ref{tab:llama2_7b_exp-diff-partition} shows that both partition schemes yield similar performance. %

From the above initial experiment results, we believe that there is room for further exploration in better partitioning strategies.

\subsection{Momentum Filtering Mechanism on the Lion Optimizer}
\label{subapp:mofo+lion}

\begin{algorithm}[H]
\caption{ Lion Optimizer}
\label{algo:lion}
\begin{algorithmic}[1]
\STATE{Input: Number of partitions $B$ with the $k$-th partition of size $d_k$, hyperparameters $\beta_1, \beta_2, \lambda$ of Lion optimizer, learning rate schedule $\{\eta_t\}$.}
\STATE{Initialize $m_0$ as zero tensors.}
\FOR{iteration $t$ from $1,2,\dots$ until converge}
\FOR{partition $k$ from $1$ to $B$}
\STATE{$g^{(k)}_t = \nabla_{\theta^{(k)}} \mathcal{L}_{finetune}(\theta_{t-1})$}
\STATE{$c^{(k)}_{t} = \beta_1 m^{(k)}_{t-1} + (1-\beta_1) g^{(k)}_t$}
\STATE{$m^{(k)}_{t} = \beta_2 m^{(k)}_{t-1} + (1-\beta_2) g^{(k)}_t$}
\ENDFOR
\STATE{$\theta_t = \texttt{Concat}(\theta_t^{(1)}, \dots, \theta_t^{(B)})$}
\ENDFOR
\end{algorithmic}
\end{algorithm}

\begin{algorithm}[t]
\caption{ MoFO + Lion}
\label{algo:mofo+lion}
\begin{algorithmic}[1]
\STATE{Input: Filtering threshold $\alpha$, number of partitions $B$ with the $k$-th partition of size $d_k$, hyperparameters $\beta_1, \beta_2, \lambda$ of Lion optimizer, learning rate schedule $\{\eta_t\}$.}
\STATE{Initialize $m_0$ as zero tensors.}
\FOR{iteration $t$ from $1,2,\dots$ until converge}
\FOR{partition $k$ from $1$ to $B$}
\STATE{$g^{(k)}_t = \nabla_{\theta^{(k)}} \mathcal{L}_{finetune}(\theta_{t-1})$}
\STATE{$c^{(k)}_{t} = \beta_1 m^{(k)}_{t-1} + (1-\beta_1) g^{(k)}_t$}
\FOR{entry index $i$ from $1$ to $d_k$}
\STATE{$[\texttt{FLT}^{(k)}_{\alpha}(m_{t-1})]_i = 1$ \textbf{if} $|(m^{(k)}_{t-1})_i|$ is within the top-$\alpha$ of $|m^{(k)}_{t-1}|$'s values \textbf{else} 0}
\ENDFOR
\STATE{\color{blue}{$\theta^{(k)}_t = \theta^{(k)}_{t-1}-\eta_t (\operatorname{sign}(c^{(k)}_t \odot \texttt{FLT}^{(k)}_{\alpha}(m_{t-1}))+\lambda \theta^{(k)}_{t-1} )$   \texttt{\textit{    
     \ \ \ \ \# Momentum Filtering}}}}
\STATE{$m^{(k)}_{t} = \beta_2 m^{(k)}_{t-1} + (1-\beta_2) g^{(k)}_t$}
\ENDFOR
\STATE{$\theta_t = \texttt{Concat}(\theta_t^{(1)}, \dots, \theta_t^{(B)})$}
\ENDFOR
\end{algorithmic}
\end{algorithm}

In this section, we extend our investigation by integrating our proposed MoFO into the Lion optimizer \citep{chen2024symbolic}.
We first present the original formulation of the Lion optimizer in Algorithm \ref{algo:lion}.

Building upon this foundation, we propose Algorithm \ref{algo:mofo+lion}, where momentum filtering is applied to refine the parameter update in every iteration (Lines 7-10).

Following the experimental setup in Appendix \ref{supp_figure1} and Figure \ref{pythia_landscape_lion}, we conduct comparative experiments to evaluate the effectiveness of `MoFO + Lion' in mitigating forgetting. As shown in Table \ref{tab:mofo+lion~pythia}, incorporating MoFO into Lion improves the average accuracy of preserving old knowledge by 2.6\% compared to using Lion alone. However, `MoFO + Lion' still lags behind `MoFO + Adam' by 2.5\%.

\begin{table}[h]
\centering
\caption{Performance comparison on HellaSwag, ARC-easy, and ARC-challenge, along with their average.}
\label{tab:mofo+lion~pythia}
\begin{footnotesize}
\begin{tabular}{ccccc}
\hline \hline
\multirow{1}{*}{\textbf{Method}} & \textbf{HellaSwag} & \textbf{ARC-easy} & \textbf{ARC-challenge} & \textbf{Average} \\ 
\hline
\textbf{Pythia-160m}  & 30.1 & 39.6 & 23.8 & 31.2 \\
\textbf{Adam}         & 28.3 & 37.4 & 22.1 & 29.3 \\
\textbf{MoFO (for Adam)} & 29.9 & 42.0 & 22.9 & 31.6 \\
\textbf{Lion}         & 26.5 & 29.0 & 24.1 & 26.5 \\
\textbf{Lion + MoFO}  & 27.5 & 36.7 & 23.2 & 29.1 \\
\hline \hline
\end{tabular}
\end{footnotesize}
\end{table}

\end{document}